\documentclass[twoside]{article}

%
\usepackage[accepted]{aistats2022}

\usepackage{amsmath,amsthm,amssymb,dsfont}
\usepackage{graphicx}
\usepackage{xfrac}
\usepackage[inline]{enumitem}
\usepackage{subfigure}
\usepackage[ruled,linesnumbered]{algorithm2e}
\usepackage{accents}
\usepackage{caption}
\usepackage{float}
\usepackage{pgf,tikz}
\usetikzlibrary{calc,shapes,positioning,arrows,arrows.meta}
\usepackage{natbib}
\usepackage{comment}
\usepackage{mdframed}

\newtheorem{thmdef}{Definition}

\newtheorem{thmprop}{Proposition}

\newtheorem{thmappdef}{Definition}

\newtheorem{thmappasmp}{Assumption}

\newtheorem{thmapplem}{Lemma}

\newtheorem{thmappprop}{Proposition}


\def\bu{\mathbf{u}}

\def\bw{\mathbf{w}}

\def\E{\mathbb{E}}
\def \R{\mathbb{R}}

\def\cP{\mathcal{P}}

\def\cN{\mathcal{N}}

\def\cD{\mathcal D}

\def\cH{\mathcal H}

\def\cF{\mathcal F}
\def\cZ{\mathcal Z}

\def\cZ{\mathcal Z}

\def\bx{\mathbf{x}}
\def\bX{\mathbf{X}}
\def\bmu{\boldsymbol{\mu}}

\def\mmd{\mathrm{MMD}}
\def\mmdphi{\mmd(P^\circ_{\boldsymbol{\phi}_0}, P^\circ_{\boldsymbol{\phi}_1})}

\newcommand\indep{\protect\mathpalette{\protect\independenT}{\perp}}
\def\independenT#1#2{\mathrel{\rlap{$#1#2$}\mkern2mu{#1#2}}}

\newcommand\damour[1]{\textcolor{blue}{[AD: #1]}}

\newcommand\rinv{\text{rinv}}

\usepackage{booktabs}
\usepackage[breaklinks=true]{hyperref}
%


\setlength{\pdfpageheight}{11in}
\setlength{\pdfpagewidth}{8.5in}


\begin{document}
\runningauthor{Maggie Makar, Ben Packer, Dan Moldovan, Davis Blalock, Yoni Halpern, Alexander D'Amour}
%

%

\twocolumn[

\aistatstitle{Causally Motivated Shortcut Removal Using Auxiliary Labels}
\aistatsauthor{Maggie Makar
\And Ben Packer 
\And  Dan Moldovan}
\aistatsaddress{ University of Michigan, Ann Arbor
\And  Google Research
\And Google Research}
\aistatsauthor{
Davis Blalock 
\And Yoni Halpern
\And Alexander D'Amour
}
\aistatsaddress{
MosaicML
\And Google Research
\And Google Research} ]

\begin{abstract}
    Shortcut learning, in which models make use of easy-to-represent but unstable associations, is a major failure mode for robust machine learning.
    We study a flexible, causally-motivated approach to training robust predictors by discouraging the use of specific shortcuts, focusing on a common setting where a robust predictor could achieve optimal \emph{iid} generalization in principle, but is overshadowed by a shortcut predictor in practice.
    Our approach uses auxiliary labels, typically available at training time, to enforce conditional independences implied by the causal graph.
    We show both theoretically and empirically that causally-motivated regularization schemes (a) lead to more robust estimators that generalize well under distribution shift, and (b) have better finite sample efficiency compared to usual regularization schemes, even when no shortcut is present.
    Our analysis highlights important theoretical properties of training techniques commonly used in the causal inference, fairness, and disentanglement literatures. 
\end{abstract}

\section{INTRODUCTION}

Despite their immense success, predictors constructed from deep neural networks (DNNs) have been shown to lack robustness under distribution shift \citep{beery2018recognition,ilyas2019adversarial,azulay2018deep,geirhos2018generalisation}, especially naturally occurring distribution shifts \citep{taori2020measuring}. 
One mechanism for this brittleness is \emph{shortcut learning} \citep{geirhos2020shortcut}.
Shortcut learning occurs when a predictor relies on input features that are easy to represent (i.e., shortcuts) and are predictive of the label in the training data, but whose association with the label changes under relevant distribution shifts.
For example, a DNN trained for image classification could exploit correlations between the foreground object and background of images in the training distribution, and use a representation of the background as a shortcut to predict the foreground object \citep{beery2018recognition,sagawa2019distributionally}. This holds even if the foreground object alone is sufficient to achieve optimal predictive performance on the training distribution \citep{nagarajan2020understanding,pmlr-v119-sagawa20a}. 
\begin{figure}
\centering
\resizebox{0.35\columnwidth}{!}{%
\begin{tikzpicture}[var/.style={draw,circle,inner sep=0pt,minimum size=0.8cm}]
    \node (Xstar) [var] {$\bX^*$};
    \node (input) [var, below=0.75cm of Xstar, fill=black!10] {$\bX$};
    \node (label) [var, fill=black!10, right=0.5cm of Xstar] {$Y$};
    \node (metadata) [var, fill=black!10, right=0.5cm of input] {$V$};
    
    \path[->]
        (label) edge (Xstar)
        (Xstar) edge (input)
        (metadata) edge (input);
    \path[<->,dashed]
        (label) edge (metadata);
\end{tikzpicture}
}
\caption{Causal DAG of the setting in this paper. The main label $Y$ and auxiliary label $V$ generate observed input $\bX$, but $Y$ only affects $\bX$ through $\bX^*$.\label{fig:dags}}
\vspace{-1.5em}
\end{figure}
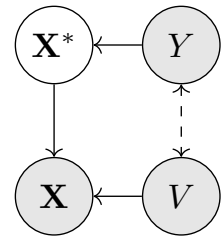


Here, we consider the problem of learning a performant predictor whose risk is invariant to 
interventions that change
the association between irrelevant factors and the main label. 
Ideally, such a predictor would rely exclusively on input features that are invariant to irrelevant factors. 
However, identifying such invariant input features in the standard supervised learning setup is difficult, for the same reason that shortcut learning is successful: in learning setups where there are many distinct ways to construct predictors that perform well on held-out data (i.e., when the learning problem is \emph{underspecified} \citep{damour2020underspecification}), the influence of correlated factors is difficult to disentangle without additional supervision \citep{locatello2019challenging}.

For this reason, we focus on a setting where we are also given an auxiliary label that gives information about the irrelevant factor at training time.
Such labels often appear in the form of metadata for the training data---for example, labels of the background of an image---but are often not available at test time.
We propose an approach that exploits this auxiliary label to construct a predictor whose risk is approximately invariant across a well-defined family of test-distributions.
Our method makes use of two tools from causal inference in combination: (1) weighting the training data to mimic an idealized population, and (2) enforcing an independence implied by the causal Directed Acyclic Graph (DAG) in that idealized population.
While each of these approaches has been applied separately, we show here through both theoretical arguments and empirical analysis that these methods are particularly effective when applied together. 

Our methodological contributions can be summarized as follows: 
\begin{enumerate*}[label={(\arabic*)}]
    \item We suggest an approach to discourage shortcut learning  using auxiliary labels, and specify a set of distribution shifts across which a robust model is risk-invariant.
    \item We give a theoretical justification to our approach, highlighting that in some scenarios it yields models that have a lower generalization error than typical regularization schemes. 
    We also show that our approach is robust to a set of distribution shifts. 
    \item We empirically validate our theoretical findings using a semi-simulated benchmark and a medical task, showing our approach has favorable in- and out-of-distribution generalization properties.  
\end{enumerate*}


\section{PRELIMINARIES}\label{sec:prelims}

\textbf{Setup.} 
We consider a supervised learning setup where the task is to construct a predictor $f(\bX)$ parameterized by weights $\bw$ that predicts a label $Y$ (e.g., foreground object) from an input $\bX$ (e.g., image).
In addition, we have an auxiliary label $V$ (e.g., background label) available only at training time that labels a factor of variation along which we hope the model will exhibit some invariance (e.g., background type).
Throughout, we will use capital letters to denote variables, and small letters to denote their value. 
Our training data consist of tuples $\mathcal D = \{(\bx_i, y_i, v_i)\}_{i=1}^{n}$ drawn from a source training distribution $P_s$.
We restrict our focus to the case where $Y$ and $V$ are binary and $f$ is a classifier.
Specifically, we will consider functions $f$ of the form $f = h(\phi(\bx))$, where $\phi$ is a representation mapping and $h$ is the final classifier. 

\textbf{Assumptions.} We assume that $P_s$ has a generative structure shown in Figure~\ref{fig:dags}, in which the inputs $\bX$ are generated by the labels $(Y, V)$.
We assume that the labels $Y$ and $V$ are are correlated, but not causally related; that is, an intervention on $V$ does not imply a change in the distribution of $Y$, and vice versa.
Such correlation often arises through the influence of an unobserved third variable such as the environment from which the data is collected. 
We represent this in Figure~\ref{fig:dags} with the dashed bidirectional arrow.

We assume that there is a sufficient statistic $\bX^*$ such that $Y$ only affects $\bX$ through $\bX^*$, and $\bX^*$ can be fully recovered from $\bX$ via the function $\bX^* := e(\bX)$.
However, we assume that the sufficient reduction $e(\bX)$ is unknown, so we denote $\bX^*$ as unobserved in Figure~\ref{fig:dags}.

We also make an overlap assumption on the source distribution, $P_s$. Specifically we assume that $P_s(V)P_s(Y) \ll P_s(V, Y)$.
\subsection{Risk Invariance and Shortcut Learning}
We define the generalization risk of a function $f$ on a distribution $P$ as $R_P = \E_{X, Y \sim P}[\ell(f(X), Y)]$, where $\ell$ is the logistic loss. 

We focus on obtaining an optimal \emph{risk invariant} predictor, whose risk is invariant across a family of target distributions $\cP$ that can be obtained from $P_s$ by interventions on the causal model in Figure~\ref{fig:dags}.
Specifically, we consider interventions on the confounding relationship between $Y$ and $V$ that keep the marginal distribution of $Y$ constant.
Each distribution in this family can be obtained by replacing the source conditional distribution $P_s(V \mid Y)$ with a target conditional distribution $P_t(V \mid Y)$:
\begin{align}
\cP = \{P_s(\bX \mid \bX^*, V) P_s(\bX^* \mid Y) P_s(Y) P_t(V \mid Y)\},\label{eq:intervention family}
\end{align}
This family allows the marginal dependence between $Y$ and $V$ to change arbitrarily.

Given the family $\cP$, we define the set of risk invariant predictors to be all predictors that have the same risk for all $P_t \in \cP$,
$
\cF_{\rinv} = \{f : R_{P_t}(f) = R_{P_t'}(f) \quad\forall P_t, P_t' \in \cP\}
$
and an optimal risk-invariant predictor $f_{\rinv}$ to have the property
$
f_{\rinv} \in \arg\min_{f \in \cF_{\rinv}} R_{P_t}(f) \quad\forall P_t \in \cP.
$

Any predictor that does not rely on the ``shortcut'' factor labeled by $V$  will necessarily be risk invariant; thus, it is a natural property to test for and enforce for shortcut removal.
Risk invariance is also independently appealing because guarantees about the performance of the predictor $f_\rinv$ derived under one distribution can be adapted to other distributions in $\mathcal P$.
\subsection{The Unconfounded Distribution $P^\circ$}
Within the family of distributions $\mathcal P$, we pay special attention to the \emph{unconfounded distribution} $P^\circ \in \cP$ where $P^\circ(V \mid Y) := P_s(V)$.
Under $P^\circ$, $Y \indep V$ and the dashed bidirectional arrow in Figure~\ref{fig:dags} can be dropped.
Both our methodological approach and theoretical analysis revolve around mapping the problem of learning a risk invariant predictor under $P_s$ to the problem of learning an optimal predictor under $P^\circ$.

$P^\circ$ has two useful properties that are revealed by the DAG in Figure~\ref{fig:dags}: (1) under the unconfounded distribution $P^\circ$, the optimal predictor (with some abuse of notation) would take the form $f(\bX^*)$, and (2) for any predictor of the form $f(\bX^*)$, the joint distribution $P(f(\bX^*), Y)$ (and thus the risk) is invariant across the family $\cP$.
Together, these imply that the optimal risk-invariant predictor $f_\rinv(\bX^*)$ is the optimal predictor under $P^\circ$. We state this formally in Proposition~\ref{prop:frob_ferm_equal}.
\begin{thmprop}\label{prop:frob_ferm_equal}
Under $P^\circ$, the Bayes optimal predictor is (i) only a function of $\bX^*$, and (ii) an optimal risk-invariant predictor $f_{\rinv}$ with respect to $\cP$.
\end{thmprop}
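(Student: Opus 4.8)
The plan is to route everything through two facts about the unconfounded distribution $P^\circ$: that it deletes the dashed $Y$--$V$ edge of Figure~\ref{fig:dags}, and that it lies inside $\cP$. Three ingredients then suffice: (I) the Bayes optimal predictor under $P^\circ$ is a function of $\bX^*$; (II) every function of $\bX^*$ has the same risk across all of $\cP$; and (III) a short argument combining (I), (II), and Bayes optimality to recover the $\arg\min$ characterization.

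For ingredient (I), I would argue by d-separation. Under $P^\circ$ we may drop the dashed $Y$--$V$ arrow, leaving the chain $Y \to \bX^* \to \bX$ with $V \to \bX$ attached; the unique $Y$--$\bX$ path is this chain, and conditioning on the mediator $\bX^*$ blocks it, so $Y \indep \bX \mid \bX^*$ under $P^\circ$. Since $\bX^* = e(\bX)$ is a deterministic function of the input, $P^\circ(Y \mid \bX = \bx) = P^\circ(Y \mid \bX = \bx, \bX^* = e(\bx)) = P^\circ(Y \mid \bX^* = e(\bx))$. Because $\ell$ is the logistic loss, the pointwise minimizer of the conditional risk $\E_{P^\circ}[\ell(t,Y) \mid \bX = \bx]$ is a strictly increasing function of $P^\circ(Y = 1 \mid \bX = \bx)$, hence depends on $\bx$ only through $\bX^* = e(\bx)$; this minimizer, viewed as a function of $\bx$, is the Bayes optimal predictor $f^\circ$, so $f^\circ$ is a function of $\bX^*$. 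This is exactly claim (i).

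For ingredient (II), marginalizing $\bX$ and $V$ out of the factorization in \eqref{eq:intervention family} and using $\int P_s(\bX \mid \bX^*, V)\, d\bX = 1$ and $\sum_V P_t(V \mid Y) = 1$, the joint law of $(\bX^*, Y)$ under any $P_t \in \cP$ equals $P_s(\bX^* \mid Y)P_s(Y)$ -- it does not depend on which $P_t(V \mid Y)$ was used. Hence for every $g$, $R_{P_t}(g \circ e) = \E_{(\bX^*,Y) \sim P_s(\bX^*\mid Y)P_s(Y)}[\ell(g(\bX^*),Y)]$ is constant over $\cP$, so all functions of $\bX^*$ -- in particular $f^\circ$ -- lie in $\cF_{\rinv}$. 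For ingredient (III), fix $P_t \in \cP$ and any $g \in \cF_{\rinv}$; since $P^\circ \in \cP$ and both $f^\circ$ and $g$ are risk invariant, $R_{P_t}(f^\circ) = R_{P^\circ}(f^\circ) \le R_{P^\circ}(g) = R_{P_t}(g)$, the middle inequality being Bayes optimality of $f^\circ$ under $P^\circ$ among \emph{all} predictors. As this holds for every $g \in \cF_{\rinv}$ and every $P_t \in \cP$, we get $f^\circ \in \arg\min_{f \in \cF_{\rinv}} R_{P_t}(f)$ for all $P_t$, which is claim (ii) with $f_{\rinv} = f^\circ$.

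The d-separation reading and the final risk comparison are routine. The step I expect to require the most care is the interplay between the \emph{latent} $\bX^*$ that appears in the generative factorization \eqref{eq:intervention family} and the \emph{deterministic} identity $\bX^* = e(\bX)$: one should verify that $P_s(\bX \mid \bX^*, V)$ is consistent with being supported on $\{\bx : e(\bx) = \bX^*\}$, so that both the latent-variable marginalization in (II) and the "conditioning on $\bX$ fixes $\bX^*$" step in (I) are legitimate, and that "the Bayes optimal predictor" is unambiguous -- genuinely a function of $P^\circ(Y \mid \bX)$ rather than of $\bX$ in some finer way -- which is precisely where strict convexity of the logistic loss enters.
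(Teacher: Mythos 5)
Your proposal is correct and follows essentially the same route as the paper's own proof: $d$-separation under $P^\circ$ plus the invertibility $\bX^* = e(\bX)$ for part (i), marginalizing $\bX$ and $V$ out of the factorization defining $\cP$ to show any function of $\bX^*$ is risk-invariant, and then Bayes optimality under $P^\circ \in \cP$ to conclude part (ii). Your treatment is somewhat more explicit (the logistic-loss pointwise minimizer and the chain $R_{P_t}(f^\circ) = R_{P^\circ}(f^\circ) \le R_{P^\circ}(g) = R_{P_t}(g)$), but the substance is identical.
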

All proofs are shown in the appendix. 

Proposition~\ref{prop:frob_ferm_equal} motivates our approach that aims to efficiently estimate the optimal predictor under $P^\circ$, even when the training data $\mathcal D$ are drawn from $P_s \not = P^\circ$.

\section{APPROACH}\label{sec:summary_approach}

Here, we describe our approach to learning an optimal risk-invariant predictor $f_\rinv(\bX)$ from training data $\cD \sim P_s$.
Our strategy follows the schematic $P_s \rightarrow P^\circ \rightarrow P_t$: to learn a risk invariant predictor under $P_s$, we map the learning problem to $P^\circ$, then learn a predictor under $P^\circ$ that will generalize well in finite samples to any $P_t$ in $\cP$.
The $P_s \rightarrow P^\circ$ step is achieved by importance weighting.
The $P^\circ \rightarrow P_t$ step is achieved by a tailored regularization scheme, which we present first.

\textbf{Regularization under $P^\circ$.}
Our main innovation is a regularizer that leverages knowledge of the causal DAG to efficiently learn a classifier under $P^\circ$.
Specifically, we use two facts that hold under $P^\circ$: (1) $V \indep \bX^*$, and (2) the optimal predictor is only a function of $\bX^*$ (Proposition~\ref{prop:frob_ferm_equal}). 
As we show in Section~\ref{sec:theory}, this regularizer can help to reduce the sample complexity of learning without inducing bias, and directly penalizes the gap between the risk under $P^\circ$ and $P_t$.

Based on these facts, we specify a regularizer for $f = h(\phi(\bX))$ that encourages $\phi(\bX) \indep V$.
We do this by penalizing a distributional discrepancy between conditional distributions of the representation $P^\circ(\phi(\bX) \mid V = 0)$ and $P^\circ(\phi(\bX) \mid V = 1)$ that would be identical under independence.
Although any number of estimable distributional discrepancy metrics could be used, here we choose to use the Maximum Mean Discrepancy ($\mmd$), defined as follows:
\begin{thmdef}\label{def:mmd}
Let $Z \sim P_{Z}$, and $Z' \sim P_{Z'}$, be two arbitrary variables. And let $\Omega$ be a class of functions $\omega: \cZ \rightarrow \R$,  
\begin{align*}
    \mmd(\Omega, P_{Z}, P_{Z'}) = \underset{\omega \in \Omega}{\sup} \big(\E_{P_{Z}} \omega(Z) - \E_{P_{Z'}} \omega(Z')\big).
\end{align*}
\end{thmdef}
\vspace{-1em}
When $\Omega$ is set to be a general reproducing kernel Hilbert space (RKHS), the $\mmd$ defines a metric on probability distributions, and is equal to zero if and only if $P_{Z} =P_{Z'}$. 
Throughout, we will assume that our predictor $f$ and our loss $\ell$ are contained in $\Omega$, and in practice choose $\Omega$ to be the RKHS induced by the radial basis function (RBF) kernel. We will use the shorthand $\mmd(P_{Z}, P_{Z'})$ to denote $\mmd(\Omega, P_{Z}, P_{Z'})$. 
See \citet{gretton2012kernel} for a review of $\mmd$ and its empirical estimators.

\textbf{Weighting to Recover $P^\circ$.}
When the training data is drawn from some $P_s \not= P^\circ$, we weight the data to obtain empirical risk and $\mmd$ expressions that are unbiased estimates of the expressions we would obtain if $\cD \sim P^\circ$, and proceed as before.
We define weights

\begin{align}
u(y,v) =\frac{P_s(Y = y)P_s(V = v)}{P_s(Y = y, V = v)}, \label{eq:weights}
\end{align}
such that for each example, $u_i := u(y_i, v_i)$.
We use $\tilde{u}_i$ to denote $u_i$ after normalization such that $\sum_i \tilde{u}_i = 1$. 
For any distribution $P_s$, these are importance weights that map expectations under $P_s$ to expectations under $P^\circ$. In the appendix, we show that the reweighted risk is an unbiased estimator of the risk under $P^\circ$, i.e., that 
	$
	    \E_{Ps}\left[\hat{R}^\bu_{Ps}(f)\right] = R_{P^\circ}(f), 
	$
	where $\hat{R}^\bu_{Ps}(f) = \sum_i \tilde{u}_i \ell(f(\bx_i), y_i)$, and $R_{P^\circ}(f) = \E_{\bX, Y \sim P^\circ}\left[ \ell(f(\bX), Y) \right]$. 
	
\textbf{Method.}
Our final objective combines these weighting and regularization components.
Both components of our approach are necessary together to achieve invariance and efficiency: $\mmd$ regularization without reweighting would lead to biased estimation/inaccurate models (proposition \ref{prop:looser_mmd} in the appendix). Reweighting alone would remove the dependence on the shortcut in the asymptotic regime (proposition \ref{prop:frob_ferm_equal}), but as is typical for reweighted estimators, it leads to high variance, which is improved by using the $\mmd$ regularizer.

Let $\boldsymbol{\phi}_v$ denote $\{ \phi(\bx_i) \}_{i: v_i = v}$,
and let $u_i$ be as in equation~\ref{eq:weights},
and $P_{\boldsymbol \phi_v}^\bu$ be the weighted distribution of $\phi_v$.
For $\cD \sim P_s$, and some $\alpha > 0$, the main objective to minimize is:
\vspace{-0.5em}
\begin{align}\label{eqn:empirical_mmd_loss}
    h^*, \phi^*  & = \underset{h, \phi}{\text{argmin}} \sum_i u_i \ell(h(\phi(\bx_i)), y_i) \\
    & + \nonumber \alpha \cdot \widehat\mmd^2 (P_{\boldsymbol{\phi}_0}^{\bu}, P_{\boldsymbol{\phi}_1}^\bu). 
\end{align}
Here, $\widehat \mmd^2$, is a weighted version of the V-statistic estimator presented in~\cite{gretton2012kernel}. Specifically, we compute: 
\vspace{-0.5em}
\begin{align*}
    & \widehat \mmd^2  = 
    \sum_{i, j:v_i, v_j = 0} \overline{u}_i \overline{u}_j k_\gamma(\phi_i, \phi_j) 
    \\
    & +\sum_{i, j:v_i, v_j = 1} \overline{u}_i \overline{u}_j k_\gamma(\phi_i, \phi_j) 
    - 2 \sum_{\substack{i, j:v_i=0,\\v_j = 1}} \overline{u}_i \overline{u}_j k_\gamma(\phi_i, \phi_j), 
\end{align*}
where $k_\gamma(x, x')$ is the radial basis function, with bandwidth $\gamma$, and the weights $\overline{u}_i$ are a normalized version of $u_i$ such that $\sum_{i, j:v_i, v_j = 0} \overline{u}_i \overline{u}_j = \sum_{i, j:v_i, v_j = 1} \overline{u}_i \overline{u}_j = \sum_{i, j:v_i=0, v_j = 1} \overline{u}_i \overline{u}_j = 1$.

\paragraph{Cross-validation.}
The objective function in~\eqref{eqn:empirical_mmd_loss} depends on two hyperparameters: the cost of the $\mmd$ penalty $\alpha$, and the penalty's kernel bandwidth $\gamma$.
Unlike many regularizers, the $\mmd$ penalty depends on the distribution of the data, and is vulnerable to overfitting, such that the estimated $\mmd$ on the training data underestimates the population $\mmd$.
For this reason, we follow a two-step cross-validation procedure.
In the first step, we calculate the weighted MMD on each of the $K$ validation folds.
We test if the achieved $\mmd$s are statistically significantly different from zero using a T-test and exclude all models with a p-value $< 0.05$.
In the second step, we pick the best performing model out of this subset of candidate functions. 

\section{THEORY}\label{sec:theory}

We analyze our approach in some important special cases to show how it encourages efficient learning of a risk-invariant predictor on $\cP$.
We focus on the generalization gap of a predictor learned under a source distribution $P_s$ when evaluated on any target distribution $P_t \in \cP$,
We define this gap as the difference between the target risk $R_{P_t}$ and the weighted empirical risk $\hat R^{\bu}_{P_s}$ for a learned function $f$.
In keeping with our schematic $P_s \rightarrow P^\circ \rightarrow P_t$ (described in Section~\ref{sec:summary_approach}), we decompose the gap into a term about generalizing from $P_s$ to $P^\circ$, and a term about generalizing from $P^\circ$ to $P_t$:
\begin{align}\label{eq:overall_goal}
    R_{Pt}(f) - \hat{R}^\bu_{Ps}(f) & = \underbrace{R_{Pt}(f) - R_{P^\circ}(f)}_{\text{Structural risk gap }(P^\circ \rightarrow P_t)} 
    \\ \nonumber & + \underbrace{R_{P^\circ(f)} - \hat{R}^\bu_{Ps}(f)}_{\text{Learning gap }(P_s \rightarrow P^\circ)} 
\end{align}
We show that constraints on the $\mmd$ between the learned representation distributions $P^\circ(\phi(\bX) \mid V=v)$ for $v \in \{0,1\}$, which we denote $\mmdphi$, can translate to constrains on both of these gaps.
We treat the structural risk gap first, but spend the bulk of our effort on the finite-sample gap in the linear case.

\subsection{Bounding the structural risk gap}\label{sec:struc_gap}
We begin by bounding the structural risk gap, $R_{Pt}(f) - R_{P^\circ}(f)$,  which characterizes how the risk for a given function $f$ learned on data from $P^\circ$ is related to the risk on any target distribution $P_t \in \cP$.
Here, we show that a bound on $\mmdphi$ translates directly to a bound on the structural risk gap in the representable case, when $Y$ can be written as a function of the representation $\phi(\bX)$.

\begin{thmprop}\label{prop:risk_gap}

Suppose that $f = h(\phi(\bX))$ is a predictor that satisfies $\mmdphi \leq \tau$.
Suppose that $y$ is $\phi-$representable, i.e., that there exists $g(\phi(\bx)) = y$, and that $g(\phi)\ell(\phi) \in \Omega$. 
Then
$
    R_{Pt}(f) < R_{P^\circ}(f) + 2 \tau.
$
\end{thmprop}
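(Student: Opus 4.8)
The plan is to express the risk on any target distribution $P_t \in \cP$ as a weighted integral against the risk on $P^\circ$, and to exploit the fact that both distributions share the same $P(\bX^* \mid Y)$ and $P(Y)$ factors, differing only in $P(V \mid Y)$. First I would write the risk difference $R_{P_t}(f) - R_{P^\circ}(f)$ as the difference of two expectations of $\ell(f(\bX), Y)$, and push both through the common generative structure $P(\bX \mid \bX^*, V) P(\bX^* \mid Y) P(Y)$. Since $f = h(\phi(\bX))$ and $y$ is $\phi$-representable with $g(\phi(\bx)) = y$, the integrand $\ell(f(\bX), Y)$ can be rewritten as a function of $\phi(\bX)$ alone — call it $\psi(\phi(\bX)) := g(\phi(\bX)) \ell(h(\phi(\bX)))$, which by hypothesis lies in $\Omega$. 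This is the key reduction: the risk gap becomes $\E_{P_t}[\psi(\phi(\bX))] - \E_{P^\circ}[\psi(\phi(\bX))]$.

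Next I would decompose each of these expectations by conditioning on $V$. Write $\E_{P}[\psi(\phi(\bX))] = \sum_{v \in \{0,1\}} P(V = v)\, \E_{P}[\psi(\phi(\bX)) \mid V = v]$. Crucially, under both $P^\circ$ and any $P_t$, the conditional distribution of $\bX$ (and hence of $\phi(\bX)$) given $V = v$ is the \emph{same} — because conditioning on $V$ fixes $P(\bX \mid \bX^*, V)$ and $P(\bX^* \mid Y) P(Y)$ is shared, and the intervention only alters $P(V \mid Y)$, not these pieces. So only the marginal weights $P(V = v)$ differ between $P_t$ and $P^\circ$. Therefore
\begin{align*}
R_{P_t}(f) - R_{P^\circ}(f) = \sum_{v} \big(P_t(V=v) - P^\circ(V=v)\big)\, \E[\psi(\phi(\bX)) \mid V = v],
\end{align*}
and since $\sum_v P_t(V=v) = \sum_v P^\circ(V=v) = 1$, the coefficients sum to zero. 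Letting $\delta = P_t(V=1) - P^\circ(V=1) = -(P_t(V=0) - P^\circ(V=0))$, this equals $\delta \cdot \big(\E[\psi(\phi(\bX)) \mid V=1] - \E[\psi(\phi(\bX)) \mid V=0]\big)$, with $|\delta| \le 1$.

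Finally I would bound the bracketed term by the MMD. Since $\psi \in \Omega$, Definition~\ref{def:mmd} gives $\big|\E[\psi(\phi(\bX)) \mid V=1] - \E[\psi(\phi(\bX)) \mid V=0]\big| \le \mmd(P^\circ(\phi(\bX) \mid V=0), P^\circ(\phi(\bX) \mid V=1)) = \mmdphi \le \tau$. Combined with $|\delta| \le 1$ this yields $|R_{P_t}(f) - R_{P^\circ}(f)| \le \tau$, which is even slightly stronger than the claimed bound of $2\tau$ — the factor of $2$ presumably absorbs looseness in handling the two conditional terms separately (bounding each of $\E[\psi \mid V=v] - \E_{P^\circ}[\psi]$ rather than their difference directly), or handling the loss and the label factor $g$ as separate elements of $\Omega$. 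The main obstacle I anticipate is the careful justification that the conditional law $P(\phi(\bX) \mid V = v)$ genuinely does not change across $\cP$; this requires reading off the right conditional independences from the DAG in Figure~\ref{fig:dags} — specifically that given $V$, the variable $\bX$ depends on the rest of the graph only through $\bX^*$, whose law given the shared factors is fixed — and making sure the intervention family in \eqref{eq:intervention family} is being parsed so that $P_s(\bX \mid \bX^*, V)$ and $P_s(\bX^* \mid Y)P_s(Y)$ are held fixed.
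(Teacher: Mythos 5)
There is a genuine gap, and it sits exactly where you flagged your ``main obstacle'': the claim that the conditional law of $\bX$ (hence of $\phi(\bX)$) given $V=v$ is the same under $P^\circ$ and any $P_t\in\cP$ is false. The interventions in \eqref{eq:intervention family} hold $P_s(\bX\mid \bX^*,V)$ and $P_s(\bX^*\mid Y)P_s(Y)$ fixed but change $P(V\mid Y)$, and therefore change $P_t(Y\mid V=v)$; since $\bX$ depends on $Y$ through $\bX^*$, the mixture $P_t(\bX\mid V=v)=\sum_y P_t(Y=y\mid V=v)\,P_s(\bX\mid Y=y,V=v)$ shifts with $P_t$. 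Indeed, under your own reduction $Y=g(\phi(\bX))$, invariance of $P(\phi(\bX)\mid V=v)$ would force $P(Y\mid V=v)$ to be invariant, which is exactly what the family $\cP$ is designed to vary. Only $P(\bX\mid Y,V)$ is invariant across $\cP$, so the decomposition must condition on both labels; your single conditioning on $V$, and the resulting bound $|R_{P_t}-R_{P^\circ}|\le\tau$ (strictly stronger than the stated $2\tau$), rest on the false invariance.

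The paper's proof takes the route your argument needs: it writes $R_{P_t}-R_{P^\circ}=\sum_y \beta_y\,P(Y=y)\bigl[R^\circ_{0y}-R^\circ_{1y}\bigr]$ with $R^\circ_{vy}:=\E_{P^\circ}[\ell(f(\bX),y)\mid V=v,Y=y]$ and $\beta_y:=P_t(V=0\mid Y=y)-P^\circ(V=0)\in(-1,1)$, which is valid because the $(Y,V)$-conditional risks are shared across $\cP$. The remaining difficulty—and the real role of the $\phi$-representability assumption—is that the $\mmd$ constraint is on the $V$-conditionals of $\phi(\bX)$ \emph{without} conditioning on $Y$, so it only controls the $Y$-averaged gap $\sum_y P(Y=y)[R^\circ_{0y}-R^\circ_{1y}]$, which could hide cancellation between the $y=0$ and $y=1$ strata. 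Lemma~\ref{lem:tau_bounds_components} rules this out by testing the MMD against the sign-flipped function $(2g(\phi)-1)\ell(\phi)\in\Omega$ (this is where $g(\phi)\ell(\phi)\in\Omega$ is used), yielding $P(Y=y)[R^\circ_{0y}-R^\circ_{1y}]\le\tau$ for each $y$ separately; summing the two strata with $|\beta_y|<1$ gives the factor $2$. So your proposal uses representability only to collapse the loss into a function of $\phi$, whereas the proof needs it to decouple the two $Y$-strata; without that step (or your invariance claim), the bound does not go through.
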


Proposition~\ref{prop:risk_gap} states that by encouraging small values of $\tau$, the $\mmd$ penalty regularizes the solution toward a predictor that has similar risks on $P_t$ and $P^\circ$, reducing the gap in equation \eqref{eq:overall_goal}.
\subsection{Bounding the learning gap~\label{sec:finite_gap}}
We now analyze how the $\mmd$ regularizer constrains the learning gap, $R_{P^\circ}(f) - \hat{R}^\bu_{P_s}(f)$.
This gap characterizes how the weighted empirical risk, which is minimized during training on data drawn from $P_s$, translates to risk on the ideal distribution $P^\circ$.
Here, we consider the special case of linear models, where $\phi$ is a linear mapping, i.e., $\phi(\bx) = \bw^\top \bx$, and $h$ is the sigmoid, i.e., $h(x)= \sigma(x) = \sfrac{1}{\left(1 + \exp(-x)\right)}$. 
Our analysis establishes some key insights about how the $\mmd$ regularizer constrains the learning problem, and when we expect it to provide significant efficiency gains.
Extensions of our theoretical analysis to more complex neural networks are possible (e.g., through approaches studied in \cite{golowich2018size}). 

\textbf{Controlling complexity.}
There are two issues to consider in bounding this gap.
The first is the fundamental generalization gap that we would face if we were learning $f$ directly on ``ideal'' samples from $P^\circ$, and the second is the additional variability incurred by importance weighting to translate the learning problem to $P^\circ$ to $P_s$, from which we actually observe samples.
Regularization affects both the ``ideal'' and weighted learning problems in the same way, by restricting the complexity of the function class $\cF$.
Here, we focus on the Rademacher complexity of the function class obtained by constraining $\mmdphi$. 

\begin{thmdef}
    Let $\boldsymbol{\epsilon} = \{ \epsilon_i\}_{i=1}^n$ denote a vector of independent random variables drawn from the Rademacher distribution, i.e., uniform on $\{-1,1\}$. For a function family $\cF$, and $\cD \sim P$, the Rademacher complexity for a sample of size $n$ is defined as: 
    $
        \mathfrak{R}(\cF) = \E_\cD\E_{\boldsymbol{\epsilon}} \bigg[ \underset{f \in \cF}{\sup} \frac{1}{n}\sum_{i=1}^n\epsilon_i f(\bx_i) \bigg].
   $
\end{thmdef}

In the ideal case, where $P_s = P^\circ$, the Rademacher complexity translates directly to a bound on the learning gap. Specifically, the learning gap increases as  $\mathfrak{R}(\cF)$ increases (see \citep{mohri2018foundations}). 
For the case where $P_s \not= P^\circ$, and weighting is necessary, we can obtain a bound that is similarly monotonic in $\mathfrak{R}(\cF)$ using the technique of \citet{cortes2010learning}. Due to space limitations, we focus on analyzing the Rademacher complexity of the $\mmd$ regularized space in the main text and present the full generalization error bounds for both cases (when $P_s = P^\circ$ and when $P_s \not= P^\circ$) in the appendix. 

\textbf{Comparison with L2 Regularization}
To analyze how constraints on $\mmdphi$ affect Rademacher complexity, we study how the addition of an $\mmd$ constraint shrinks a standard $L2$-regularized function class. Define the two function classes:
\begin{align}
& \cF_{L_2}  := \{f: \bx \mapsto \sigma(\bw^\top \bx), \|\bw\|_2 \leq A\},\label{eq:L2 class} \\
& \cF_{L_2, \mmd} := \{f: \bx \mapsto \sigma(\bw^\top \bx), \|\bw\|_2 \leq A, \label{eq:mmd_class}
\\ \nonumber & \qquad \qquad  \mmd(P^\circ_{\boldsymbol{\phi}_0}, P^\circ_{\boldsymbol{\phi}_1}) \leq \tau\} .
\end{align}
Before analyzing the precise reduction in complexity in moving from $\cF_{L_2}$ to $\cF_{L_2, \mmd}$, we note that this reduction is a ``free lunch'':
because we know that the $\mmd$ constraint is compatible with the true independence structure of $P^\circ$, this reduction does not introduce bias. 
We formalize this in Proposition~\ref{prop:subsets}.
\begin{thmprop}\label{prop:subsets}
Under $P^\circ$, and for $\cF_{L_2}$ such that $f_{\text{rinv}} \in \cF_{L_2}$, there exists $\cF_{L_2, \mmd} \subseteq \cF_{L_2}$ such that $f_{\text{rinv}} \in \cF_{L_2, \mmd}$. And the smallest $\cF_{L_2, \mmd}$ such that $f_\rinv \in \cF_{L_2, \mmd}$ has $\tau=0$.
\end{thmprop}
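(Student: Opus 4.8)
The plan is to reduce both claims to a single fact: the optimal risk-invariant predictor $f_\rinv$ attains $\mmd=0$ between its two conditional representation distributions under $P^\circ$. Given that, existence is immediate for every $\tau \ge 0$, and minimality follows because the constraint $\mmd \le \tau$ defines a family of classes that is nested, nondecreasing in $\tau$, and admissible down to $\tau=0$.

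First I would invoke Proposition~\ref{prop:frob_ferm_equal}: $f_\rinv$ coincides with the Bayes-optimal predictor under $P^\circ$, which is a function of $\bX^*$ alone. Using the hypothesis $f_\rinv \in \cF_{L_2}$, write $f_\rinv(\bx) = \sigma(\bw_\rinv^\top \bx)$ with $\|\bw_\rinv\|_2 \le A$; since $\sigma$ is injective, the induced linear representation $\phi_\rinv(\bx) := \bw_\rinv^\top \bx = \sigma^{-1}(f_\rinv(\bx))$ is likewise a function of $\bX^*$ on the support of $P^\circ$. Now, as noted in Section~\ref{sec:summary_approach} (and read off Figure~\ref{fig:dags} once the confounding edge is deleted), $V \indep \bX^*$ under $P^\circ$; hence $\phi_\rinv(\bX) \indep V$ under $P^\circ$, so the conditionals $P^\circ(\phi_\rinv(\bX) \mid V = 0)$ and $P^\circ(\phi_\rinv(\bX) \mid V = 1)$ agree. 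Because $\Omega$ is an RKHS, on which $\mmd$ is a metric vanishing exactly when its two arguments coincide (Definition~\ref{def:mmd} and the remark after it), this yields $\mmd(P^\circ_{\boldsymbol{\phi}_0}, P^\circ_{\boldsymbol{\phi}_1}) = 0$ for $f = f_\rinv$.

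Then I would assemble the two conclusions. For existence, pick any $\tau \ge 0$ and let $\cF_{L_2,\mmd}$ be given by \eqref{eq:mmd_class} with that $\tau$; by construction $\cF_{L_2,\mmd} \subseteq \cF_{L_2}$, and the previous paragraph gives $f_\rinv \in \cF_{L_2,\mmd}$. For minimality, observe that the classes obtained from \eqref{eq:mmd_class} for varying $\tau$ are all subsets of $\cF_{L_2}$ and are totally ordered by inclusion, nondecreasing in $\tau$, since enlarging $\tau$ only relaxes the constraint; hence a smallest one containing $f_\rinv$ corresponds to the smallest $\tau$ whose class contains $f_\rinv$. Since $\mmd \ge 0$ always, no negative $\tau$ is admissible, and $\tau = 0$ already works, so the smallest such class is the one with $\tau = 0$ (which, incidentally, equals $\{f \in \cF_{L_2} : \phi(\bX) \indep V \text{ under } P^\circ\}$).

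The only step beyond bookkeeping is the passage from ``$f_\rinv$ is a function of $\bX^*$'' to ``$\phi_\rinv(\bX) \indep V$ under $P^\circ$'': one must make sure the linear score $\bw_\rinv^\top \bx$ genuinely factors through $\bX^* = e(\bX)$ — handled by injectivity of $\sigma$ together with Proposition~\ref{prop:frob_ferm_equal} — and that $V \indep \bX^*$ is a legitimate consequence of the DAG under $P^\circ$. Everything else is the elementary monotonicity of the $\mmd$ constraint in $\tau$ together with the nonnegativity of $\mmd$.
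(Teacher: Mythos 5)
Your proof is correct and follows essentially the same route as the paper: exhibit the $\tau=0$ class as the witness, use injectivity of $\sigma$ to pass from $f_\rinv$ to the linear score $\bw_\rinv^\top\bx$, conclude $\mmd(P^\circ_{\boldsymbol{\phi}_0},P^\circ_{\boldsymbol{\phi}_1})=0$, and get minimality from nonnegativity of the $\mmd$. The only (harmless) difference is that you justify $f_\rinv(\bX)\indep V$ under $P^\circ$ explicitly via Proposition~\ref{prop:frob_ferm_equal} together with $V\indep\bX^*$, whereas the paper asserts this independence directly from the definition of $f_\rinv$; your version is, if anything, slightly more careful.
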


To study the reduction in complexity, we construct comparable Rademacher complexity upper bounds for $\cF_{L_2}$ and $\cF_{L_2, \mmd}$.
We focus on a specific implication of the $\mmd$ constraint in the case where $\phi$ is linear.
Here, the constraint restricts the projection of the weights $\bw$ onto the vector that distinguishes the conditional means: $\Delta := \bmu_0 - \bmu_1$, where $\bmu_v := \E_{\bx \sim P^\circ}[\bX \mid V=v]$.
$\Delta$ is the average change in $\bX$ caused by intervening to change the $V$ under $P^\circ$.
Define the projection matrix  $\Pi := \Delta (\Delta^\top \Delta)^{-1} \Delta^\top = \|\Delta\|^{-2}_2 \Delta \Delta^\top$, which projects any vector onto $\Delta$, and $\bw_\perp := \Pi \bw$ as the projection of $\bw$ onto the mean distinguishing vector $\Delta$, which can be thought of as the ``irrelevant'' dimension of $\bX$. We can directly relate $\|\bw_\perp\|$ to the $\mmd$ constraint, in the following proposition. 

\begin{thmprop} \label{prop:wdelta_bound}
Let $f(\bx) = \sigma(\phi(\bx)) = \sigma(\bw^\top \bx)$ be a function contained in $\cF_{L_2, \mmd}$. Then,
$
\|\bw_\perp\| \leq \frac{\tau}{\|\Delta\|}.
$
\end{thmprop}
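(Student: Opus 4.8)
The plan is to connect the $\mmd$ between the two conditional representation distributions to the Euclidean distance between their means, and then observe that for a linear representation $\phi(\bx) = \bw^\top\bx$ this mean distance is exactly $|\bw^\top\Delta|$, which equals $\|\Delta\|\cdot\|\bw_\perp\|$ by definition of the projection $\Pi$. First I would note that since $\phi$ is linear, $\phi(\bX)\mid V=v$ has mean $\bw^\top\bmu_v$, so $\E_{P^\circ}[\phi(\bX)\mid V=0] - \E_{P^\circ}[\phi(\bX)\mid V=1] = \bw^\top(\bmu_0 - \bmu_1) = \bw^\top\Delta$. Next I would use the standing assumption (stated just after Definition~\ref{def:mmd}) that the predictor $f$ and loss $\ell$ lie in $\Omega$; more to the point, the identity function $\omega(z) = z$ on the one-dimensional representation space lies in the unit ball of the RKHS $\Omega$ (or, if one prefers to be careful about normalization, the RKHS of the RBF kernel contains functions whose restriction to the relevant bounded set approximates $\omega(z)=z$ arbitrarily well; alternatively one invokes that a kernel inducing a characteristic MMD dominates the difference of means). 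Taking $\omega$ to be this identity map in the supremum defining $\mmd(\Omega, P^\circ_{\boldsymbol{\phi}_0}, P^\circ_{\boldsymbol{\phi}_1})$ gives the lower bound
\begin{align*}
\mmd(P^\circ_{\boldsymbol{\phi}_0}, P^\circ_{\boldsymbol{\phi}_1}) \;\geq\; \big|\E_{P^\circ}[\phi(\bX)\mid V=0] - \E_{P^\circ}[\phi(\bX)\mid V=1]\big| \;=\; |\bw^\top\Delta|.
\end{align*}

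Then I would compute $|\bw^\top\Delta|$ in terms of $\bw_\perp = \Pi\bw$. Since $\Pi = \|\Delta\|^{-2}\Delta\Delta^\top$, we have $\bw_\perp = \|\Delta\|^{-2}(\Delta^\top\bw)\Delta$, hence $\|\bw_\perp\| = \|\Delta\|^{-2}|\Delta^\top\bw|\cdot\|\Delta\| = |\bw^\top\Delta|/\|\Delta\|$, i.e. $|\bw^\top\Delta| = \|\Delta\|\cdot\|\bw_\perp\|$. Combining this with the displayed lower bound and the hypothesis $f\in\cF_{L_2,\mmd}$, which gives $\mmd(P^\circ_{\boldsymbol{\phi}_0}, P^\circ_{\boldsymbol{\phi}_1})\leq\tau$, yields $\|\Delta\|\cdot\|\bw_\perp\| \leq \tau$, and dividing by $\|\Delta\|$ (assumed nonzero, since otherwise $\Pi$ is undefined and the statement is vacuous) gives $\|\bw_\perp\| \leq \tau/\|\Delta\|$ as claimed.

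The main obstacle is the step asserting that the one-dimensional identity function $\omega(z)=z$ (or something that lower-bounds the mean gap comparably) is an admissible witness in the supremum defining the $\mmd$ over the chosen RKHS $\Omega$. For a generic RBF-kernel RKHS the linear function is not literally a member, so the clean inequality $\mmd \geq |\text{difference of means}|$ needs a short justification — either by working with the witness class $\Omega$ taken to be (or to contain) the unit ball of linear functionals, by restricting attention to a bounded region of representation space where linear functions are uniformly approximable by RKHS elements, or by invoking a standard fact that characteristic kernels metrize weak convergence and in particular control first moments. I would include one sentence pinning down which of these conventions the paper adopts (the text's phrase ``we will assume that our predictor $f$ and our loss $\ell$ are contained in $\Omega$'' suggests $\Omega$ is taken large enough that the relevant linear functional is available), and the rest of the argument is the routine projection computation above.
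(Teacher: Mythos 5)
Your argument is essentially the paper's own proof: the paper likewise lower-bounds the constraint by taking the linear witness $\omega(\bx) = \bw^\top \bx$ in the supremum defining the $\mmd$ (after a brief remark that the supremum is nonnegative, so the absolute mean difference is bounded by $\tau$), obtaining $\tau \geq |\bw^\top \Delta|$, and then concludes via the same projection identity $\|\bw_\perp\| = |\bw^\top \Delta| / \|\Delta\|$. The RKHS-membership subtlety you flag is simply assumed away in the paper, which treats the linear function as an admissible element of $\Omega$ without further justification, so your added discussion is extra care rather than a different method.
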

Proposition~\ref{prop:wdelta_bound} says that the $\mmd$ constraint limits the effect of the irrelevant components of $\bw$ proportionally to $\tau$. In the image classification example, this means that the parts of $\bw$ that discriminate between images with different background types is constrained by $\tau$. 
Using this fact,
we derive comparable bounds on the Rademacher complexity of the two function classes by splitting $f$ in terms of how the observed features $\bx$ align with the mean difference vector $\Delta$.
Analogously to $\bx_\perp$ above, let $\bx_\parallel := (I-\Pi)\bx$ be the orthogonal component to the mean discrepancy vector $\Delta$.

\begin{thmprop}~\label{prop:rad_fc_fm}
Let $\bx_\perp := \Pi \bx$, $\bx_\parallel := (I-\Pi)\bx$.
For training data $\cD = \{(\bx_i, y_i, v_i)\}_{i=1}^{n}$, $\cD \sim P^\circ$,  
$\sup_{\bx_\perp} \|\bx_\perp\|_2 \leq B_\perp$,
$\sup_{\bx_\parallel} \|\bx_\parallel \|_2 \leq B_\parallel$, then
\vspace{-0.5em}
\begin{gather*} 
\mathfrak{R}(\cF_{L_2}) \leq \frac{A \sqrt{B_\parallel^2 + B_\perp^2}}{\sqrt{n}},\\
\mathfrak{R}(\cF_{L_2, \mmd}) \leq \frac{A \cdot B_\parallel + \tau \frac{B_\perp}{\|\Delta\|}}{\sqrt{n}}.
\end{gather*}
\end{thmprop}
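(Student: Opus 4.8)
The plan is to bound each Rademacher complexity by the standard linear‑predictor argument and then exploit Proposition~\ref{prop:wdelta_bound} to replace the crude norm bound on the ``irrelevant'' component by the $\mmd$‑based bound $\tau/\|\Delta\|$. First, since the sigmoid is $1/4$‑Lipschitz (or, if one prefers, absorbs into a Lipschitz‑loss contraction step), it suffices to bound the Rademacher complexity of the linear class $\{\bx\mapsto\bw^\top\bx\}$ under the respective norm constraints; by Talagrand's contraction lemma this only changes constants, which I will track loosely. For $\cF_{L_2}$, write $\frac1n\sum_i\epsilon_i\bw^\top\bx_i=\bw^\top\bigl(\frac1n\sum_i\epsilon_i\bx_i\bigr)$ and apply Cauchy--Schwarz followed by Jensen: $\E_{\boldsymbol\epsilon}\|\frac1n\sum_i\epsilon_i\bx_i\|_2\le\sqrt{\frac1{n^2}\sum_i\|\bx_i\|_2^2}$. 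Bounding $\|\bx_i\|_2^2=\|\bx_{i,\parallel}\|_2^2+\|\bx_{i,\perp}\|_2^2\le B_\parallel^2+B_\perp^2$ (the decomposition is orthogonal since $\Pi$ is an orthogonal projection) gives $\mathfrak R(\cF_{L_2})\le A\sqrt{B_\parallel^2+B_\perp^2}/\sqrt n$.

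For $\cF_{L_2,\mmd}$, the key move is to split the inner product along the orthogonal decomposition $\bx=\bx_\parallel+\bx_\perp$ and, correspondingly, $\bw$ into its component $\bw_\perp=\Pi\bw$ along $\Delta$ and its complement $\bw-\bw_\perp$. Because $\Pi$ is symmetric and idempotent, $\bw^\top\bx = (\bw-\bw_\perp)^\top\bx_\parallel + \bw_\perp^\top\bx_\perp$, so
\begin{align*}
\E_{\boldsymbol\epsilon}\sup_{f\in\cF_{L_2,\mmd}}\frac1n\sum_i\epsilon_i\bw^\top\bx_i
&\le \E_{\boldsymbol\epsilon}\sup_{\|\bw-\bw_\perp\|\le A}(\bw-\bw_\perp)^\top\Bigl(\tfrac1n\sum_i\epsilon_i\bx_{i,\parallel}\Bigr)\\
&\quad{}+\E_{\boldsymbol\epsilon}\sup_{\|\bw_\perp\|\le\tau/\|\Delta\|}\bw_\perp^\top\Bigl(\tfrac1n\sum_i\epsilon_i\bx_{i,\perp}\Bigr),
\end{align*}
where the constraint $\|\bw_\perp\|\le\tau/\|\Delta\|$ is exactly Proposition~\ref{prop:wdelta_bound}, and $\|\bw-\bw_\perp\|\le\|\bw\|\le A$ by the Pythagorean identity. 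Applying Cauchy--Schwarz and Jensen to each term as before yields $A\cdot\sqrt{\frac1{n^2}\sum_i\|\bx_{i,\parallel}\|_2^2} + \frac{\tau}{\|\Delta\|}\cdot\sqrt{\frac1{n^2}\sum_i\|\bx_{i,\perp}\|_2^2} \le (A B_\parallel + \tau B_\perp/\|\Delta\|)/\sqrt n$, which is the claimed bound.

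The main subtlety — and the step I would be most careful about — is the decoupling of the supremum over $\bw$ into two independent suprema over $\bw-\bw_\perp$ and $\bw_\perp$. The $L_2$ ball constraint $\|\bw\|\le A$ couples the two pieces ($\|\bw-\bw_\perp\|^2+\|\bw_\perp\|^2\le A^2$), so splitting the sup into a product of balls is an over‑relaxation; this is legitimate for an \emph{upper} bound because enlarging the feasible set only increases the sup, but it must be stated as such rather than as an equality. A second point to verify is that the conditioning/weighting in the definition of $\bmu_v$ and $\Delta$ (these are population quantities under $P^\circ$, independent of the sample) does not interfere with the sample‑dependent Rademacher expectation — it does not, since $\Pi$ is a fixed deterministic matrix. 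Finally, I would remark that the comparison is informative precisely when $\tau/\|\Delta\| \ll A$, i.e.\ when the mean‑discrepancy direction carries substantial signal, so that the $\mmd$ penalty genuinely contracts the hypothesis class along the irrelevant direction; otherwise both bounds coincide up to constants. Combining these pieces with the contraction step and the monotonicity of the generalization bounds in $\mathfrak R(\cF)$ (deferred to the appendix) completes the argument.
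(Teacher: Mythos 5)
Your proposal is correct and follows essentially the same route as the paper's proof: decompose $\bw^\top\bx$ along the projection $\Pi$, relax the coupled $L_2$ constraint into separate suprema via subadditivity, bound the perpendicular component's radius by $\tau/\|\Delta\|$ using Proposition~\ref{prop:wdelta_bound}, and finish each piece with the standard Cauchy--Schwarz/Jensen linear-class bound. If anything, you are more explicit than the paper, which leaves the substitution of the $\mmd$-implied constraint $\|\bw_\perp\|\le\tau/\|\Delta\|$ and the contraction step implicit.
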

\vspace{-1em}
The proof applies a standard Rademacher complexity bound for the $L_2$ class, and applies a more conservative strategy for $\cF_{L_2, \mmd}$ by separately bounding the worst-case terms involving $\bx_\perp$ and $\bx_\parallel$.
Nonetheless, comparing these bounds is instructive.
The upper bound on $\mathfrak{R}(\cF_{\mmd,L_2})$ is smaller than that of $\mathfrak{R}(\cF_{L_2})$ whenever $\tau$ satisfies: 
\vspace{-0.5em}
\begin{align}\label{eq:tau_condition}
0 \leq \tau < A\left[\sqrt{B_\parallel^2+B_\perp^2} - B_\parallel\right]\frac{\|\Delta\|}{B_\perp}.
\end{align}
The ratio $\|\Delta\| / B_\perp$ characterizes how much of the variation in $\bx_{\perp}$ comes from the mean shift in $\bx$ induced by $v$.
When the variation from this shift is large relative to variation in the $\bx_\parallel$ directions, we expect even weak $\mmd$ regularization to yield better generalization than $L_2$ regularization alone.
This occurs in cases where $V$ controls features that are highly salient in the input $\bx$.
For example, in image classification, if $V$ denotes the background type, we expect $\|\Delta\| / B_\perp$ to be large if the background features are very different between $V=0$ and $V=1$, but relatively consistent within values of $V$.
Further, if the background accounts for the majority of pixels in each image, we expect $B_\perp \gg B_\parallel$, resulting in an even stronger regularizing effect from the $\mmd$ penalty.

\textbf{Why is weighting important?}
When $P_s \not=P^\circ$, enforcing the invariance penalty without reweighting leads to a model that is inconsistent with the causal DAG, and is hence biased (i.e., inaccurate). This means that typical invariance penalties (e.g., \cite{krueger2021out, donini2018empirical}) will lead to an invariance-accuracy tradeoff. We show this formally in the appendix, highlighting why enforcing the $\mmd$ penalty without reweighting when $P_s \not=P^\circ$ can introduce biased estimation. These findings have implications that extend beyond shortcut learning, e.g., fairness and causality where the $\mmd$ penalty is frequently used.

\section{EXPERIMENTS}\label{sec:experiments}
We now empirically demonstrate that our approach mitigates shortcut learning by showing that predictors learned with our approach are robust to distribution shifts that invalidate the shortcut.
We study two tasks: predicting bird types from images, and detecting pneumonia using chest X-rays\footnote{Our code is available at \href{https://github.com/mymakar/causally_motivated_shortcut_removal}{\texttt{github.com/mymakar/causally\_motivated\_shortcut\_removal}}}.

\vspace{-1em}
\subsection{Water birds}\label{sec:waterbirds}
\vspace{-1em}
We test our method on a semi-synthetic task adapted from \cite{sagawa2019distributionally}.
The goal is to predict the type of bird ($Y$) appearing in an image ($\bX$) considering the background ($V$) to be a possible shortcut.

\textbf{Data.} We construct a dataset that combines images of water birds (Gulls) and land birds (Warblers) extracted from the Caltech-UCSD Birds-200-2011 (CUB) dataset \citep{wah2011caltech} with water and land background extracted from the Places dataset \citep{zhou2017places}. Additional details about data generation, hyperparameter selection, and code are included in the supplementary materials. 

We examine how a model trained under a specific distribution $P_s$ generalizes to various test distributions $P_t \in \cP$.
We consider two different cases for $P_s$.
In the first, we generate the training data from the ideal distribution $P^\circ$, with $P^\circ(Y|V=1) = P^\circ(Y|V=0) = 0.5$.
This setting tests the implications of proposition~\ref{prop:rad_fc_fm}, which suggests that our approach improves sample efficiency even under uncorrelated sampling.
In the second setting, the training data is sampled from a $P_s$ where the auxiliary label and the main label are correlated, i.e., $V \not \indep Y$. Specifically, we generate the data such that $P_s(Y=1 | V=1) = P_s(Y=0 | V=0) = 0.9$. 
For both settings, we introduce noise by randomly flipping 1\% of each of the labels. We generate a number of held-out test sets, each one corresponding to a different probability of observing a waterbird with a water background, and similarly with land birds. 

We use ResNet-50 \citep{he2016deep}, pretrained on ImageNet, and fine tuned for our task. All models in this paper are implemented in TensorFlow \citep{tensorflow2015-whitepaper}. We present the results from 20 simulations. In each simulation, we generate different train/test splits, and different bird-background combinations. 

\textbf{Baselines.}
We compare the following methods in our experiments:
\begin{enumerate*}[label={(\arabic*)}]
   \item \textbf{wMMD-T} is our main proposal, which minimizes \eqref{eqn:empirical_mmd_loss}, and applies the two-step cross validation process described in Section~\ref{sec:summary_approach}.
    \item \textbf{cMMD-T} is similar to our main approach but does not incorporate the weights $u$, and penalizes an $\mmd$ penalty that is conditional on the main label $Y$. Enforcing the conditional independence is consistent with the causal DAG so we expect this baseline to give unbiased estimators as is shown in \cite{veitch2021counterfactual}. 
    \item \textbf{L2-S} is the standard DNN trained to minimize the empirical risk, with an L2 penalty on the weights. 
    \item \textbf{wL2-S}: similar to L2-S but also incorporates weighting using $\bu$ as defined in \eqref{eq:weights}. 
    \item \textbf{Rand-Aug-S}: a baseline that aims for robustness by augmenting the data at training time using  random flips and rotations.  
    \item \textbf{Rex}: a recently suggested method that imposes an invariance penalty without re-weighting \citep{krueger2021out}. We show the results from Rex in the appendix since it significantly under performs compared to other models.  
\end{enumerate*}

For L2-S, wL2-S, Rand-Aug-S, and Rex cross-validation is done in the standard way by picking the model that has the best accuracy on the validation set.
 
In the uncorrelated setting, we only present the unweighted variants of all the models, since the weights are roughly constant across data points in that setting. 

\textbf{Ablations.}
We study ablated variants of our approach:
\begin{enumerate*}[label={(\arabic*)}]
    \item \textbf{wMMD-S} is similar to wMMD-T,
    but uses standard cross-validation rather than our two-step cross-validation. 
    \item \textbf{MMD-T} minimizes a variant of equation~\ref{eqn:empirical_mmd_loss} that does not utilize $\bu$ in the logisitc loss or the $\mmd$ penalty. At validation time hyperparameters are picked using our two-step cross-validation approach using $\bu-$weighted metrics. 
    \item \textbf{MMD-S} is similar to MMD-T, but does standard cross-validation. 
    \item \textbf{MMD-uT} is similar to MMD-T, but uses unweighted validation metrics in our two-step cross-validation. 
\end{enumerate*}

\begin{figure*}[htp]
\vspace{-1mm}
\centering
\vspace{-1em}
\includegraphics[width=\textwidth]{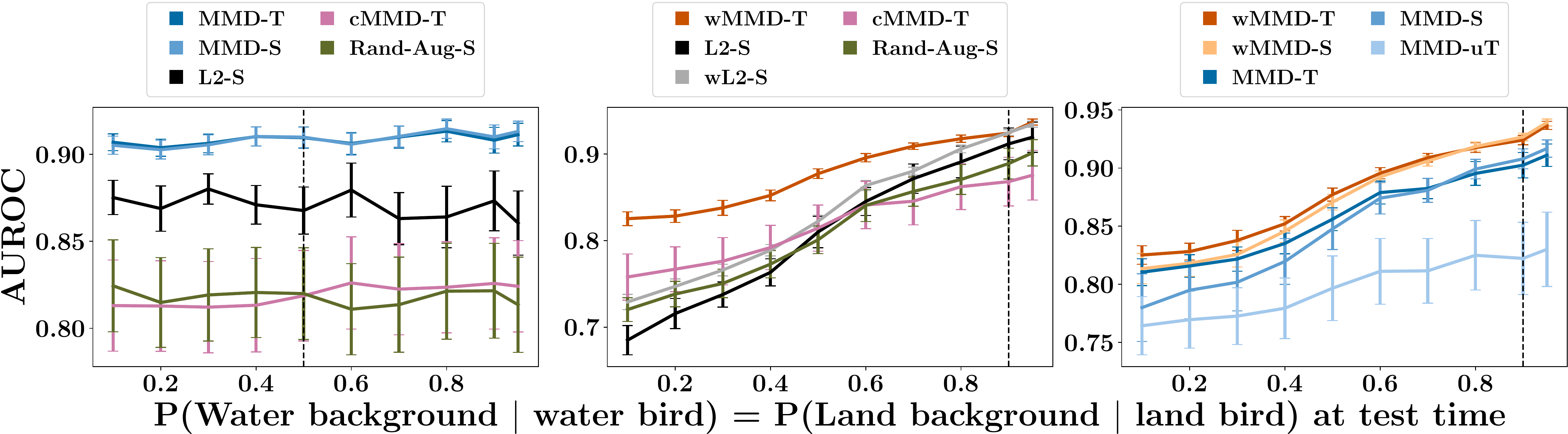}
\caption{
\textbf{In all plots} $x$-axis shows $P(Y|V)$ at test time under different shifted distributions, $y$-axis shows AUROC on test data, and vertical dashed line shows $P(Y|V)$ at training time.
\textbf{(Left)} Training data sampled from the ideal (uncorrelated) distribution. MMD-regularized models outperform baselines within, and outside the training distribution.
\textbf{(Middle)} Training data sampled from a correlated distribution.  MMD-regularized model outperform baselines showing better robustness against distribution shifts at test time.
\textbf{(Right)} Ablation study shows how different components of our suggested approach (wMMD-T) contribute to improved performance. \label{fig:main} \vspace{-1em}}
\end{figure*}

\textbf{Results: Sampling from the ideal distribution}.
Figure~\ref{fig:main} (left) shows the results from the first setting, where the training data is sampled from the ideal, uncorrelated distribution $P^\circ$. 
The $x-$axis shows $P(Y=1|V=1) = P(Y=0|V=0)$ at test time, while the $y-$axis shows the corresponding mean AUROC, averaged over 20 simulations. The vertical dashed line shows the conditional probability at training time. Both variants of our proposed approach, with standard and two-step cross-validation outperform other baselines within the training distribution and when there is distribution shift. This conforms with our theory that suggests that even when the data are sampled from the ideal distribution, using a causally-motivated regularization scheme leads to more efficient models, which translates into better performance in finite samples. 
Despite being consistent with the causal DAG, cMMD-T requires conditioning on $Y$ to estimate the $\mmd$ penalty.
We conjecture that the need to slice the data on $Y$ leads to unstable estimates of the conditional $\mmd$ when using standard batch sizes. 
Additional experimental results that support this conjecture are included in the appendix.
In addition, all models are robust to distribution shift when trained on data from $P^\circ$. This conforms with proposition~\ref{prop:frob_ferm_equal}, which states that under $P^\circ$, optimal predictors are risk-invariant. 

\textbf{Results: Sampling from a correlated distribution}. 
Figure~\ref{fig:main} (middle) shows the results from the setting where the training data is sampled a correlated distribution with $P_s(Y=1|V=1) = P_s(Y=0|V=0) = 0.9$. The $x$, and $y$ axes are similar to figure~\ref{fig:main} (left). Our approach (wMMD-T) outperforms other models especially at high divergence from the training distribution. Out of all the non-MMD regularized baselines, the weighted L2-regularized model performs best. This suggests that minimizing the empirical risk on the $\bu-$reweighted distribution contributes to model robustness. Conclusions about cMMD-T are similar to the uncorrelated distribution setting. Studying performance metrics other than the AUROC (e.g., Brier score or logistic loss) yields the same conclusions (see the appendix section \ref{sec:experiments_extra}). 

The ablation study in Figure~\ref{fig:main} (right) shows that the largest increase in performance is attributable to
using the weighted $\mmd$ penalty 
at training time, since the two weighted $\mmd$ variants outperform the two unweighted variants. Within those two groups, the two-step cross validation with weighted metrics outperforms the others, especially in terms of robustness. This shows that when training models using the $\mmd$ penalty, it is important to take into consideration that, unlike L2-norm regularization, the penalty depends on the training data, and is prone to overfitting. 
Recall that MMD-uT strictly enforces the $\mmd$ penalty without addressing the fact that the training distribution has been sampled from a correlated distribution.
This yields a fairly robust predictor that has poor performance.
This conforms with our findings stated in the appendix proposition~\ref{prop:looser_mmd}, which imply that there will be a bias-robustness trade-off if the correlated sampling is not corrected. 

\vspace{-1em}
\subsection{Chest X-rays}
\vspace{-1em}
For a less synthetic test, we adapt an experiment from \cite{jabbour2020deep} where the task of predicting pneumonia ($Y=1$) vs. no clinical findings ($Y=0$) from chest X-rays ($\bX$) considering sex to be a possible shortcut ($V)$. 

\textbf{Data.} We conduct this analysis on a publicly available dataset, CheXpert \citep{irvin2019chexpert}. 
At training time, we under-sample women who did not have pneumonia to create setting where sex-related attributes are possible shortcuts to predict pneumonia. Specifically, we sample the data such that $P_s(Y=1) = 0.3$, and the majority of women have pneumonia whereas the majority of men do not have pneumonia i.e.,  $P(Y=1 | V=1) = P(Y=0 | V=0) = 0.9$. 
For this task, we use DensNet-121 \citep{huang2017densely}, pretrained on ImageNet, and fine tuned for our specific task. We use DenseNet because it was shown to outperform other commonly used architectures on the CheXpert dataset \citep{irvin2019chexpert, ke2021chextransfer,jabbour2020deep}. We present the results from 5 simulations. In each simulation, we generate different train/test splits. Additional details about the training are presented in the appendix.

\textbf{Results.}
We evaluate the performance of our approach and baselines similar to those outlined in section~\ref{sec:waterbirds} in the \textbf{no shift} setting where the test data is sampled from the same biased distribution as the training data, and a \textbf{shift} setting where the test data comes from a distribution where $P(Y=1 | V=1) = P(Y=0 | V=0) = 0.5$. The results in table~\ref{tab:chexpert} show that our proposed approach (wMMD-T) outperforms all others when there is a distribution shift and performs comparably in-distribution to the L2-regularized DNN that utilizes our proposed weighting scheme (wL2-S). The performance of MMD-uT, cMMD-T and Rex does not change as much as other models across the two distributions, signaling robustness. However, MMD-uT and Rex under-perform in terms of accuracy, which is expected since they enforce an invariance penalty that is inconsistent with the causal DAG and are subject to a invariance-accuracy trade-off. Meanwhile, cMMD-T is consistent with the causal DAG, but  under-performs in practice. This is consistent with our conjecture that the cMMD-T penalty is harder to estimate using small batches.

\begin{table}[]
    \centering
        \begin{tabular}{l|ll}
        \toprule
            & \multicolumn{2}{c}{AUROC (STE)}\\
        \midrule
              Model & Shift & No shift \\
        \midrule
        wMMD-T &    \textbf{0.75 (0.006)} &       0.85 (0.007) \\ %
        MMD-uT &     0.7 (0.013) &       0.75 (0.033) \\
        cMMD-T &    0.71 (0.024) &        0.73 (0.05) \\
        L2-S &    0.62 (0.066) &       0.71 (0.063) \\ %
         wL2-S &    0.69 (0.019) &        \textbf{0.86 (0.01)} \\
        Rex &    0.57 (0.036) &        0.6 (0.038) \\
        \bottomrule
        \end{tabular}
    \caption{CheXpert results show that our approach (wMMD-T) outperforms others when there is a distribution shift and performs comparably to the L2-regularized DNN that utilizes our proposed weighting scheme (wL2-S) in distribution.}
    \label{tab:chexpert}
    \vspace{-1em}
\end{table}


\section{CONNECTIONS TO EXISTING WORK}
\label{sec:literature review}




\textbf{Shortcut learning.} Discouraging shortcut learning using data augmentation (e.g., rotation, translation, cropping) has been suggested by multiple authors \citep{hendrycks2020many,yin2019fourier,lyle2020benefits,lopes2019improving,cubuk2018autoaugment}. 
This approach can work if a generator for shortcut transformations is available at training time.
However, if this set of generated transformations does not include interventions that disrupt the shortcut, the desired robustness might not be achieved, as evidenced by the empirical performance of the random augmentation baseline presented in the experiments section.
Our approach does not require generating shortcut transformations.
Instead, our approach leads to invariant models by leveraging the auxiliary labels to inform the relevant transformations the main label should be independent to.
Other work views shortcuts as a consequence of model overparameterization.
For example, \cite{sagawa2020investigation} observe that overparameterization exacerbates the reliance on spurious correlations, and suggest an approach similar to wL2-S, which is outperformed by our model. 

\textbf{Fairness and invariance}. Our work sheds light on properties of invariant representations, which are used in the fairness literature \citep{madras2018learning,donini2018empirical}. 
One key distinction between our work and fairness literature is that our invariance penalty is designed to be consistent with the causal DAG.
This leads to estimators that are asymptotically optimal.
By contrast, in the the fairness literature, invariance constraints are often motivated by external criteria, and may be incompatible with the causal DAG.
In these cases, a tradeoff can arise between the fairness criterion and accuracy (see proposition~\ref{prop:looser_mmd} in the appendix).
Thus our work contributes to ongoing investigations about whether there is a trade-off between fariness and accuracy \citep{zhang2019theoretically,calders2009building,johansson2019support,dutta2020there,zhao2019inherent}.
Note that the MMD-uT and the cMMD-T baselines presented in the experiments section correspond to penalties that enforce demographic parity and equalized odds respectively \citep{hardt2016equality}. 

Our theoretical analysis is related to analysis of generalization in ``Fair ERM'' appearing in \citet{donini2018empirical}, but has a different focus.
Specifically, our analysis (Proposition~\ref{prop:wdelta_bound}) highlights that an invariance constraint can \emph{improve} efficiency by reducing the complexity of a learning problem through regularization of ``orthogonal'' dimensions.

Similar invariance penalties have been suggested in causality literature \citep{shalit2017estimating,johansson2016learning}, and domain shift literature \citep{tzeng2014deep,long2015learning}. While proposition~\ref{prop:risk_gap} bears some similarity to statements presented in the domain adaptation literature \citep{long2015learning, ben2007analysis,ben2010theory}, our work is distinct in that we do not aim to generalize to a specific target domain. Instead, we aim to build models that generalize across a \emph{family} of target domains. One consequence of this is that, unlike unsupervised domain adaptation, we do not require access to examples from a target domain. 

\textbf{Causally-motivated invariance.} 
In other work, 
\citet{arjovsky2019invariant} propose an invariant risk minimization (IRM) approach inspired by ideas from causality. Unlike our approach, IRM does not explicitly penalize dependence on the redundant dimensions, but instead relies on the idea that the invariant risk minimizer should achieve the lowest error across datasets sampled from different target distributions $P_t$. As others (e.g.,\cite{guo2021out, rosenfeld2020risks}) noted, when the family of functions is as flexible as DNNs, it is possible to find a predictor that achieves the objective of IRM but is not robust. 
Similar to us, \citet{krueger2021out} suggest a model for risk extrapolation (Rex) in the anti-causal settings. Their method does not correct for biased sampling and hence has the same limitations as the unweighted $\mmd$ models presented in the experiments. Rex results presented in the appendix confirm that it consistently performs worse than our approach.  

Similar to this work, \citet{subbaswamy2018counterfactual} develop methods to create estimators that are stable across distribution shifts by appealing to causal DAGS and without requiring access to samples from the target distribution. However, they do so by identifying stable features or ``conditioning sets'' that only contain variables with stable relationships with the target label. Such an approach, which assumes that the input features $\bX$ are interpretable, would not be appropriate in a more general setting where the input features are image pixels. 

Our work is most similar and complementary to \citet{veitch2021counterfactual}, where the authors also note that the causal structure of a problem has implications for distributional robustness. Unlike \citet{veitch2021counterfactual}, we focus on the specific anti-causal setting described in figure~\ref{fig:dags}, and provide a finite sample analysis highlighting the robustness and efficiency of our estimator. We note that our results extend to non anti-causal DAGs, so long as the correlation between $Y$, and $V$ is purely spurious, which is consistent with the findings of \citet{veitch2021counterfactual}.

\section{DISCUSSION\label{sec:conc}}
We presented an approach to build models that are invariant to distribution shifts using auxiliary labels. 
Guided by our theoretical insight, we suggested a causally-motivated regularization scheme to train robust, and accurate models. We showed that our approach empirically outperforms others. 

\textbf{Limitations.} Our approach requires \emph{a priori} knowledge of a shortcut that might be exploited by a predictor.
This is to be expected: the choice of which source of variation to exclude from a predictor is highly context-specific and problem-specific.
If a predictor exhibit a lack of robustness and shortcut learning is suspected, practitioners could apply exploratory techniques (e.g., interpretability method such as saliency maps \cite{simonyan2013deep} or Shapley values \cite{lundberg2017unified,wang2021shapley}) to surface factors that the model may rely on.
Ultimately, however, this requires judgment about the problem domain.
A second limitation is the focus on one binary auxiliary variable. For multiple/categorical $V$ our approach could be modified in several ways. First, for a categorical $V$ with $m$ categories, the $\mmd$ term in equation~\eqref{eq:overall_goal} can be replaced with $\sfrac{m !}{2 (m-2)!}$ $\mmd$ terms, each corresponding to a pairwise comparison between two groups defined by $V$. 
Alternatively, we would define the invariance penalty with respect to the Hilbert Schmidt independence criterion (HSIC), which allows for higher cardinality \citep{gretton2007kernel}.

\subsubsection*{Acknowledgements}
We thank the reviewers for their insightful comments. We thank John Guttag and the members of the Clinical and Applied Machine Learning group at MIT for their detailed feedback. Part of this work was done while MM was an intern at Google Research. 

\bibliographystyle{unsrtnat}
\bibliography{slabs}


\clearpage
\appendix

\thispagestyle{empty}

\onecolumn \makesupplementtitle

\section{SOCIETAL IMPACT}
Our approach could be used in certain fairness applications where invariance to a sensitive label is favorable. We caution that even though our approach outperforms the baselines, it is imperfect in that it still exhibits some dependence on the features related to the sensitive (auxiliary) label as shown in figure~\ref{fig:main} (middle). As with most learned models, human audits of the model's output are necessary in such high-stakes applications.

\section{PROOFS FOR SECTION~\ref{sec:prelims}}\label{sec:app_theory}

To prove proposition~\ref{prop:frob_ferm_equal}, we first formally state the assumption of invertability mentioned in section~\ref{sec:prelims}. Specifically, the fact that $\bX^*$ is a function of $\bX$ only implies that $\bX^*$ is invertible, i.e. for all $\bX$, $\bX^*$ can be exactly recovered from $\bX$. We state that formally in the following assumption 
\begin{thmappasmp}\label{asump:invertability}
(Invertability) There exists some function $e$ such that $\bX^* = e(\bX)$ for all $\bX$. 
\end{thmappasmp}

\begin{thmappprop}
[Restated proposition~\ref{prop:frob_ferm_equal}]Under $P^\circ$, the Bayes optimal predictor is (i) only a function of $\bX^*$, and (ii) an optimal risk-invariant predictor $f_{\rinv}$ with respect to $\cP$.
\end{thmappprop}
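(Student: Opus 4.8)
The plan is to establish the two claims separately, relying only on the conditional-independence structure the DAG of Figure~\ref{fig:dags} implies for $P^\circ$ (where the bidirected $Y$–$V$ edge is absent), together with the explicit form of the intervention family $\cP$ in \eqref{eq:intervention family} and Assumption~\ref{asump:invertability}.

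\textbf{Part (i).} Under the logistic loss the Bayes optimal predictor is the pointwise minimizer of $\E_{P^\circ}[\ell(f(\bX),Y)]$ over measurable $f$, namely the conditional expectation $f^\circ(\bX) = \E_{P^\circ}[Y\mid\bX] = P^\circ(Y=1\mid\bX)$. Under $P^\circ$ the DAG reduces to $Y\to\bX^*\to\bX\leftarrow V$, and the unique path from $Y$ to $\bX$ runs through the chain vertex $\bX^*$; conditioning on $\bX^*$ therefore blocks it, so $Y\indep\bX\mid\bX^*$. By Assumption~\ref{asump:invertability}, $\bX^* = e(\bX)$ is a deterministic function of $\bX$, so conditioning on $\bX$ is the same as conditioning on $(\bX,\bX^*)$, giving
\[
P^\circ(Y\mid\bX) = P^\circ(Y\mid\bX,\bX^*) = P^\circ(Y\mid\bX^*).
\]
Hence $f^\circ$ depends on $\bX$ only through $\bX^*$, i.e. $f^\circ = \tilde f(\bX^*)$ for some $\tilde f$.

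\textbf{Part (ii).} I first argue that any $\bX^*$-measurable predictor $g(\bX^*)$ has the same risk under every $P_t\in\cP$. Starting from \eqref{eq:intervention family} and integrating out $\bX$ (valid since $\int P_s(\bX\mid\bX^*,V)\,d\bX=1$) and then $V$,
\[
P_t(\bX^*,Y) = P_s(\bX^*\mid Y)P_s(Y)\sum_V P_t(V\mid Y) = P_s(\bX^*\mid Y)P_s(Y),
\]
which is independent of the intervention $P_t(V\mid Y)$ and hence the same for all $P_t\in\cP$. Therefore $R_{P_t}(g) = \E_{P_t}[\ell(g(\bX^*),Y)]$ is constant over $\cP$, so every $\bX^*$-measurable predictor—in particular $f^\circ=\tilde f(\bX^*)$ from Part (i)—lies in $\cF_\rinv$. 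It remains to show $f^\circ$ is $\cF_\rinv$-optimal for each $P_t\in\cP$. The key observation is that $P^\circ\in\cP$, so the defining property of $\cF_\rinv$ forces $R_{P_t}(f)=R_{P^\circ}(f)$ for all $f\in\cF_\rinv$ and all $P_t\in\cP$. Since $f^\circ$ is Bayes optimal under $P^\circ$, $R_{P^\circ}(f^\circ)\le R_{P^\circ}(f)$ for every measurable $f$, in particular for every $f\in\cF_\rinv$. Combining, for any $P_t\in\cP$ and any $f\in\cF_\rinv$,
\[
R_{P_t}(f^\circ) = R_{P^\circ}(f^\circ) \le R_{P^\circ}(f) = R_{P_t}(f),
\]
so $f^\circ\in\arg\min_{f\in\cF_\rinv}R_{P_t}(f)$ for all $P_t\in\cP$, i.e. $f^\circ$ is an optimal risk-invariant predictor $f_\rinv$.

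\textbf{Main obstacle.} The one genuine subtlety is that $\cF_\rinv$ is defined purely by risk-invariance and could a priori contain predictors that are \emph{not} functions of $\bX^*$ but happen to have equal risk across $\cP$; one must make sure $f^\circ$ beats those too. The argument above avoids having to characterize $\cF_\rinv$ at all: because $P^\circ$ itself lies in $\cP$, risk invariance pins the risk of any $f\in\cF_\rinv$ to its risk under $P^\circ$, and Bayes optimality under $P^\circ$ then finishes the comparison. The remaining pieces—the d-separation claim in Part (i) and the marginalization of \eqref{eq:intervention family} in Part (ii)—are routine.
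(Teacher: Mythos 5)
Your proposal is correct and follows essentially the same route as the paper: d-separation of $Y$ from $\bX$ by $\bX^*$ plus the invertibility assumption for part (i), and marginalizing out $\bX$ and $V$ to show the $(\bX^*,Y)$ law (hence the risk of any $\bX^*$-measurable predictor) is constant over $\cP$ for part (ii). Your final step—using $P^\circ\in\cP$ so that risk invariance pins every $f\in\cF_\rinv$ to its $P^\circ$-risk, then invoking Bayes optimality—just makes explicit the optimality argument the paper states in one sentence.
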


\begin{proof}
Under $P^\circ$, $\bX^*$ $d$-separates $Y$ from $\bX$, so $\E_{P^\circ}[Y \mid \bX] = \E_{P^\circ}[Y \mid \bX^*]$.
Thus, the population risk minimizer is only a function of $\bX^*$.

By the assumption \ref{asump:invertability}, we have that $\bX^* = e(\bX)$ and hence $\bX^*$ can be perfectly recovered from $\bX$. This means that $\E_{P^\circ}[Y \mid \bX^*]$ can be written as a function of $\bX$, i.e., we can define $g(\bX) = \E_{P^\circ}[Y \mid e(\bX)]$. Thus, the Bayes optimal classifier $f(\bX)$, which is a function of $\E_{P^\circ}[Y \mid \bX] = \E_{P^\circ}[Y \mid e(\bX)]$, can be written (with some abuse of notation) as $f(\bX^*)$ (that is, a function that only varies with the value of $\bX^* = e(\bX)$).

Thus, the risk is also invariant. To see that note the following: 
\begin{align*}
    R^\circ(f) & = \int_{\bX, Y} \ell(f(\bX), Y) P^\circ(\bX \mid \bX^*, V) P^\circ(\bX^* \mid Y) P^\circ(Y) P^\circ(V) dY d\bX\\
    & = \int_{\bX^*, Y} \ell(f(\bX^*), Y)P^\circ(\bX^* \mid Y) P^\circ(Y) dY d\bX^*\\
    & = \int_{\bX^*, Y} \ell(f(\bX^*), Y) P(\bX^* \mid Y) P(Y) dY d\bX^* \\
    & = R_P(f), 
\end{align*}
for any $P \in \cP$.

Because this classifier is optimal under $P^\circ$, no other risk invariant classifier can obtain a lower risk across $\cP$; thus this classifier is an optimal risk invariant classifier.
\end{proof}

\section{PROOFS FOR SECTION~\ref{sec:summary_approach}}

We show that the reweighted risk is an unbiased estimator of the risk under $P^\circ$, i.e., that 
\begin{align*}
    R^\bu_{Ps}(f) = \E_{Ps}\left[\hat{R}^\bu_{Ps}(f)\right] = R_{P^\circ}(f). 
\end{align*}

For any $P_s$, the $\bu$-weighted risk is equal to the risk under the corresponding unconfounded distribution $P^\circ$.
That is, $R_{Ps}^\bu := \E_{Ps}[u(Y,V) \ell(f(\bX, Y))] = R_{P^\circ}$.

To see this, note that the conditional distribution $P_s(\bX \mid Y, V)$ is invariant across the family $\cP$ defined in \eqref{eq:intervention family}.
Thus, the risk conditional on $Y$ and $V$, $R_{Ps\mid y,v} := \E_{Ps}[\ell(f(\bX), Y) \mid Y=y, V=v]$, does not change with $P_s$.
\begin{align*}
R^\bu_{Ps}(f) &:= \E_{Ps}[u(Y,V) \ell(f(\bX, Y))]
= \E_{Ps}[\E_{Ps}[u(Y, V) \ell(f(\bX, Y)) \mid Y=y, V=v]]\\
&= \sum_{y,v} P_s(Y=y,V=v) u(y,v) R_{P \mid y,v} = \sum_{y,v} P_s(Y=y)P_s(V=v) R_{Ps^\circ \mid y,v}\\
&= \E_{P^\circ}[\E_{P^\circ}[R_{P^\circ \mid y,v}]]= R_{P^\circ}.
\end{align*}

\section{PROOFS FOR SECTION~\ref{sec:struc_gap}}

\begin{thmapplem}\label{lem:tau_bounds_components}
For training data $\cD = \{(\bx_i, y_i, v_i)\}_{i=1}^{n}$, $\cD \sim P^\circ$, and a corresponding learned $f = h(\phi(\bx))$ with expected risk $R_{P^\circ}(f)$, suppose that $y$ is $\phi-$representable, i.e., that there exists $g(\phi(\bx)) = y$, and that $g(\phi)\ell(\phi) \in \Omega$. 
Then for all $y$:
\begin{align*}
    P(Y=y)[R^\circ_{0y} - R^\circ_{1y}] \leq \tau, 
\end{align*}
where $R^\circ_{vy}:= \E_{\bx \sim P^\circ}[\ell(f(\bx), y) | V=v, Y=y]$
\end{thmapplem}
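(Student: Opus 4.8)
The statement relates the $\mmd$ bound $\tau$ to a difference of conditional risks sliced by both $V$ and $Y$. The natural route is to instantiate the $\mmd$ definition (Definition~\ref{def:mmd}) with a cleverly chosen test function and observe that the resulting inequality, after conditioning on $Y=y$, gives exactly the claimed bound. Recall that $\mmd(P^\circ_{\boldsymbol\phi_0}, P^\circ_{\boldsymbol\phi_1}) = \sup_{\omega\in\Omega}\big(\E_{P^\circ}[\omega(\phi(\bX))\mid V=0] - \E_{P^\circ}[\omega(\phi(\bX))\mid V=1]\big)$, and that this supremum is at least the value attained by \emph{any} particular admissible $\omega$.

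\textbf{Key steps.} First I would choose the test function $\omega(\phi) := g(\phi)\,\ell(\phi)$, where $g$ is the $\phi$-representability map with $g(\phi(\bx)) = y$ and $\ell(\phi)$ is shorthand for the loss evaluated at the prediction $h(\phi)$ against label $g(\phi)$; by hypothesis $g(\phi)\ell(\phi) \in \Omega$, so $\omega$ is admissible in the $\mmd$ supremum. Second, I would evaluate $\E_{P^\circ}[\omega(\phi(\bX))\mid V=v]$. The point of multiplying by $g(\phi)$ is that, on the event $\{g(\phi(\bX)) = 1\}$ the integrand equals $\ell(f(\bX),1)$, and on $\{g(\phi(\bX))=0\}$ it vanishes; more carefully, $\E_{P^\circ}[g(\phi(\bX))\ell(\phi(\bX))\mid V=v] = \sum_{y} P^\circ(Y=y\mid V=v)\, y\, \E_{P^\circ}[\ell(f(\bX),y)\mid V=v, Y=y] = P^\circ(Y=1\mid V=v)\,R^\circ_{v1}$, using $y\in\{0,1\}$ and the fact that under $\phi$-representability the label is determined by $\phi(\bX)$ so slicing on $Y=y$ is the same as slicing on $g(\phi(\bX))=y$. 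Third, I would use that under $P^\circ$ we have $Y\indep V$, so $P^\circ(Y=1\mid V=v) = P(Y=1)$ for both $v$. Plugging into the $\mmd$ inequality then yields
\begin{align*}
P(Y=1)\big[R^\circ_{01} - R^\circ_{11}\big] = \E_{P^\circ}[\omega\mid V=0] - \E_{P^\circ}[\omega\mid V=1] \leq \mmd(P^\circ_{\boldsymbol\phi_0},P^\circ_{\boldsymbol\phi_1}) \leq \tau,
\end{align*}
which is the claim for $y=1$. For $y=0$, I would repeat the argument with the test function $\omega(\phi) := (1-g(\phi))\ell(\phi)$ (also in $\Omega$ under the stated closure assumption, or argued by symmetry), giving $P(Y=0)[R^\circ_{00} - R^\circ_{10}] \leq \tau$.

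\textbf{Main obstacle.} The routine calculations (conditioning, linearity of expectation) are straightforward; the delicate point is justifying that $\omega = g(\phi)\ell(\phi)$ is genuinely in the RKHS $\Omega$ and hence a legitimate competitor in the $\mmd$ supremum — this is precisely why the lemma hypothesizes $g(\phi)\ell(\phi)\in\Omega$, so I would simply invoke that assumption. A secondary subtlety is the identification $\E[\ell(f(\bX),y)\mid V=v, Y=y] = \E[\ell(f(\bX),y)\mid V=v, g(\phi(\bX))=y]$: since $g(\phi(\bX)) = Y$ deterministically by $\phi$-representability, conditioning on one is conditioning on the other, so this is immediate but worth stating explicitly. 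One should also note that the $\mmd$ as defined is a signed supremum rather than an absolute value, so only one direction of the difference is controlled by a single application; handling both $y=0$ and $y=1$ (equivalently, both signs) requires the two separate test functions as above.
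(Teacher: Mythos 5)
Your proof is correct, and it reaches the bound more directly than the paper does. Both arguments turn on the same key idea: build test functions in $\Omega$ out of $g$ and $\ell$, use linearity of the RKHS together with the standing assumption $\ell \in \Omega$, and exploit $Y \indep V$ under $P^\circ$ to decompose $\E_{P^\circ}[\omega(\phi(\bX))\mid V=v]$ as a $P(Y=y)$-weighted mixture of the conditional risks $R^\circ_{vy}$. The difference is structural: the paper argues by contradiction, first (implicitly) using $\ell$ itself as a test function to control the sum $P(Y=0)[R^\circ_{00}-R^\circ_{10}]+P(Y=1)[R^\circ_{01}-R^\circ_{11}]$, then deducing that if one term exceeded $\tau$ the other would be negative, and finally plugging the sign-flipping function $(2g(\phi)-1)\ell(\phi)$ into the $\mmd$ supremum to obtain a contradiction. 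You instead isolate each term in a single step, using $g(\phi)\ell(\phi)$ for $y=1$ and $(1-g(\phi))\ell(\phi)$ for $y=0$, so no contradiction is needed; the only additional justification is that $(1-g)\ell = \ell - g\ell \in \Omega$, which follows from the hypothesis $g\ell\in\Omega$, the paper's assumption $\ell\in\Omega$, and the vector-space structure of the RKHS — exactly the same closure facts the paper invokes for $(2g-1)\ell$. You also correctly flag the two subtleties (membership of the test function in $\Omega$, and that conditioning on $Y=y$ equals conditioning on $g(\phi(\bX))=y$ under $\phi$-representability) and that the one-sided form of the $\mmd$ supremum is all that is needed since the lemma's claim is one-sided. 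Your route is arguably preferable: it makes explicit the weighted-mixture computation that the paper's first displayed inequality leaves implicit.
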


\begin{proof}
Without loss of generality, suppose that for $y=1$, 
\begin{align*}
    P(Y=y)[R^\circ_{0y} - R^\circ_{1y}] = \tau_1 > \tau. 
\end{align*}
Then due to the fact that $\mmd \leq \tau$, and by assumption that $\ell \in \Omega$
\begin{align*}
    P(Y=0)[R^\circ_{00} - R^\circ_{10}] &\leq \tau - \tau_1 \\
    P(Y=0)[R^\circ_{00} - R^\circ_{10}] &< 0 \\
    R^\circ_{00} - R^\circ_{10} & < 0. 
\end{align*}
Using the shorthand $R^\circ_{\Delta0} := R_{00} - R_{10}$, the above inequality implies that $-R^\circ_{\Delta0} > 0$. 
\\
\\
\noindent Let $\accentset{\bullet}{\ell}(\phi) = (2g(\phi) -1) \cdot \ell(\phi)$, and $\accentset{\bullet}{R}^\circ:=\E[\accentset{\bullet}{\ell}(\phi)]$. By assumption, we have that $\accentset{\bullet}{\ell}$ is also $\in \Omega$. However, 
\begin{align*}
    \mmd(\accentset{\bullet}{\ell}, P_{\phi_0}, P_{\phi_1}) & = P(Y=0)[\accentset{\bullet}{R}^\circ_{00} - \accentset{\bullet}{R}^\circ_{10}] + P(Y=1)[\accentset{\bullet}{R}^\circ_{01} - \accentset{\bullet}{R}^\circ_{11}] \\
    & = P(Y=0)[- (R^\circ_{00} - R^\circ_{10})] + P(Y=1)[R^\circ_{01} - R^\circ_{11}] \\
    & = P(Y=0) [-R^\circ_{\Delta0}] + \tau_1 > \tau.
\end{align*}
This contradicts the $\mmd$ condition; that for all functions in $\Omega$, $\mmd \leq \tau$

\end{proof}

\begin{thmappprop}
[Restated proposition~\ref{prop:risk_gap}]
Suppose that $f = h(\phi(\bX))$ is a predictor that satisfies $\mmdphi \leq \tau$.
Suppose that $y$ is $\phi-$representable, i.e., that there exists $g(\phi(\bx)) = y$, and that $g(\phi)\ell(\phi) \in \Omega$. 
Then
\begin{align*}
    R_{Pt}(f) < R_{P^\circ}(f) + 2 \tau.
\end{align*}
\end{thmappprop}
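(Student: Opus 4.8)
The plan is to bound the structural risk gap $R_{P_t}(f) - R_{P^\circ}(f)$ by decomposing each risk into a sum over the four cells of $(Y,V)$ and then using Lemma~\ref{lem:tau_bounds_components} to control the difference between the $V=0$ and $V=1$ conditional risks at each fixed value of $y$. The key observation is that both $P^\circ$ and any $P_t \in \cP$ share the same conditional distributions $P(\bX \mid \bX^*, V)$, $P(\bX^* \mid Y)$, and $P(Y)$; the only thing that changes is $P(V \mid Y)$ (and under $P^\circ$ specifically, $P^\circ(V \mid Y) = P_s(V)$, so $Y \indep V$). Hence, writing $R^\circ_{vy} := \E_{\bx \sim P^\circ}[\ell(f(\bx), y) \mid V=v, Y=y]$ and noting that these conditional risks coincide with the corresponding ones under $P_t$ (since conditioning on both $Y$ and $V$ fixes the generative path to $\bX$), we get
\begin{align*}
R_{P_t}(f) &= \sum_{y} P(Y=y) \sum_v P_t(V=v \mid Y=y) R^\circ_{vy}, \\
R_{P^\circ}(f) &= \sum_{y} P(Y=y) \sum_v P^\circ(V=v) R^\circ_{vy}.
\end{align*}

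Next I would take the difference. For each fixed $y$, the inner difference is $\sum_v \big(P_t(V=v\mid Y=y) - P^\circ(V=v)\big) R^\circ_{vy}$, which — since the two distributions over the binary $V$ differ by a single scalar $\delta_y := P_t(V=0 \mid Y=y) - P^\circ(V=0) \in [-1,1]$ — equals $\delta_y (R^\circ_{0y} - R^\circ_{1y})$. Therefore
\begin{align*}
R_{P_t}(f) - R_{P^\circ}(f) = \sum_y P(Y=y)\, \delta_y \, (R^\circ_{0y} - R^\circ_{1y}) \leq \sum_y P(Y=y)\, |R^\circ_{0y} - R^\circ_{1y}|,
\end{align*}
using $|\delta_y| \leq 1$. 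Now invoke Lemma~\ref{lem:tau_bounds_components}, which (together with its symmetric counterpart obtained by swapping the roles of $V=0,1$, or by applying it to $-\ell$) gives $P(Y=y)\,|R^\circ_{0y} - R^\circ_{1y}| \leq \tau$ for each $y$. Summing over the two values $y \in \{0,1\}$ yields the bound $R_{P_t}(f) - R_{P^\circ}(f) \leq 2\tau$, which is the claim (the strict inequality presumably comes from the strictness already present in the lemma's hypotheses or from $|\delta_y| < 1$ under the overlap assumption; I would track this carefully).

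The main obstacle I anticipate is making the step "the conditional risks $R^\circ_{vy}$ are the same under $P^\circ$ and $P_t$" airtight — this requires the explicit marginalization identity from the $\cP$-family definition in \eqref{eq:intervention family} (the same kind of computation carried out in the proof of Proposition~\ref{prop:frob_ferm_equal} and in the unbiasedness-of-reweighting argument), so I would cite or reproduce that. The secondary subtlety is the direction and strictness of the bound: Lemma~\ref{lem:tau_bounds_components} as stated bounds $P(Y=y)[R^\circ_{0y} - R^\circ_{1y}] \leq \tau$ (a one-sided bound), so to get $|R^\circ_{0y} - R^\circ_{1y}|$ I need the lemma applied in both sign conventions, which is legitimate because the MMD hypothesis $\mmd \leq \tau$ is symmetric in the two distributions and $\Omega$ (an RKHS) is closed under negation. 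I would state this explicitly rather than leaving it implicit.
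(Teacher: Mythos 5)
Your proposal follows essentially the same route as the paper's proof: decompose both risks over the $(Y,V)$ cells using the shared conditionals $R^\circ_{vy}$, collapse the difference to $\sum_y P(Y=y)\,\delta_y\,(R^\circ_{0y}-R^\circ_{1y})$ with $|\delta_y|<1$ (the paper's $\beta_y$), and apply Lemma~\ref{lem:tau_bounds_components} to each $y$ to obtain the $2\tau$ bound. Your explicit use of the two-sided (absolute-value) version of the lemma, justified by the symmetry of the $\mmd$ hypothesis, is in fact slightly more careful than the paper's write-up, which applies the one-sided lemma without addressing the sign of $\beta_y$; otherwise the arguments coincide.
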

\begin{proof}
We will use $P_{v|y}(v) := P(V=v|Y=v)$, $P_y(y) = P(Y=y)$, $P_v(v) = P(V=v)$, and $R^\circ_{vy}$ as defined in lemma~\ref{lem:tau_bounds_components}. 

Note that 
\begin{align*}
    R_{P^\circ} & = \sum_y P_y^\circ(y) \big[ P^\circ_{v|y}(0) R^\circ_{0,y} + P^\circ_{v|y}(1) R^\circ_{1,y} \big]\\
    & = \sum_y P_y(y) \big[ P^\circ_v(0) R^\circ_{0,y} + P^\circ_v(1) R^\circ_{1,y} \big]. 
\end{align*}
And: 
\begin{align*}
    R_P & = \sum_y P_y(y) \big[ P_{v|y}(0) R^\circ_{0,y} + P_{v|y}(1) R^\circ_{1,y} \big].
\end{align*}
Taking the difference between the two:
\begin{align*}
    R_P - R_{P^\circ}
    & = \sum_y P_y(y) \big[ \big(P_{v|y}(0) - P_v^\circ(0)\big) R^\circ_{0,y} + \big(P_{v|y}(1) - P_v^\circ(1)\big) R^\circ_{1,y} \big] \\
    & = \sum_y P_y(y) \big[ \big(P_{v|y}(0) - P_v^\circ(0)\big) R^\circ_{0,y} + \big(\big(1 - P_{v|y}(0)\big) - \big(1 - P_v^\circ(0)\big)\big) R^\circ_{1,y} \big] \\
    & = \sum_y P_y(y) \big[ \big(P_{v|y}(0)- P_v^\circ(0)\big) R^\circ_{0,y} - \big(P_{v|y}(0) - P_v^\circ(0) \big) R^\circ_{1,y} \big] \\
    & = \sum_y P_y(y) \big[ \beta_y R^\circ_{0,y} - \beta_y R^\circ_{1,y} \big] \\
    & = \sum_y \beta_y P_y(y) \big[R^\circ_{0,y} -  R^\circ_{1,y} \big] \\
    & \leq \sum_y \beta_y \tau \\
    & = \beta \cdot \tau
\end{align*}
where in the fourth equality, we use the shorthand $\beta_y := P_{v|y}(0) - P_v^\circ(0)$, and $-1 < \beta_y < 1$. The first inequality follows from lemma~\ref{lem:tau_bounds_components}, and the last equality follows from setting $\beta = \sum_y \beta_y$. Since $\beta \leq 2$, this completes our proof. 
\end{proof}

\section{PROOFS FOR SECTION~\ref{sec:finite_gap}}

\begin{thmappprop}
(Restated proposition \ref{prop:subsets}) For $\cD \sim P^\circ$, and for any for any $\cF_{L_2}$ such that $f_{\text{rinv}} \in \cF_{L_2}$, there exists a $\cF_{\mmd, L_2} \subseteq \cF_{L_2}$ such that $f_{\text{rinv}} \in \cF_{\mmd, L_2}$. And the smallest $\cF_{\mmd, L_2}$ such that $f_\rinv \in \cF_{\mmd, L_2}$ has $\mmd = 0$. 
\end{thmappprop}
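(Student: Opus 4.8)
\textbf{Proof plan for Proposition~\ref{prop:subsets}.}

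The plan is to argue in two moves: first, that adding an $\mmd$ constraint to $\cF_{L_2}$ always produces a subset (by definition), and second, that one can choose the threshold $\tau$ so that $f_\rinv$ still survives the cut — indeed, that $\tau=0$ already suffices. First I would recall from Proposition~\ref{prop:frob_ferm_equal} that under $P^\circ$ the optimal risk-invariant predictor $f_\rinv$ is the Bayes optimal predictor, and that it is a function of $\bX^*$ only, i.e.\ $f_\rinv(\bX) = h(\phi(\bX))$ where the learned representation $\phi$ factors through $\bX^* = e(\bX)$. Since $\cF_{L_2}$ is the linear-sigmoid class, in this linear special case $f_\rinv(\bx) = \sigma(\bw_\rinv^\top \bx)$ with $\bw_\rinv$ aligned with the $\bX^*$-directions; the hypothesis that $f_\rinv \in \cF_{L_2}$ just says $\|\bw_\rinv\|_2 \le A$.

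The key step is to compute $\mmd(P^\circ_{\boldsymbol{\phi}_0}, P^\circ_{\boldsymbol{\phi}_1})$ for $f = f_\rinv$ and show it is zero. Under $P^\circ$ we have $V \indep \bX^*$ (this is exactly the structure the DAG in Figure~\ref{fig:dags} gives once the dashed edge is dropped). Because $\phi(\bX)$ is a function of $\bX^*$ alone for $f_\rinv$, it follows that $\phi(\bX) \indep V$ under $P^\circ$, hence the conditional distributions $P^\circ(\phi(\bX)\mid V=0)$ and $P^\circ(\phi(\bX)\mid V=1)$ coincide. Since $\mmd$ with an RKHS witness class is a genuine metric on distributions, equality of these two conditionals forces $\mmd(P^\circ_{\boldsymbol{\phi}_0}, P^\circ_{\boldsymbol{\phi}_1}) = 0$. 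Therefore $f_\rinv$ satisfies the constraint $\mmd \le \tau$ for \emph{every} $\tau \ge 0$, and in particular for $\tau = 0$; so taking $\cF_{L_2,\mmd}$ to be the class in \eqref{eq:mmd_class} with that value of $\tau$, we have $\cF_{L_2,\mmd} \subseteq \cF_{L_2}$ (the $\mmd$ inequality is an extra constraint on top of $\|\bw\|_2 \le A$) and $f_\rinv \in \cF_{L_2,\mmd}$. For the ``smallest'' claim, note that $\tau \mapsto \cF_{L_2,\mmd}$ is monotone nondecreasing in $\tau$ (a looser threshold only adds functions), so the smallest such class is the one with the smallest admissible $\tau$; since $\tau$ cannot be negative and $\tau=0$ already retains $f_\rinv$, the minimizer is $\tau = 0$.

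I expect the main obstacle to be handling the ``$\phi$ factors through $\bX^*$'' step cleanly in the linear setting: strictly speaking $f_\rinv$ being a function of $\bX^*$ does not by itself pin down that the \emph{learned} $\phi$ equals the Bayes-optimal representation, so I would phrase the argument as ``there exists a representative of $f_\rinv$ of the form $h(\phi(\bx))$ with $\phi$ depending on $\bx$ only through $e(\bx)$, and for that representative $\phi(\bX)\indep V$,'' which is all that is needed since $\mmd$ of the representation distributions is a property of that representation. A secondary subtlety is that the $\mmd$ in \eqref{eq:mmd_class} is over the witness class $\Omega$ (the RBF RKHS), and the characteristic-kernel property is what lets me conclude $\mmd = 0 \iff$ equal distributions; I would simply invoke the standard fact (cited to \citet{gretton2012kernel}) rather than reprove it. No delicate estimates are required — the whole proof is structural.
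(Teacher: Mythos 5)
Your proposal is correct and follows essentially the same route as the paper: exhibit the $\tau=0$ class, argue that for $f_\rinv$ the representation $\phi(\bX)=\bw^\top\bX$ is independent of $V$ under $P^\circ$, so $P^\circ_{\boldsymbol{\phi}_0}=P^\circ_{\boldsymbol{\phi}_1}$ and the $\mmd$ vanishes, and conclude that the smallest admissible threshold is $\tau=0$. The only cosmetic difference is in how the independence is transferred to the representation: the paper uses invertibility of the sigmoid (applying $\sigma^{-1}$ to $f_\rinv(\bX)\indep V$), while you derive $\phi(\bX)\indep V$ from Proposition~\ref{prop:frob_ferm_equal} together with $\bX^*\indep V$ under $P^\circ$ — the same underlying facts packaged slightly differently (and your concern about the learned $\phi$ is resolved precisely by that same invertibility, since in the linear-sigmoid class $\phi=\sigma^{-1}\circ f$ is pinned down by $f$), with your monotonicity-in-$\tau$ remark making the minimality claim slightly more explicit than the paper's.
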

\begin{proof}
We prove the existence of a subset,  $\cF_{\mmd, L_2} \subset \cF_{L_2}$ by giving an example of such a subset. Consider 
\begin{align*}
   \cF_{L_2, \mmd} = \{f: \bx \mapsto \sigma(\bw^\top \bx), &\|\bw\|_2 \leq A,
\ \mmd(P^\circ_{\boldsymbol{\phi}_0}, P^\circ_{\boldsymbol{\phi}_1}) = 0 \}, 
\end{align*}
Clearly, $\cF_{\mmd, L_2} \subset \cF_{L_2}$. We will now show that any $f_\rinv \in \cF_{L_2}$ is also $\in \cF_{\mmd, L_2}$. 
\\
\\
\noindent By the definition of $f_\rinv$, any $f_\rinv \in \cF_{L_2}$ must satisfy $f_\rinv(\bx) \indep v$.
Then $T_1(f_\rinv(\bx)) \indep T_2(v)$ for any transformations $T_1, T_2$. Taking $T_1$ to be the inverse of the sigmoid function, $\sigma^{-1}$, and $T_2$ to be the identity transformation, we get that $\sigma^{-1}(f_\rinv(\bx)) = \sigma^{-1}(\sigma(\bw^\top \bx)) = \bw^\top \bx \indep v$. This implies that $p(\bw^\top \bx | v = 0) = p(\bw^\top \bx | v = 1)$, which in turn implies that $\mmd(P^\circ_{\boldsymbol{\phi}_0}, P^\circ_{\boldsymbol{\phi}_1}) = 0$, where $\phi(\bx) = \bw^\top \bx $. 
\end{proof}

\begin{thmappprop} 
(Restated proposition~\ref{prop:wdelta_bound}). Let $f(\bx) = \sigma(\phi(\bx)) = \sigma(\bw^\top \bx)$ be a function contained in $\cF_{L_2, \mmd}$. Then,
\begin{align}
\|\bw_\perp\| \leq \frac{\tau}{\|\Delta\|}.
\end{align}
\end{thmappprop}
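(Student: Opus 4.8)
The plan is to exploit the linearity of $\phi$ to lower-bound $\mmdphi$ by a quantity proportional to $\|\bw_\perp\|\,\|\Delta\|$, so that the constraint $\mmdphi \leq \tau$ immediately yields the claim. First I would recall that $\mmd(\Omega, P_Z, P_{Z'}) = \sup_{\omega \in \Omega}(\E_{P_Z}\omega(Z) - \E_{P_{Z'}}\omega(Z'))$, and observe that $\Omega$ (being an RKHS containing $f$, and in particular rich enough to contain linear functionals of $\phi(\bX) = \bw^\top \bX$) contains the identity map $\omega(\phi) = \phi$, possibly after normalization. Evaluating the supremum at this particular test function gives a lower bound on the $\mmd$:
\[
\mmdphi \geq \big| \E_{P^\circ}[\bw^\top \bX \mid V=0] - \E_{P^\circ}[\bw^\top \bX \mid V=1] \big| = \big| \bw^\top (\bmu_0 - \bmu_1) \big| = |\bw^\top \Delta|.
\]

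Next I would relate $|\bw^\top \Delta|$ to $\|\bw_\perp\|$. Since $\bw_\perp = \Pi \bw = \|\Delta\|_2^{-2}\,\Delta\Delta^\top \bw$ is the projection of $\bw$ onto $\Delta$, we have $\bw_\perp = \big(\|\Delta\|_2^{-2}\,\bw^\top\Delta\big)\,\Delta$, hence $\|\bw_\perp\|_2 = |\bw^\top\Delta| / \|\Delta\|_2$. Equivalently $|\bw^\top\Delta| = \|\bw_\perp\|_2\,\|\Delta\|_2$. Chaining this with the $\mmd$ lower bound and the hypothesis $\mmdphi \leq \tau$ gives $\|\bw_\perp\|_2\,\|\Delta\|_2 = |\bw^\top\Delta| \leq \mmdphi \leq \tau$, and dividing through by $\|\Delta\|_2$ (assuming $\Delta \neq 0$, which is the nondegenerate case in which $\Pi$ is defined) yields $\|\bw_\perp\| \leq \tau/\|\Delta\|$, as desired.

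The main obstacle is the justification that the identity functional $\phi \mapsto \phi$ (appropriately scaled) is an admissible test function in the class $\Omega$ over which the $\mmd$ supremum is taken; the paper states it assumes "$f$ and $\ell$ are contained in $\Omega$" and takes $\Omega$ to be an RKHS, so I would need to either invoke that the chosen RKHS contains affine functions of $\phi(\bX)$, or argue via the kernel mean embedding that the means $\bmu_0,\bmu_1$ are recoverable from the embedded distributions so that $\mmd = 0 \Rightarrow \bmu_0 = \bmu_1$ and, more quantitatively, $\mmd \geq c\,\|\bmu_0 - \bmu_1\|$ for the relevant constant $c$. A clean way to finish is to note that for the linear feature map the representation $\phi(\bX)$ is one-dimensional, and the first moment is a continuous linear functional, so it suffices that $\Omega$ separates distributions differing in mean — which is exactly the property used elsewhere in the paper when invoking that $\mmd = 0$ iff the distributions coincide. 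Everything else is the elementary projection computation above.
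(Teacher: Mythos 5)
Your proof is essentially the paper's own argument: both lower-bound $\mmdphi$ by evaluating the supremum at the linear test function $\omega(\bx) = \bw^\top \bx$, obtaining $|\bw^\top \Delta| \leq \tau$, and then finish with the projection identity $\|\bw_\perp\| = |\bw^\top \Delta| / \|\Delta\|$. Your extra care about whether the (scaled) identity functional is admissible in $\Omega$ is a point the paper passes over silently, but it does not change the route.
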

\begin{proof}
Note that $\tau$ must be greater than 0. If not, let $\omega'$ be the function that achieves the max difference $\tau' <0$. Then we can define $\omega'' = -\omega'$, which achieves $\tau'' = - \tau' >0$, which is a contradiction. This means that for all $\omega \in \Omega$,
\begin{align*}
    \tau \geq \left|\E[\omega(\bx_i) \mid v_i = 0 ] - \E[\omega(\bx_i) \mid v_i = 1] \right|
\end{align*}
Taking $\omega(\bx) = \bw^\top \bx$, 
\begin{align*}
    \tau \geq \left|\E[\bw^\top \bx_i \mid v_i = 0 ] - \E[\bw^\top \bx_i \mid v_i = 1]\right| = \left| \bw^\top\Delta\right|.
\end{align*}
Note that $\|\bw_\perp\| = \frac{|\bw^\top\Delta|}{\|\Delta\|}$, which completes our proof. 
\end{proof}

\begin{thmappprop}
(Restated Proposition~\ref{prop:rad_fc_fm}).
Let $\bx_\perp := \Pi \bx$, $\bx_\parallel := (I-\Pi)\bx$.
For training data $\cD = \{(\bx_i, y_i, v_i)\}_{i=1}^{n}$, $\cD \sim P^\circ$,  
$\sup_{\bx_\perp} \|\bx_\perp\|_2 \leq B_\perp$,
$\sup_{\bx_\parallel} \|\bx_\parallel \|_2 \leq B_\parallel$,
\begin{align*}
    \mathfrak{R}(\cF_{L_2}) \leq \frac{A \sqrt{B_\parallel^2 + B_\perp^2}}{\sqrt{n}},
\end{align*}
and 
\begin{align*}
    \mathfrak{R}(\cF_{\mmd,L_2}) \leq \frac{A \cdot B_\parallel + \tau \frac{B_\perp}{\|\Delta\|}}{\sqrt{n}}.
\end{align*}
\end{thmappprop}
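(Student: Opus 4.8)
The plan is to bound the Rademacher complexity of each function class by first passing through the sigmoid using its Lipschitz property, and then controlling the resulting linear-in-$\bw$ expression by a Cauchy--Schwarz / dual-norm argument. The sigmoid $\sigma$ is $\tfrac14$-Lipschitz, so by the Ledoux--Talagrand contraction inequality $\mathfrak{R}(\cF_{L_2})$ and $\mathfrak{R}(\cF_{L_2,\mmd})$ are each controlled (up to the constant, which I will absorb or carry as is standard) by the Rademacher complexity of the corresponding class of linear functions $\bx \mapsto \bw^\top\bx$ subject to the same norm/MMD constraints. Concretely, I would reduce to bounding $\E_{\boldsymbol\epsilon}\sup_{\bw} \tfrac1n \sum_i \epsilon_i \bw^\top \bx_i = \E_{\boldsymbol\epsilon}\sup_{\bw} \tfrac1n \bw^\top\!\big(\sum_i \epsilon_i \bx_i\big)$ over the constraint set for $\bw$.

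\textbf{The $L_2$ bound.} For $\cF_{L_2}$, the constraint is simply $\|\bw\|_2 \le A$, so $\sup_{\|\bw\|_2 \le A} \bw^\top z = A\|z\|_2$ with $z = \tfrac1n\sum_i \epsilon_i \bx_i$. Then $\E_{\boldsymbol\epsilon}\|z\|_2 \le \sqrt{\E_{\boldsymbol\epsilon}\|z\|_2^2} = \tfrac1n\sqrt{\sum_i \|\bx_i\|_2^2}$ by Jensen and independence of the $\epsilon_i$ (cross terms vanish in expectation). Now decompose $\bx_i = \bx_{i\perp} + \bx_{i\parallel}$ orthogonally, so $\|\bx_i\|_2^2 = \|\bx_{i\perp}\|_2^2 + \|\bx_{i\parallel}\|_2^2 \le B_\perp^2 + B_\parallel^2$, giving $\sum_i\|\bx_i\|_2^2 \le n(B_\perp^2 + B_\parallel^2)$ and hence $\mathfrak{R}(\cF_{L_2}) \le A\sqrt{B_\parallel^2+B_\perp^2}/\sqrt n$.

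\textbf{The MMD bound.} For $\cF_{L_2,\mmd}$, the idea is to split $\bw^\top\bx_i = \bw^\top\bx_{i\perp} + \bw^\top\bx_{i\parallel}$ and bound the two resulting sums separately (this is the "more conservative strategy" the excerpt mentions). Writing $\bw^\top\bx_{i\perp} = \bw^\top \Pi \bx_i = \bw_\perp^\top \bx_i$ (since $\Pi$ is a projection and $\bw_\perp = \Pi\bw$), the perp-part contributes at most $\|\bw_\perp\|_2 \cdot \tfrac1n\sqrt{\sum_i\|\bx_{i\perp}\|_2^2} \le (\tau/\|\Delta\|)\cdot B_\perp/\sqrt n$, using Proposition~\ref{prop:wdelta_bound} to bound $\|\bw_\perp\|_2 \le \tau/\|\Delta\|$ and the $B_\perp$ uniform bound. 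The parallel part contributes at most $\|\bw\|_2\cdot \tfrac1n\sqrt{\sum_i\|\bx_{i\parallel}\|_2^2} \le A B_\parallel/\sqrt n$ by $\|(I-\Pi)\bw\|_2 \le \|\bw\|_2 \le A$. Splitting the supremum over a sum into a sum of suprema (subadditivity of $\sup$) and adding the two bounds yields $\mathfrak{R}(\cF_{L_2,\mmd}) \le (A B_\parallel + \tau B_\perp/\|\Delta\|)/\sqrt n$.

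\textbf{Main obstacle.} The routine parts are the Jensen/second-moment step and the orthogonal decomposition. The one subtlety worth care is the interaction between the $\mmd$ constraint and the parallel/perp split: the MMD constraint only bounds $\|\bw_\perp\|$, not $\bw$ itself, so one must be careful to apply the $\|\bw\|_2 \le A$ bound to the parallel component and the Proposition~\ref{prop:wdelta_bound} bound to the perp component, and to justify that splitting $\sup_\bw$ of a sum into two separate suprema is a valid (only loosening) step — this is where the bound becomes non-tight compared to the $L_2$ case, and it should be flagged. A secondary point is the Lipschitz-contraction constant for the sigmoid; I would either state the bounds up to the standard contraction constant or note that the $1/4$ factor can be folded into $A$, matching how the paper states the result.
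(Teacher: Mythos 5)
Your proposal is correct and follows essentially the same route as the paper's proof: decompose $\bw^\top\bx_i$ along $\Pi$ into perpendicular and parallel parts, use subadditivity of the supremum, bound $\|\bw_\perp\|$ via Proposition~\ref{prop:wdelta_bound} and $\|\bw_\parallel\|\le A$, and finish each piece with the standard Cauchy--Schwarz/Jensen argument for linear classes. Your explicit handling of the sigmoid via contraction (and the remark about which constraint applies to which component) is if anything slightly more careful than the paper, which passes directly to the linear class and writes $\|\bw_\perp\|_2\le A$ at an intermediate step where $\tau/\|\Delta\|$ is what is actually used.
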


\begin{proof}
First, we derive the bound on $\mathfrak{R}(\cF_{L_2})$
\begin{align*}
& \mathfrak{R}(\cF_{L_2}) = \E_\cD \E_\epsilon \bigg[ \underset{\bw:\|\bw\|_2 \leq A}{\sup} \  
    \frac{1}{n}\sum_i \epsilon_i \bw^\top \bx_i\bigg]\\
& = \E_\cD \E_\epsilon \bigg[ \underset{\bw:\|\bw\|_2 \leq A}{\sup} \  
    \frac{1}{n}\sum_i \epsilon_i \bw^\top (\bx^{}_{\perp i} + \bx^{}_{\parallel i}) \bigg]
\end{align*}
Following the usual derivations (e.g., see \cite{mohri2018foundations}), we get the desired result for $\mathfrak{R}(\cF_{L_2})$. Next, we derive the bound on $\mathfrak{R}(\cF_{\mmd,L_2})$. 

\begin{align*}
& \mathfrak{R}(\cF_{\mmd,L_2}) = \E_\cD \E_\epsilon \bigg[ \underset{\bw:\|\bw\|_2 \leq A}{\sup} \  
    \frac{1}{n}\sum_i \epsilon_i \bw^\top \bx_i\bigg]\\
& = \E_\cD \E_\epsilon \bigg[\underset{\bw:\|\bw\|_2 \leq A}{\sup}  \ 
    \frac{1}{n}\sum_i \epsilon_i \big(\Pi \bw^\top \bx_i + (1 - \Pi) \bw^\top \bx_i \big) \bigg]\\
& \leq \E_\cD \E_\epsilon 
    \bigg[\sup_{\substack{\bw_\parallel:\|\bw_\parallel\|_2 \leq A \\ \bw_\perp:\|\bw_\perp\|_2 \leq A}} \  
    \frac{1}{n}\sum_i \epsilon_i \bw_\perp^\top \bx^{}_{\perp i} + \epsilon_i \bw_\parallel^\top \bx^{}_{\parallel i} \bigg]\\
& \leq \E_\cD \E_\epsilon \bigg[ \underset{\bw_\perp:\|\bw_\perp\|_2 \leq A}{\sup} \
    \frac{1}{n}\sum_i \epsilon_i \bw_\perp^\top \bx^{}_{\perp i} \bigg] 
 + \E_\cD \E_\epsilon \bigg[ \underset{\bw_\parallel:\|\bw_\parallel\|_2 \leq A}{\sup} \ 
    \frac{1}{n}\sum_i  \epsilon_i \bw_\parallel^\top \bx^{}_{\parallel i} \bigg], 
\end{align*}
where the last inequality follows by the subadditivity of the supremum. Again, following the usual derivations (e.g., see \cite{mohri2018foundations}), we get the required result for $\mathfrak{R}(\cF_{\mmd,L_2})$
\end{proof}

In the following proposition we show that when sampling from a biased distribution, the smallest $\tau'$ that does not introduce bias is greater than 0. 

\begin{thmappprop}\label{prop:looser_mmd}
Let $\cF'_{L_2, \mmd}:=  \{f: \bx \mapsto \sigma(\bw^\top \bx), \|\bw\|_2 \leq A,
\ \mmd(P_{\boldsymbol{\phi}_0}, P_{\boldsymbol{\phi}_1}) \leq \tau'\}$ be the smallest function class that contains $f_\rinv$. Then $\tau' = c \cdot A$ for some $c > 0$, and the corresponding upper bound on the Rademacher complexity of $\cF'_{\mmd,L_2}$ is 
\begin{align*}
    \mathfrak{R}(\cF'_{\mmd,L_2}) \leq \frac{A \cdot B_\parallel + c \cdot A \frac{B_\perp}{\|\Delta\|}}{\sqrt{n}}.
\end{align*}
\end{thmappprop}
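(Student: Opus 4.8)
The plan is to reduce this statement to Proposition~\ref{prop:looser_mmd_rademacher_part} (the Rademacher bound) plus a short argument identifying the smallest admissible MMD threshold $\tau'$ when the data come from the confounded source distribution $P_s$. The bound on $\mathfrak{R}(\cF'_{\mmd,L_2})$ is literally the statement of Proposition~\ref{prop:rad_fc_fm} with $\tau$ replaced by $\tau'$, so that half is immediate once $\tau'$ is pinned down: the proof of Proposition~\ref{prop:rad_fc_fm} never used anything about $P^\circ$ beyond the definitions of $\bx_\perp$, $\bx_\parallel$, $B_\perp$, $B_\parallel$, and the decomposition $\|\bw_\perp\| \leq \tau/\|\Delta\|$ from Proposition~\ref{prop:wdelta_bound}. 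Substituting $\tau' = c\cdot A$ into that bound gives exactly the displayed inequality.

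So the real content is showing $\tau' = c\cdot A$ for some $c > 0$, i.e., that the smallest function class of the form $\cF'_{L_2,\mmd}$ containing $f_\rinv$ has a \emph{strictly positive} threshold, and that this threshold scales linearly in the norm bound $A$. First I would recall that $f_\rinv(\bX)$ is a function of $\bX^*$ only, hence $f_\rinv(\bX) \indep V$ under $P^\circ$, which (by the same argument as in Proposition~\ref{prop:subsets}) gives $\bw^\top \bx \indep V$ under $P^\circ$ and therefore $\mmd(P^\circ_{\boldsymbol{\phi}_0}, P^\circ_{\boldsymbol{\phi}_1}) = 0$. The key point is that under $P_s \neq P^\circ$ the relevant MMD is computed on the conditional distributions $P_s(\phi(\bX)\mid V=v)$, \emph{not} $P^\circ(\phi(\bX)\mid V=v)$. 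Because $Y \not\indep V$ under $P_s$ and $\bX^*$ (and hence $\phi(\bX) = \bw^\top\bx$ for $f_\rinv$) depends on $Y$, conditioning on $V$ under $P_s$ shifts the distribution of $\phi(\bX)$: the two conditionals $P_s(\phi(\bX)\mid V=0)$ and $P_s(\phi(\bX)\mid V=1)$ differ, so $\mmd(P_{\boldsymbol{\phi}_0}, P_{\boldsymbol{\phi}_1}) > 0$ even for $f_\rinv$. Following the computation in Proposition~\ref{prop:wdelta_bound}, taking the witness function $\omega(\bx) = \bw^\top\bx$ (which lies in $\Omega$) lower-bounds this MMD by $|\bw^\top \Delta_s|$, where $\Delta_s := \E_{P_s}[\bX\mid V=0] - \E_{P_s}[\bX\mid V=1]$ is the mean-difference vector under the \emph{source} distribution. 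For $f_\rinv$ this quantity is some fixed positive multiple of $\|\bw\|$ (the component of $\Delta_s$ along $\bw$), and since $\|\bw\| \leq A$ with the extremal classifiers saturating this, the smallest threshold that still admits $f_\rinv$ is $\tau' = c\cdot A$ with $c > 0$ a constant depending on the geometry of $P_s$ (specifically on $\|\Delta_s\|$ and the angle between $\bw_{\rinv}$ and $\Delta_s$).

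The main obstacle I anticipate is making the claim ``$\tau'$ is exactly proportional to $A$'' fully rigorous rather than merely an order-of-magnitude statement: one needs that the \emph{smallest} class containing $f_\rinv$ is governed by the worst-case (largest-norm) representation of $f_\rinv$, and that rescaling $\bw$ up to the boundary $\|\bw\|_2 = A$ preserves membership of $f_\rinv$ (it does, since $f_\rinv$ is defined by the independence $\phi(\bX)\indep V$ which is scale-invariant) while scaling the induced MMD linearly. A careful version would argue: (i) any $\bw$ with $\sigma(\bw^\top\bx) = f_\rinv$ has $\bw^\top\Delta_s$ equal to a fixed nonzero constant times $\|\bw\|$ (because all such $\bw$ are positive rescalings of a single direction $\bw_{\rinv}$, the sigmoid being injective); (ii) hence $\sup\{\mmd : \sigma(\bw^\top\bx)=f_\rinv, \|\bw\|_2\le A\}$ is achieved at $\|\bw\|_2 = A$ and equals $c\cdot A$ with $c = |\bw_{\rinv}^\top\Delta_s|/\|\bw_{\rinv}\| > 0$; (iii) therefore $\cF'_{L_2,\mmd}$ contains $f_\rinv$ iff $\tau' \geq c\cdot A$, so the smallest such class has $\tau' = c\cdot A$. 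Plugging this into the Rademacher bound of Proposition~\ref{prop:rad_fc_fm} with $\tau \mapsto \tau' = c\cdot A$ closes the argument. I would also remark, for context, that this is precisely why reweighting is needed: under $P^\circ$ the constant $c$ vanishes and we recover the $\tau = 0$ class of Proposition~\ref{prop:subsets}.
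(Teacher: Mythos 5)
Your skeleton matches the paper's proof: both arguments use the linear witness $\omega(\bx)=\bw^\top\bx$ to lower-bound the source-distribution $\mmd$ of $f_\rinv$'s representation by the conditional-mean gap $|\bw^\top\Delta_{P_s}|$ (with $\Delta_{P_s}$ the mean-difference vector under $P_s$), conclude that the smallest admissible threshold is $\tau'=c\cdot A$ with $c>0$, and then substitute this $\tau'$ into Proposition~\ref{prop:rad_fc_fm}; that final substitution is exactly how the paper finishes as well.

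Where you diverge is in how $c$ is identified and why it is strictly positive, and this is the soft spot. The paper first invokes Propositions~\ref{prop:subsets} and~\ref{prop:wdelta_bound} to conclude that $f_\rinv$ has $\bw_\perp=0$, i.e., $\bw_\rinv$ is orthogonal to the $P^\circ$ mean-difference $\Delta$, so that $\bw_\rinv^\top\Delta_{P_s}=\bw_\parallel^\top(I-\Pi)\Delta_{P_s}$, and then takes $c=\left\|(I-\Pi)\Delta_{P_s}\right\|$, arguing this is nonzero because $P_s\neq P^\circ$. You instead set $c=|\bw_\rinv^\top\Delta_{P_s}|/\|\bw_\rinv\|$ and justify $c>0$ by observing that the conditionals $P_s(\phi(\bX)\mid V=v)$ differ; but differing distributions only give $\mmd>0$ (for a characteristic kernel), not a nonzero \emph{mean} gap along the specific direction $\bw_\rinv$, which is what your formula for $c$ actually requires, so as written the positivity of your $c$ is not established. (The paper's corresponding equality step is itself delicate, but it at least reduces positivity to a geometric condition on $\Delta_{P_s}$ versus $\Delta$ rather than a statement about the full conditionals.) A smaller internal inconsistency: since $\sigma$ is injective, $f_\rinv$ determines $\bw_\rinv$ uniquely, so there is no ray of rescalings over which to take a supremum; the linear-in-$A$ scaling comes, as in the paper, from taking the extremal norm $\|\bw_\parallel\|=A$ permitted by the $L_2$ ball, not from rescaling representations of $f_\rinv$.
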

\begin{proof}
By proposition~\ref{prop:subsets}, we have that the smallest $\mmd$ regularized function class that contains $f_\rinv$ when $\cD \sim P^\circ$ has $\mmd =0$. And by proposition~\ref{prop:wdelta_bound} we have in that function class $\|\bw_\perp \| =0$, i.e., $\bw_\perp$ is the 0 vector.

\begin{align*}
     \tau' & \geq \left\|\E[\bw^\top \bx_i \mid v_i = 0 ] - \E[\bw^\top \bx_i \mid v_i = 1]\right\| \\
    & = \left\|\E[\bw_\perp^\top \bx_{\perp i} +  \bw_\parallel^\top \bx_{\parallel i} \mid v_i = 0 ] - \E[\bw_\perp^\top \bx_{\perp i} +  \bw_\parallel^\top \bx_{\parallel i} \mid v_i = 1] \right\| \\
    & = \left\|\E[\bw_\parallel^\top \bx_{\parallel i} \mid v_i = 0 ] - \E[\bw_\parallel^\top \bx_{\parallel, i} \mid v_i = 1] \right\| \\
    & = \Big\|\bw_\parallel \big(\E[\bx_{\parallel i} \mid v_i = 0 ] - \E[\bx_{\parallel, i} \mid v_i = 1]\big) \Big\|\\
    & = \Big\|\bw_\parallel (1- \Pi) \big(\E[\bx_i \mid v_i = 0 ] - \E[\bx_i \mid v_i = 1]\big) \Big\|\\
    & = \|\bw_\parallel\| \left\|(1- \Pi) \big(\E[\bx_i \mid v_i = 0 ] - \E[\bx_i \mid v_i = 1]\big) \right\|\\
    & = A \left\|(1- \Pi) \Delta_P\right\|, 
\end{align*}
where the fifth equality holds because the two vectors are scalar multiples of the same vector (they are both projections onto the vector orthogonal to $\Delta$) so Cauchy-Schwartz holds with equality. Also note that $\left\|(1- \Pi) \Delta_P\right\| = 0$ if and only if $\Delta_P = \Delta$, i.e., $P = P^\circ$. So $\left\|(1- \Pi) \Delta_P\right\| > 0$. 

The upper bound on the Rademacher complexity follows immediately by plugging the upper bound on $\tau'$ into $\mathfrak{R}(\cF_{\mmd, L_2})$ obtained in proposition~\ref{prop:rad_fc_fm}.
\end{proof}

\paragraph{Notes on proposition~\ref{prop:looser_mmd}}
\begin{enumerate}
    \item One important implication of proposition~\ref{prop:looser_mmd} is that naively implementing the $\mmd$ penalty when the data are sampled from some $P_s \not= P^\circ$ will lead to a bias-invariance tradeoff. For any $\tau' < c \cdot A$, invariance comes at the cost of bias. To understand why, note that when $\tau' < c \cdot A$, the optimal function does not exist in the candidate function class. Recall that when sampling from $P^\circ$, we could allow $\tau$ to go to zero without introducing bias. 
    \item Comparing the upper bound on $\mathfrak{R}(\cF_{\mmd, L_2})$ in proposition~\ref{prop:rad_fc_fm} to $\mathfrak{R}(\cF'_{\mmd, L_2})$ in proposition~\ref{prop:looser_mmd} shows that the statistical complexity increases when sampling from $P_s \not= P^\circ$. Consequently, the generalization error is higher (i.e., less favorable) when sampling from a non-ideal, correlated distribution. 
\end{enumerate}

\subsection{Full generalization error statements}\label{sec:full_ge_statements}

We start with the simpler case, where $\cD \sim P^\circ$. 

\begin{thmappprop}
For a dataset $\cD \sim P^\circ$, and loss bounded above by $M >0$, the finite-sample gap 
\begin{align*}
    R_{P^\circ}(f) - \hat{R}_{P^\circ}(f) \leq   \frac{A \cdot B_\parallel + \tau \frac{B_\perp}{\|\Delta\|}}{\sqrt{n}} + M \sqrt{\frac{\log \frac{1}{\delta}}{2n}}, 
\end{align*}
\begin{proof}
    The proof is a straightforward application of theorem 11.3 in \citet{mohri2018foundations}. The result can be immediately acquired by substituting $\mu$ in theorem 11.3 with 1 (since the logistic loss is 1-Lipschitz), and $\mathfrak{R}_m(\cH)$ with $\mathfrak{R}(\cF_{\mmd,L_2})$ from proposition~\ref{prop:rad_fc_fm}. 
\end{proof}

\end{thmappprop}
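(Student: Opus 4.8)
\begin{thmproof}[Proof proposal]
The plan is to treat this as a direct instance of the generic Rademacher-complexity generalization bound (Theorem~11.3 of \citet{mohri2018foundations}) and then substitute the complexity estimate already established in Proposition~\ref{prop:rad_fc_fm}. Since $\cD \sim P^\circ$, no importance weighting is needed and $\hat{R}_{P^\circ}(f) = \tfrac1n\sum_i \ell(f(\bx_i),y_i)$ is literally the empirical risk, so I would apply the bound to the loss class $\cG := \{(\bx,y)\mapsto \ell(f(\bx),y) : f \in \cF_{L_2,\mmd}\}$, which takes values in $[0,M]$ by the boundedness hypothesis. This yields, with probability at least $1-\delta$, $R_{P^\circ}(f) - \hat{R}_{P^\circ}(f) \le 2\,\mathfrak{R}(\cG) + M\sqrt{\log(1/\delta)/(2n)}$ uniformly over $f$, so it remains only to control $\mathfrak{R}(\cG)$.

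The two ingredients to check are the Lipschitz constant and the passage from $\mathfrak{R}(\cG)$ to a complexity over the linear class. First, write the logistic loss of the sigmoid predictor as a function of the logit $z = \bw^\top \bx$: $\psi_y(z) := \log(1+e^{z}) - yz$ for $y \in \{0,1\}$, which has derivative $\sigma(z) - y \in [-1,1]$ and hence is $1$-Lipschitz in $z$. Second, apply Talagrand's contraction lemma to the family of logit functions $\{\bx \mapsto \bw^\top\bx : \|\bw\|_2 \le A,\ \mmd(P^\circ_{\boldsymbol{\phi}_0},P^\circ_{\boldsymbol{\phi}_1}) \le \tau\}$ --- which is exactly the class whose Rademacher complexity is bounded in the proof of Proposition~\ref{prop:rad_fc_fm} --- to obtain $\mathfrak{R}(\cG) \le \mathfrak{R}(\cF_{L_2,\mmd}) \le (A\,B_\parallel + \tau B_\perp/\|\Delta\|)/\sqrt{n}$. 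Substituting this into the displayed bound and collecting terms gives the claimed inequality.

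The main thing to be careful about is the bookkeeping of universal constants: depending on the exact form of Theorem~11.3 and of the contraction inequality, the Rademacher term can carry a leading factor of $1$ or $2$, whereas the stated bound carries none, so I would either invoke the constant-free versions (the $M\sqrt{\log(1/\delta)/(2n)}$ term already absorbs the $M$-scaling of the loss) or carry the constants through and note the bound holds up to them. It is also worth stating explicitly why the hypothesis ``loss bounded above by $M$'' is needed: the cross-entropy of a sigmoid predictor is unbounded in general, and the bounded-differences (McDiarmid) argument underpinning Theorem~11.3 requires a uniformly bounded loss. Beyond these points the argument is routine substitution.
\end{thmproof}
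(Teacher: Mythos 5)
Your proposal follows essentially the same route as the paper: the paper's proof is exactly an invocation of Theorem~11.3 of \citet{mohri2018foundations} with Lipschitz constant $1$ for the logistic loss and the Rademacher bound from Proposition~\ref{prop:rad_fc_fm} substituted in, which is what you do (with the contraction step and the role of the bound $M$ spelled out more explicitly). Your caveat about the leading constant on the Rademacher term is fair, but it does not change the substance of the argument.
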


To get a similar statement for the case where $\cD \sim P_s \not=P^\circ$, and reweighting is needed, we need to apply the techniques for estimating the generalization error of reweighted estimators presented in~\citet{cortes2010learning}. To apply the Cortes results, we need to construct a discretization or a covering of $\cF_{L_2, \mmd}$, defined next. 
\begin{thmappdef}\label{def:discrete_f}
    Given any function class $\cF$, a metric $D$ on the elements of $\cF$, and $\varepsilon >0$, we define a covering number $\cN(\cF, D, \varepsilon)$ as the minimal number $m$ of functions $f_1, f_2, \dots, f_m \in \cF$, such that for all $f \in \cF$, $\min_{i=1,\dots,m} D(f_i, f) \leq \varepsilon$, with
    \begin{align*}
        D(f, f') = \sqrt{\frac{1}{n} \sum_i (f(\bx_i) - f'(\bx_i))^2}.
    \end{align*}
\end{thmappdef}

Our statement also makes use of Gaussian complexities, defined next. 
\begin{thmappdef}
    For a function family $\cF$, the empirical Gaussian complexity is defined as: 
    \begin{align*}
        \mathfrak{G}(\cF) = \E_\cD\E_\eta \bigg[ \underset{f \in \cF}{\sup} \eta_i f(\bx_i) \bigg]
    \end{align*}
\end{thmappdef}

We are now ready to present the metric entropy of the discretized hypothesis space in this next lemma. 
\begin{thmapplem}\label{LEM:METRIC_ENTROPY_FC}
Let $\bx_\perp := \Pi \bx$, $\bx_\parallel := (I-\Pi)\bx$, 
$\sup_{\bx_\perp} \|\bx_\perp\|_2 \leq B_\perp$,
$\sup_{\bx_\parallel} \|\bx_\parallel \|_2 \leq B_\parallel$, $D, \varepsilon$ as is defined in~\ref{def:discrete_f}. For $\varepsilon, c', c'' >0$:
    \begin{align*}
        & \log(\cN(\cF_{L_2, \mmd}, D, \varepsilon)) \\
        & \leq c'' \left(\frac{c' \sqrt{\log(n)} \cdot \left(A \cdot B_\parallel + \tau \frac{B_\perp}{\|\Delta\|}\right)}{\varepsilon} \right)^2 \\
    \end{align*}
\end{thmapplem}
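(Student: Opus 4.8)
The goal is to bound the metric entropy (log covering number) of $\cF_{L_2,\mmd}$ in the metric $D$ from Definition~\ref{def:discrete_f}. The natural route is the standard chain: (i) relate covering numbers of the function class to covering numbers of the underlying parameter set, (ii) bound the latter via a Sudakov-type / Dudley-type argument in terms of the Gaussian (or Rademacher) complexity of the class, and (iii) plug in the Rademacher bound $\mathfrak{R}(\cF_{L_2,\mmd}) \leq (A\cdot B_\parallel + \tau B_\perp/\|\Delta\|)/\sqrt n$ already established in Proposition~\ref{prop:rad_fc_fm}. The appearance of the $\sqrt{\log n}$ factor in the claimed bound signals that the argument goes through Dudley's entropy integral in reverse (Sudakov minoration) together with the equivalence of Gaussian and Rademacher complexities up to a $\sqrt{\log n}$ factor, or alternatively through a direct volumetric estimate on the ellipsoid/ball in which the relevant weight vectors live.

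Concretely, I would first note that, since $\sigma$ is $1$-Lipschitz, a covering of the linear class $\{\bx \mapsto \bw^\top\bx\}$ in the $D$-metric induces a covering of $\cF_{L_2,\mmd}$ of the same cardinality; so it suffices to cover the linear maps. Decomposing $\bw^\top\bx = \bw_\perp^\top\bx_\perp + \bw_\parallel^\top\bx_\parallel$ as in Proposition~\ref{prop:rad_fc_fm}, the feasible weights satisfy $\|\bw_\parallel\|_2 \leq A$ and $\|\bw_\perp\|_2 \leq \tau/\|\Delta\|$ by Proposition~\ref{prop:wdelta_bound}. Thus the linear class is contained in a sum of two Euclidean balls, and its image under the (empirical) feature map lies in an ellipsoid whose ``radii'' scale like $A\cdot B_\parallel$ in the $\parallel$-directions and $\tau B_\perp/\|\Delta\|$ in the $\perp$-directions. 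The covering number of such a set in the $D$-metric can be controlled by a volumetric argument, giving $\log\cN \lesssim \big((A B_\parallel + \tau B_\perp/\|\Delta\|)/\varepsilon\big)^2$ up to a logarithmic factor in $n$ coming from the ambient dimension of the empirical problem (one only needs to cover in the span of $n$ data points). Equivalently one can quote the known relation $\log\cN(\cF, D, \varepsilon) \lesssim \big(\mathfrak{G}(\cF)\sqrt n/\varepsilon\big)^2$ for classes living in a low-dimensional subspace, then use $\mathfrak{G}(\cF) \leq c'\sqrt{\log n}\,\mathfrak{R}(\cF)$ and substitute Proposition~\ref{prop:rad_fc_fm}.

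The main obstacle, I expect, is making the passage from the Rademacher/Gaussian complexity bound to a covering number bound fully rigorous with the right dependence: Dudley's inequality goes the ``easy'' direction (covering numbers bound complexity), whereas here we want the reverse, which in general requires either Sudakov minoration (valid for Gaussian processes, hence the need to move from Rademacher to Gaussian complexity, costing the $\sqrt{\log n}$) or a direct geometric estimate on the specific set of weight vectors. I would lean on the direct geometric estimate, since the feasible set is an explicit (sum of two) Euclidean ball(s): cover $\bw_\parallel$ in its ball by an $\epsilon_1$-net and $\bw_\perp$ in its ball by an $\epsilon_2$-net, each of size $(3 r/\epsilon)^{d}$ in the relevant effective dimension $d \leq n$, control the induced $D$-error by $\epsilon_1 B_\parallel + \epsilon_2 B_\perp$, optimize the split, and absorb the $d \leq n$ exponent into the stated $\sqrt{\log n}$ and the universal constants $c', c''$. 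The remaining steps — tracking how $B_\perp, B_\parallel, \|\Delta\|, \tau, A$ enter, and checking the $1$-Lipschitz reduction through $\sigma$ — are routine bookkeeping.
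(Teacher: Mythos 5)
The route you mention only in passing is in fact the paper's entire proof: apply the Gaussian--Rademacher comparison $\mathfrak{G}(\cF_{L_2,\mmd}) \leq c'\sqrt{\log n}\,\mathfrak{R}(\cF_{L_2,\mmd})$ (Ledoux--Talagrand), plug in Proposition~\ref{prop:rad_fc_fm}, and then invoke Sudakov's minoration for the canonical Gaussian process indexed by the class to get $\log\cN(\cF_{L_2,\mmd},D,\varepsilon) \leq c''\bigl(\sqrt{n}\,\mathfrak{G}(\cF_{L_2,\mmd})/\varepsilon\bigr)^2$. Note that this relation does not require the class to ``live in a low-dimensional subspace'': the process $X_f = \tfrac{1}{n}\sum_i g_i f(\bx_i)$ has canonical metric $D(f,f')/\sqrt{n}$, so Sudakov applies verbatim, and the $\sqrt{\log n}$ in the statement is exactly the cost of passing from Rademacher to Gaussian complexity. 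Had you committed to this branch, your argument would coincide with the paper's.

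The branch you say you would actually lean on --- a volumetric $\epsilon$-net on the two weight balls $\{\|\bw_\parallel\|\leq A\}$ and $\{\|\bw_\perp\|\leq \tau/\|\Delta\|\}$ --- does not deliver the stated bound, and the step where you ``absorb the $d\leq n$ exponent into the stated $\sqrt{\log n}$ and the universal constants'' is where it breaks. A volumetric cover of a $d$-dimensional ball of radius $r$ at scale $\epsilon$ has log-cardinality of order $d\log(3r/\epsilon)$, so your route yields $\log\cN \lesssim d\,\log\bigl((A B_\parallel + \tau B_\perp/\|\Delta\|)/\varepsilon\bigr)$ with $d$ the effective dimension (which can be as large as $n$, or the ambient pixel dimension). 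That quantity is linear in $d$ and only logarithmic in $1/\varepsilon$, whereas the lemma claims something logarithmic in $n$ and quadratic in $1/\varepsilon$; a multiplicative factor of $n$ is not a constant and cannot be traded for a $\sqrt{\log n}$, so for moderate $\varepsilon$ and large $n$ (or large ambient dimension, as in the image experiments) the volumetric bound is strictly weaker than the claimed one and the proof fails. The correct move is precisely the Sudakov step: it is the one tool that converts a complexity (Gaussian width) bound into a covering-number bound with the $1/\varepsilon^2$ shape, which is why the paper routes through it. Your reduction through the $1$-Lipschitz sigmoid and the decomposition of $\bw$ via Proposition~\ref{prop:wdelta_bound} are fine, but they are already subsumed in Proposition~\ref{prop:rad_fc_fm}, which is the only class-specific input the paper's proof needs.
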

\begin{proof}
    We construct our argument relying on Sudakov's minoration, and the bound between Gaussian and Rademacher complexities. Specifically, by \cite{ledoux1996isoperimetry}, for some $c' >0$: 
    \begin{align*}
        \mathfrak{G}_m(\cF_{L_2, \mmd}) & \leq c' \sqrt{\log(n)} \cdot \mathfrak{R}(\cF_{L_2, \mmd}) \\
        & \leq c' \sqrt{\log(n)} \cdot \frac{A \cdot B_\parallel + \tau \frac{B_\perp}{\|\Delta\|}}{\sqrt{n}}, 
    \end{align*}
    where the last inequality follows from plugging in the results from proposition~\ref{prop:rad_fc_fm}. By Sudakov's minoration (see \cite{ledoux1996isoperimetry} theorem 3.18), for some universal constant $c'' > 0$, 
    \begin{align*}
        & \log(\cN(\cF_{L_2, \mmd}, D, \varepsilon))  \leq c'' \bigg(\frac{\sqrt{n} \cdot \mathfrak{G}_m(\cF_{L_2, \mmd})}{\varepsilon} \bigg)^2 \\
        & \leq c'' \left(\frac{c' \sqrt{\log(n)} \cdot \left(A \cdot B_\parallel + \tau \frac{B_\perp}{\|\Delta\|}\right)}{\varepsilon} \right)^2 \\
    \end{align*}
\end{proof}

Finally, to use the~\citet{cortes2010learning} results, we need bounded divergence between the source and the target distribution. 
Recall that we are trying to bound the finite-sample gap, so the source distribution is $P_s$, and the target distribution is $P^\circ$. We can bound this divergence because of the overlap assumption stated in section~\ref{sec:prelims}. As a consequence of the overlap assumption, we have that:
\begin{align}\label{eqn:cortes_mu}
    \sup u(y,v) = \sup\frac{P^\circ(Y \mid V)}{P_s(Y \mid V)} = 2^{\Xi_\infty(P^\circ|| P_s)} = C_{Ps}, 
\end{align}
where $\Xi_k(p || q)$ is the k$^{\text{th}}$-order R\'enyi divergence, and the second equality follows by applying the Bayes rule, and the definition of the R\'enyi divergence. It will be convenient to denote $2^{\Xi_k(p||q)}$ by $\Lambda_k(p||q)$. 
Since $2^{\Xi{k-1}(P^\circ|| P_s)} < 2^{\Xi_k(P^\circ|| P_s)}$, we have $\Lambda_2(P^\circ|| P_s) < C_{Ps}$.
Following similar work (e.g., \cite{makar2020estimation}), we will assume that the weights $\bu$ are known, or can be perfectly estimated from the data. In other words, we do not consider estimation error that might arise because of poor estimation of $\bu$. Work by \cite{foster2019orthogonal} has shown that under mild assumptions, the error due to estimation of $\bu$ from finite samples only results in a fourth order dependence in the final classifier, and hence does not greatly affect our derived generalization bounds.

With that, we are ready to state the finite-gap bound when $P_s \not= P^\circ$, and reweighting is used. 
\begin{thmappprop}\label{prop:reweighted_gen}
For $\cD \sim P$, with $P \in \cP$, and $\mathbf{u}$ as defined in equation \ref{eq:weights}, $C_P$ as defined in equation \ref{eqn:cortes_mu},  
\begin{align*}
    R_{P^\circ}(f) -
    \hat{R}^\bu_{P}(f) \leq 
    \frac{2C_{P}(\kappa(\cF_{\mmd, L_2}) + \log \frac{1}{\delta})}{2n} + \sqrt{\frac{\Lambda(P^\circ || P)\cdot (\kappa(\cF) + \log \frac{1}{\delta})}{n}}, 
\end{align*}
where
\begin{align*}
    \kappa(\cF_{\mmd, L_2}) = c'' \left(\frac{c' \sqrt{\log(n)} \cdot \left(A \cdot B_\parallel + \tau \frac{B_\perp}{\|\Delta\|}\right)}{\varepsilon} \right)^2
\end{align*}
\end{thmappprop}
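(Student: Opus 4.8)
\textbf{Proof plan for Proposition~\ref{prop:reweighted_gen}.}
The plan is to invoke the generalization bound for importance-weighted empirical risk minimization from \citet{cortes2010learning} essentially as a black box, and to supply its two inputs: (i) a bound on the second-order R\'enyi divergence $\Lambda_2(P^\circ \| P)$, and (ii) a bound on the metric entropy $\log \cN(\cF_{\mmd, L_2}, D, \varepsilon)$ of the hypothesis class. Both of these are already in hand: the overlap assumption on $P_s$ (and hence on every $P\in\cP$, since the conditional $P(\bX\mid Y,V)$ is shared across $\cP$) gives $\sup_{y,v} u(y,v) = C_P = \Lambda_\infty(P^\circ\|P) < \infty$ via the R\'enyi-divergence identity in equation~\eqref{eqn:cortes_mu}, and the monotonicity of R\'enyi divergences in their order yields $\Lambda_2(P^\circ\|P) \le C_P$; meanwhile Lemma~\ref{LEM:METRIC_ENTROPY_FC} already establishes exactly the claimed form $\kappa(\cF_{\mmd,L_2}) = c''\big(c'\sqrt{\log n}\,(A B_\parallel + \tau B_\perp/\|\Delta\|)/\varepsilon\big)^2$ as an upper bound on $\log \cN(\cF_{\mmd,L_2}, D, \varepsilon)$.

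Concretely, the steps I would carry out are: first, record that for any $P\in\cP$ the weights $u(y,v)$ are bounded by $C_P$ and that $\E_P[\hat R^\bu_P(f)] = R_{P^\circ}(f)$ (the unbiasedness computation from the proofs for Section~\ref{sec:summary_approach}); second, instantiate the Cortes--Mansour--Mohri bound for reweighted ERM with the target distribution $P^\circ$, the source distribution $P$, the weight function $\bu$, the confidence level $\delta$, and the discretization scale $\varepsilon$ — this produces, with probability $1-\delta$, a bound of the form $R_{P^\circ}(f) - \hat R^\bu_P(f) \le \tfrac{2C_P(\log\cN + \log\frac1\delta)}{2n} + \sqrt{\tfrac{\Lambda_2(P^\circ\|P)(\log\cN + \log\frac1\delta)}{n}}$; third, substitute $\log\cN \le \kappa(\cF_{\mmd,L_2})$ from Lemma~\ref{LEM:METRIC_ENTROPY_FC} into the first term and $\log\cN(\cF) \le \kappa(\cF)$ into the second, and use $\Lambda_2(P^\circ\|P)\le C_P$ if a cleaner constant is desired (though the statement as written keeps $\Lambda$ explicit, so I would leave it). Assembling these gives precisely the displayed inequality.

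The main obstacle is bookkeeping rather than mathematics: one must check that the hypotheses of the \citet{cortes2010learning} theorem are actually met here — in particular that the loss is bounded (the logistic loss composed with $\sigma$ on a bounded-norm linear class is bounded on the support of $\bX$, which follows from $\|\bw\|_2\le A$ and the boundedness of $\bx_\perp, \bx_\parallel$), that the weights have finite second moment under $P$ (immediate from $u \le C_P$), and that the covering number is measured in the same $L_2(P_n)$ metric $D$ used in Definition~\ref{def:discrete_f} as in their theorem. A secondary subtlety is that the $\mmd$ constraint defining $\cF_{\mmd,L_2}$ is stated in terms of the population distributions $P^\circ_{\boldsymbol\phi_v}$, so when we observe data from $P\neq P^\circ$ the constraint is enforced via the reweighted empirical $\mmd$; I would note that under the assumption (already invoked in Section~\ref{sec:full_ge_statements}) that $\bu$ is known, the reweighted empirical $\mmd$ is an unbiased estimator of $\mmd(P^\circ_{\boldsymbol\phi_0}, P^\circ_{\boldsymbol\phi_1})$, so the effective hypothesis class is still (a close approximation of) $\cF_{\mmd,L_2}$ and Lemma~\ref{LEM:METRIC_ENTROPY_FC} applies. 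Everything else is direct substitution.
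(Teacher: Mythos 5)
Your proposal follows essentially the same route as the paper: the paper's proof simply plugs the metric entropy bound of Lemma~\ref{LEM:METRIC_ENTROPY_FC} into Theorem 2 of \citet{cortes2010learning}, with the weight/divergence bound $C_P$ and $\Lambda_2(P^\circ\|P)\le C_P$ supplied by the overlap assumption exactly as you describe. Your extra bookkeeping (bounded loss, bounded weights, matching covering metric) is consistent with, and slightly more careful than, what the paper records.
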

\begin{proof}
    Using the bound on the metric entropy derived in lemma~\ref{LEM:METRIC_ENTROPY_FC}, the proof becomes a direct application of Theorem 2 in \cite{cortes2010learning}.
\end{proof}



\section{EXPERIMENTS}\label{sec:experiments_extra}

\subsection{Waterbirds: example images}
Figure~\ref{fig:bird_examples} shows examples of the generated images. 

\newcommand{\subf}[2]{%
  {\small\begin{tabular}[t]{@{}c@{}}
  #1\\#2
  \end{tabular}}%
}
	
\begin{figure}[H]
\centering
\begin{tabular}{|c|c|}
\hline
\subf{\includegraphics[width=60mm]{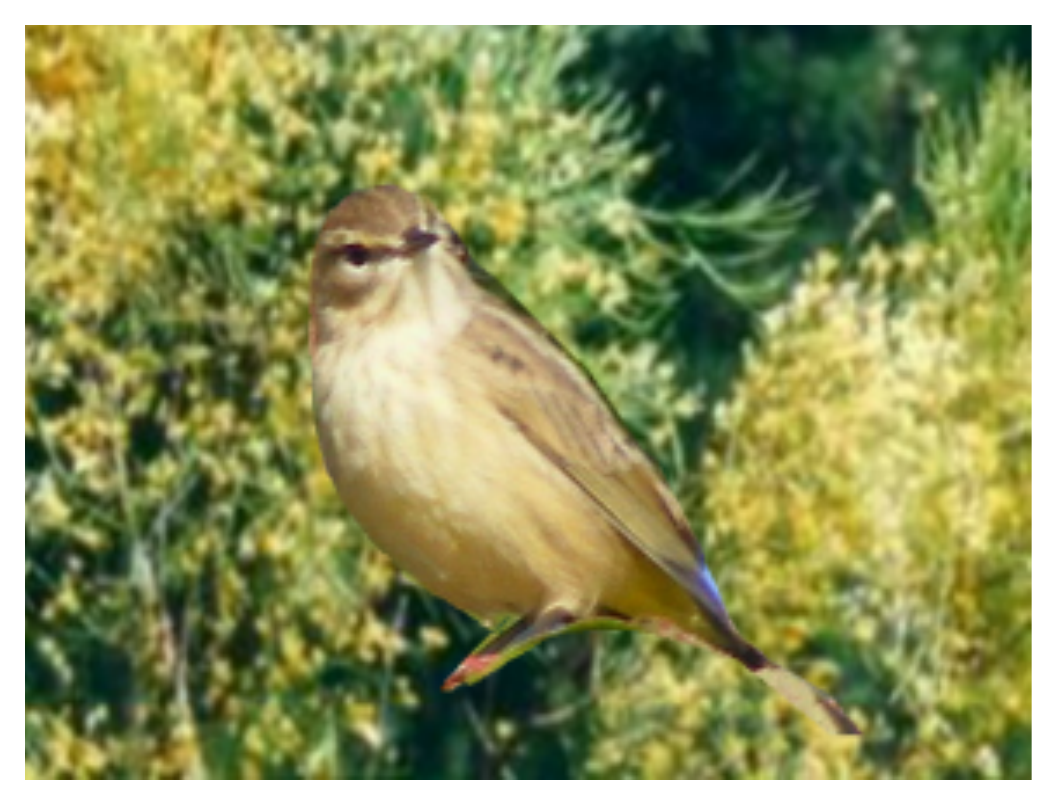}}
     {Land bird on land background \\ $y=0, v=0$}
&
\subf{\includegraphics[width=60mm]{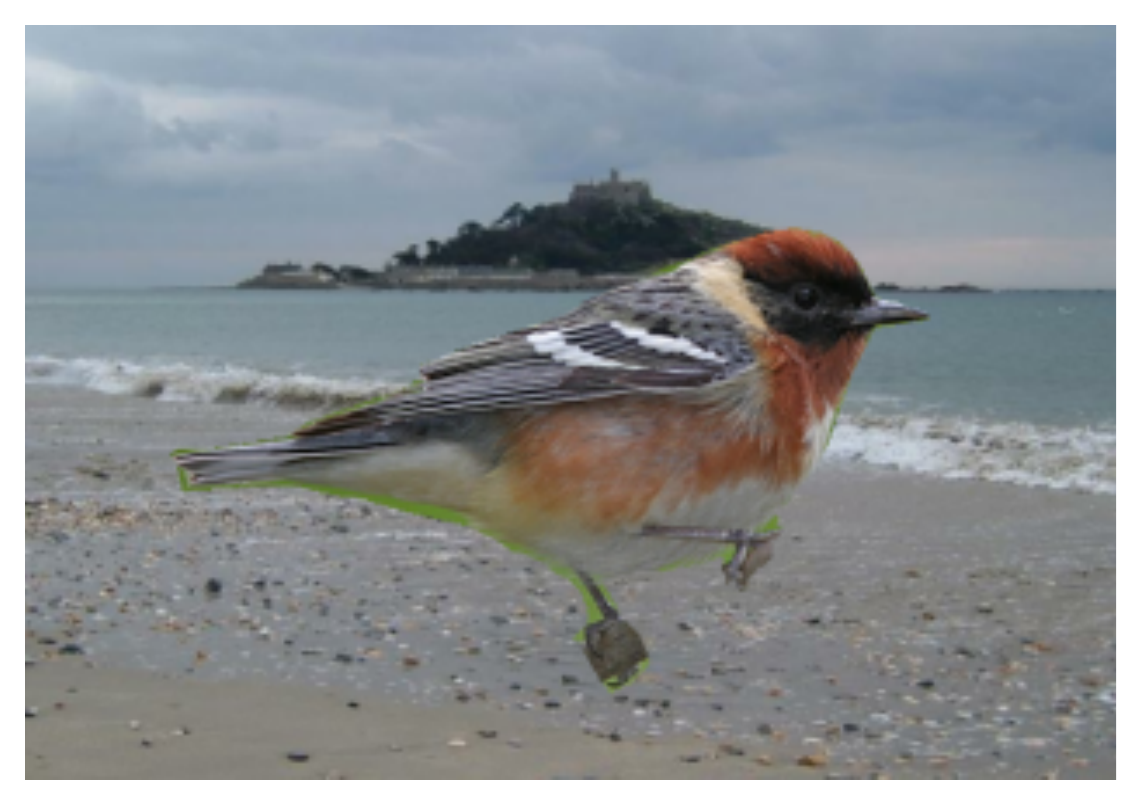}}
     {Land bird on water background \\ $y=0, v=1$}
\\
\hline
\subf{\includegraphics[width=60mm]{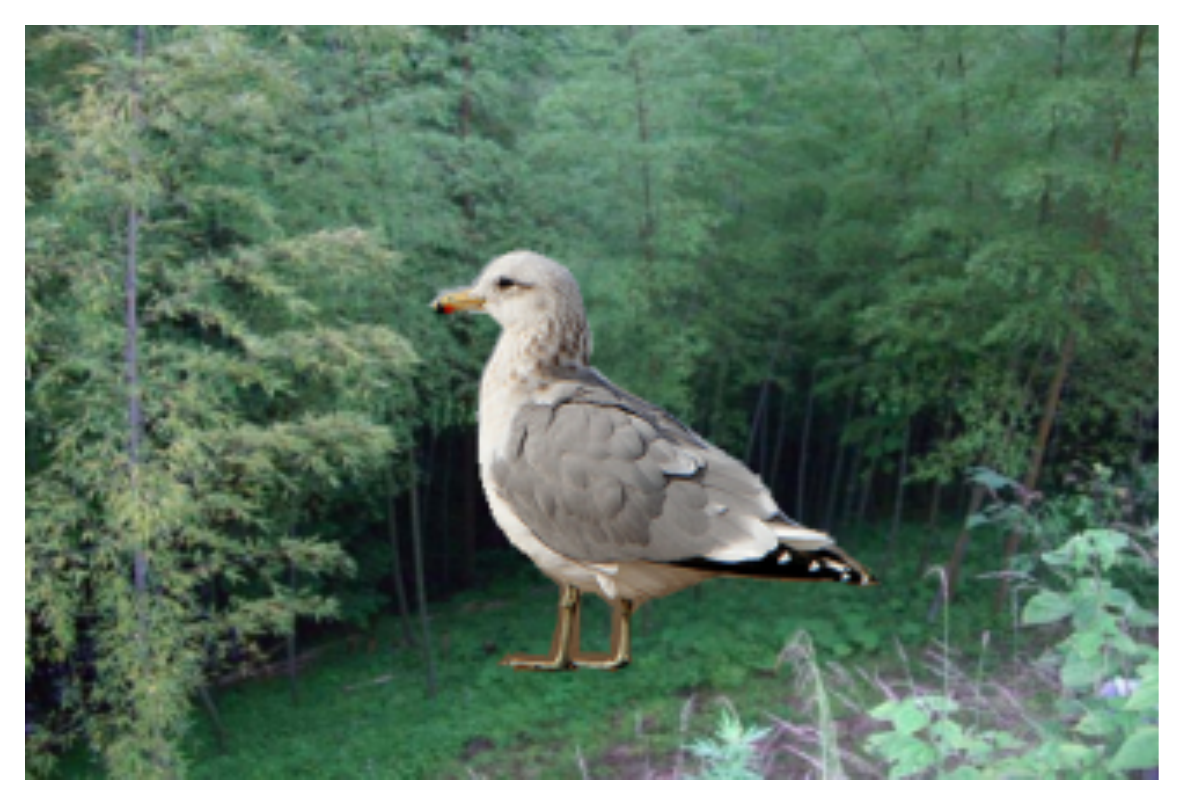}}
     {Water bird on land background \\ $y=1, v=0$}
&
\subf{\includegraphics[width=60mm]{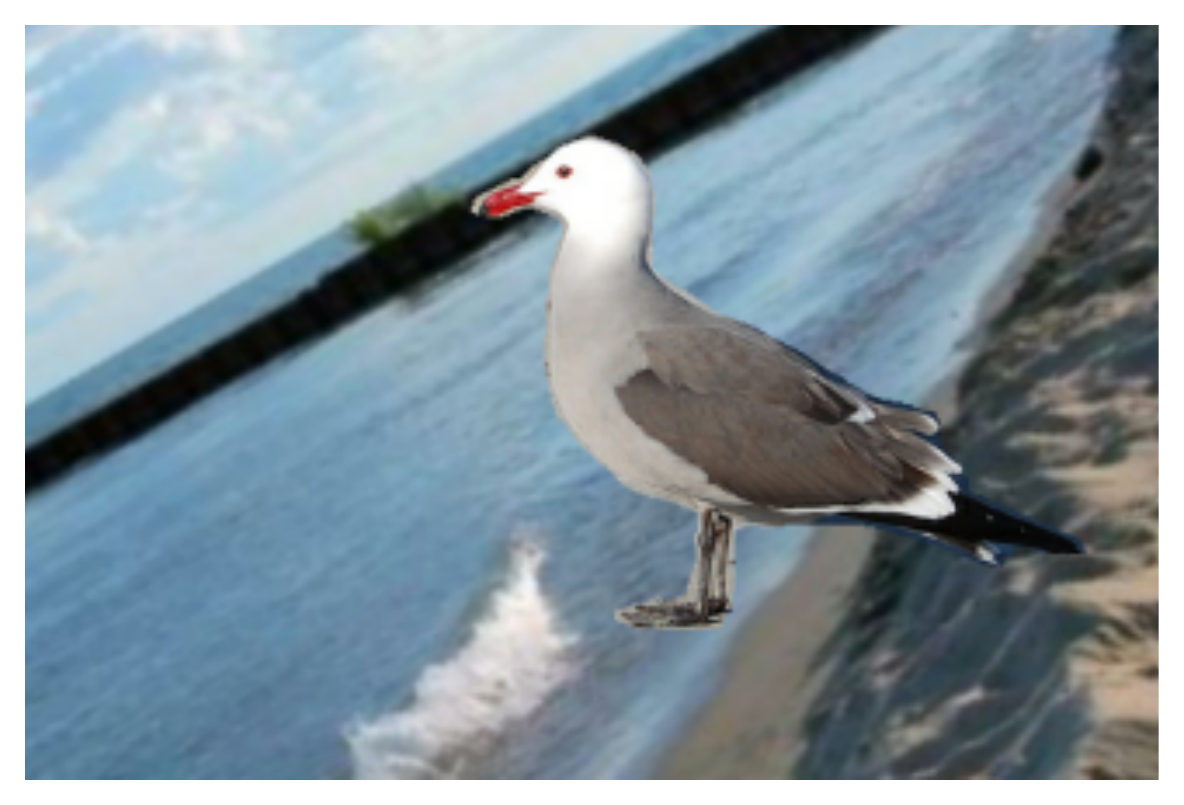}}
     {Water bird on water background \\ $y=1, v=1$}
\\
\hline
\end{tabular}
\caption{Examples of the generated images of water, and land birds on water, and land backgrounds \label{fig:bird_examples}}
\end{figure}

\subsection{Hyperparameter setting}

We train all models using Adam \cite{kingma2015adam}, with learning rate $=0.001$, $\beta_1=0.9$, $\beta_2=0.999$. 

We train all models for 200 epochs in the waterbirds setting, and for 10 epochs in the CheXpert setting. 
For $L_2$ regularized models, we cross-validate the $L_2$ penalty parameter from the following values $[0.0$ (no regularization)$, 0.001, 0.0001]$, which is similar to values typically used for this setting \citep{sagawa2019distributionally,he2016deep}. For the $\mmd$ regularized models, we pick the optimal $\mmd$ penalty parameter, $\alpha$ from $[1e^3, 1e^5, 1e^7]$. We also pick the best RBF kernel bandwidth from $[1e1, 1e2, 1e3]$. 

For our two-step cross-validation, we determine that a model has an $\mmd$ that is statistically insignificantly different from 0 by using a standard T-test. Specifically, we compare the mean $\mmd$ across 5 folds of validation data to 0. If the P-value of that T-test is greater than 0.05, we say that the model has an $\mmd$ that is statistically insignificantly different from 0. 

\subsection{Waterbirds: Noisy background}
We found that the original background images frequently contain landscapes that are difficult to distinguish (e.g., water backgrounds with very small water bodies that mostly reflect the surrounding trees). To address that, we pick 200 and 300 ``clean'' images for each of the land and water backgrounds respectively. Using those clean images, we generate 10,000 land backgrounds, and 9,000 water backgrounds by applying random transformations (rotation, zoom, darkening/brightening) to the selected images. Figure~\ref{fig:background_examples} shows examples of the excluded backgrounds. To protect the privacy of the individuals in the pictures, their faces have been blurred.

\begin{figure}[H]
\centering
\begin{tabular}{|c|c|}
\hline
\subf{\includegraphics[width=60mm]{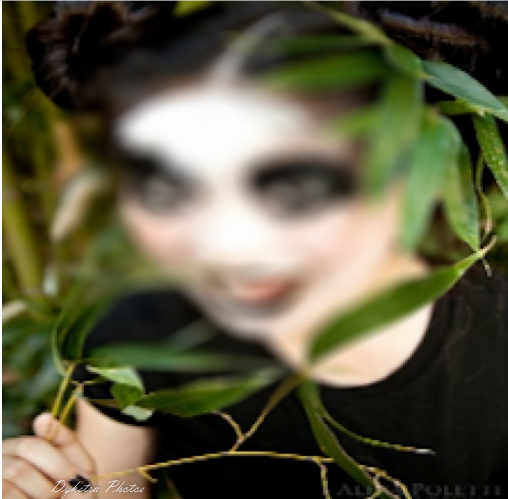}}
     {Excluded land background}
&
\subf{\includegraphics[width=60mm]{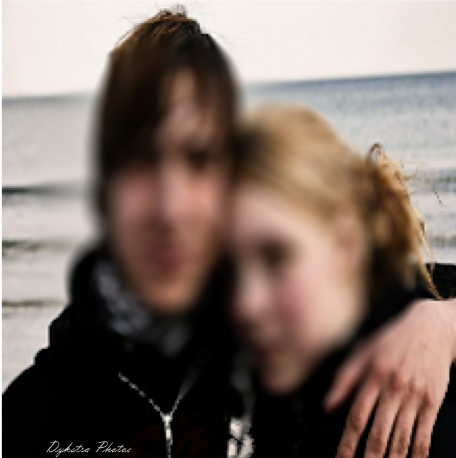}}
     {Excluded water background}
\\
\hline
\end{tabular}
\caption{Examples of excluded noisy backgrounds. Pictures extracted from the Places dataset \cite{zhou2017places}. Blurring is added to preserve privacy. \label{fig:background_examples}}
\end{figure}

Here we present the results using the full (noisy) background images. Results from the main analysis largely hold, with two distinctions. First, in the ideal distribution, because the backgrounds are noisy, we see an overall higher variance in performance, so the models perform equally well with no clear ``winner''. Second, we note that the cMMD-T is particularly sensitive to the level of noise in the backgrounds, while other models are overall more robust.

\begin{figure}[H]
\centering
  \includegraphics[width=0.8\textwidth]{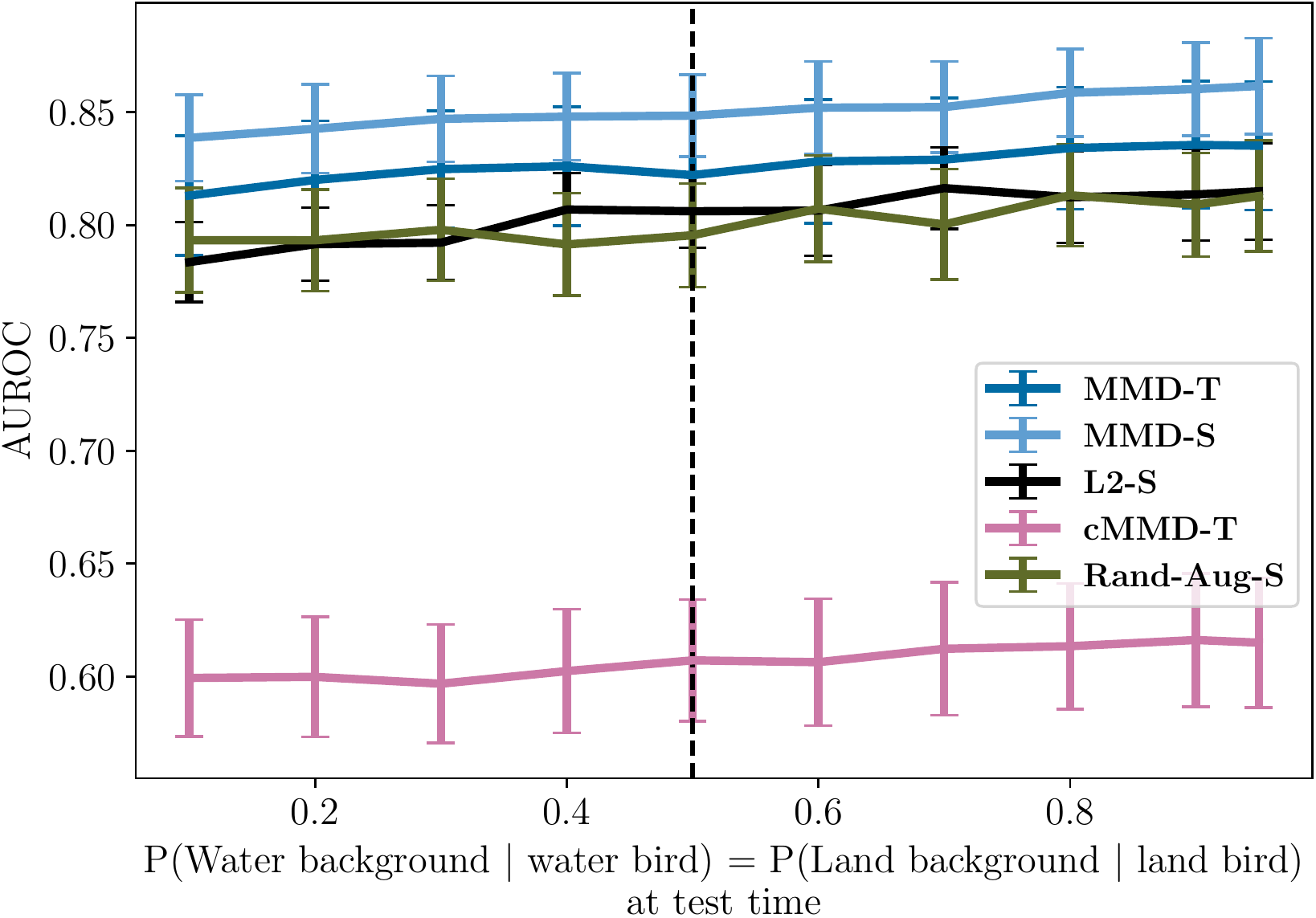}
  \caption{Training data sampled from $P^\circ$, with $P^\circ(Y|V=1) = P^\circ(Y|V=0) = 0.5$ and backgrounds are sampled from a noisy set of images. $x$-axis shows $P(Y|V)$ at test time under different shifted distributions. $y$-axis shows AUROC on test data. Vertical dashed line shows training data. MMD-regularized models outperform baselines within, and outside the training distribution.\label{fig:main_a_app}}
\end{figure}

\begin{figure}[H]
\centering
\includegraphics[width=0.8\textwidth]{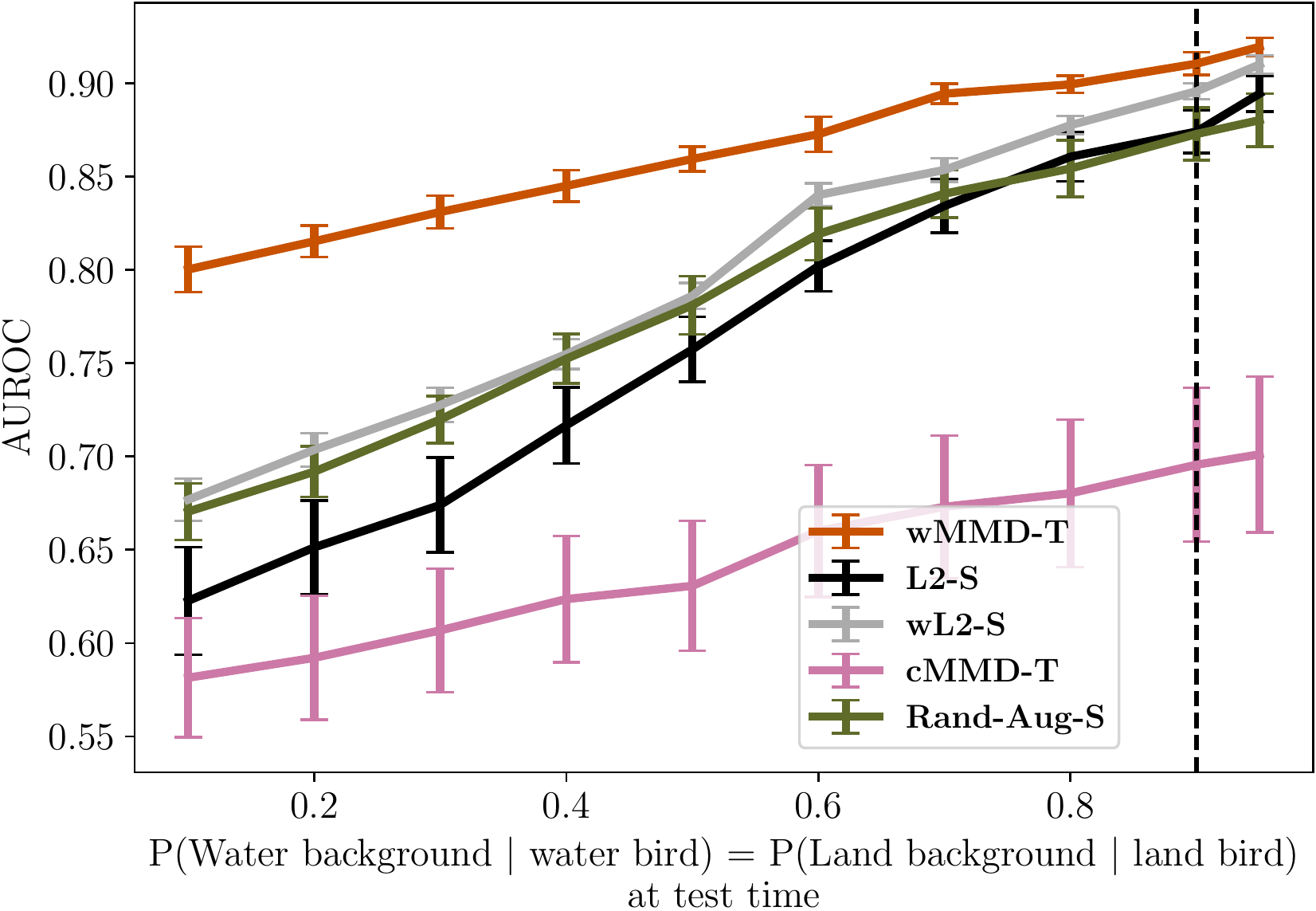}
\caption{Training data sampled from $P$, with $P(Y=1|V=1) = P^\circ(Y=0|V=0) = 0.9$, and backgrounds are sampled from a noisy set of images. Vertical dashed line shows training data. $x$, $y$ axes similar to figure~\ref{fig:main_a_app}. MMD-regularized models outperform baselines showing better robustness against distribution shifts at test time.\label{fig:main_b_app}}
\end{figure}

\begin{figure}[H]
\centering
\includegraphics[width=0.8\textwidth]{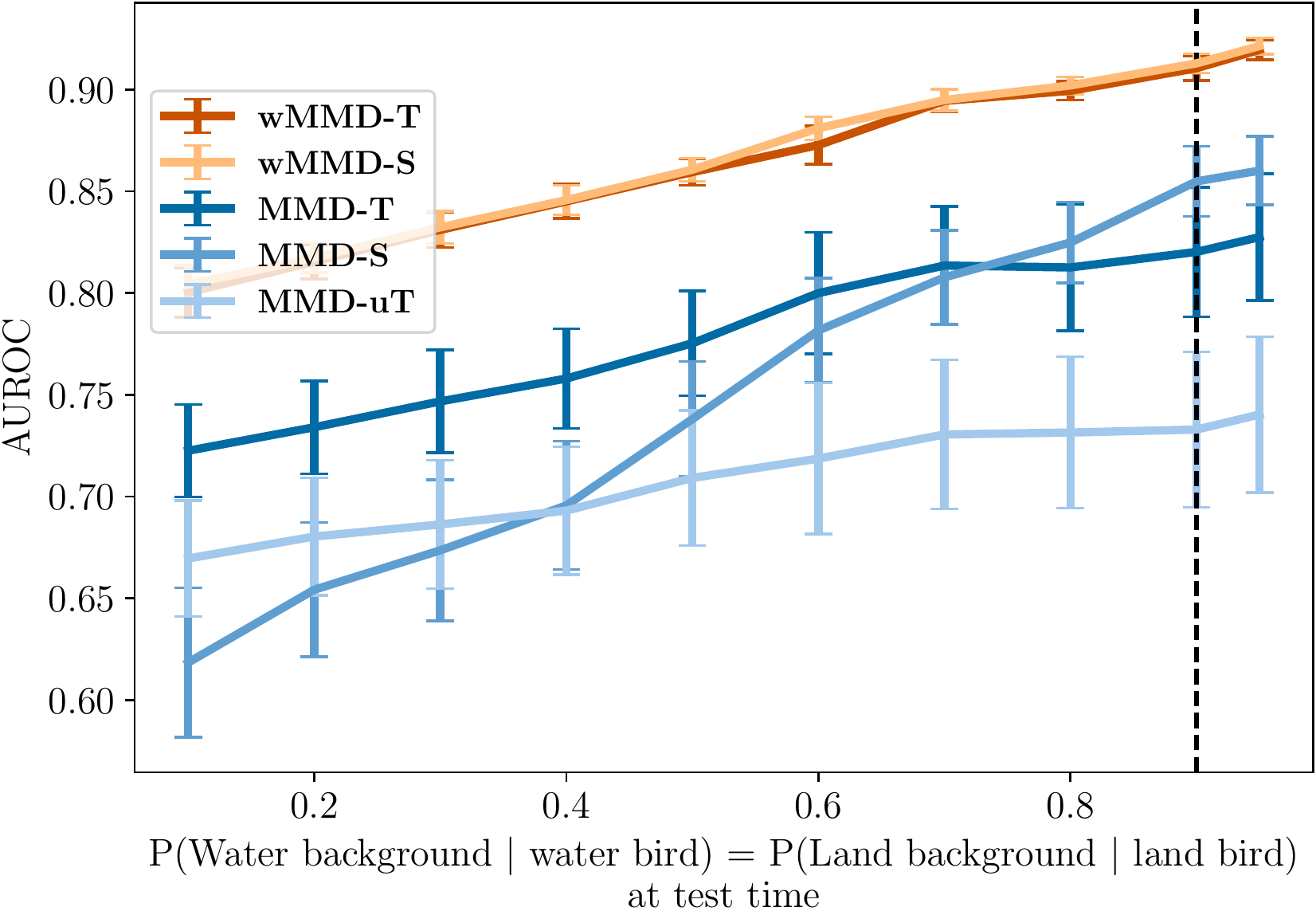}
\caption{Training data sampled from $P$, with $P(Y=1|V=1) = P^\circ(Y=0|V=0) = 0.9$. $x$, and backgrounds are sampled from a noisy set of images. $y$ axes similar to fig~\ref{fig:main_a_app}.
An ablation study to show how different components of our suggested approach (wMMD-reg-T) contribute to improved performance. \label{fig:ablation_app}}
\end{figure}


\subsection{Waterbirds: Other performance metrics}
We re-examine the results from figure~\ref{fig:main} (middle) considering performance criteria other than the AUROC. Specifically, in figure~\ref{fig:logloss_appendix}, we consider the logistic loss on the $y$-axis whereas in figure~\ref{fig:brierscore_appendix} we consider the Brier score on the $y$-axis. In both cases, lower is better. The results from both plots conform with our findings from figure~\ref{fig:main} (middle). 

\begin{figure}[H]
\centering
\includegraphics[width=\textwidth]{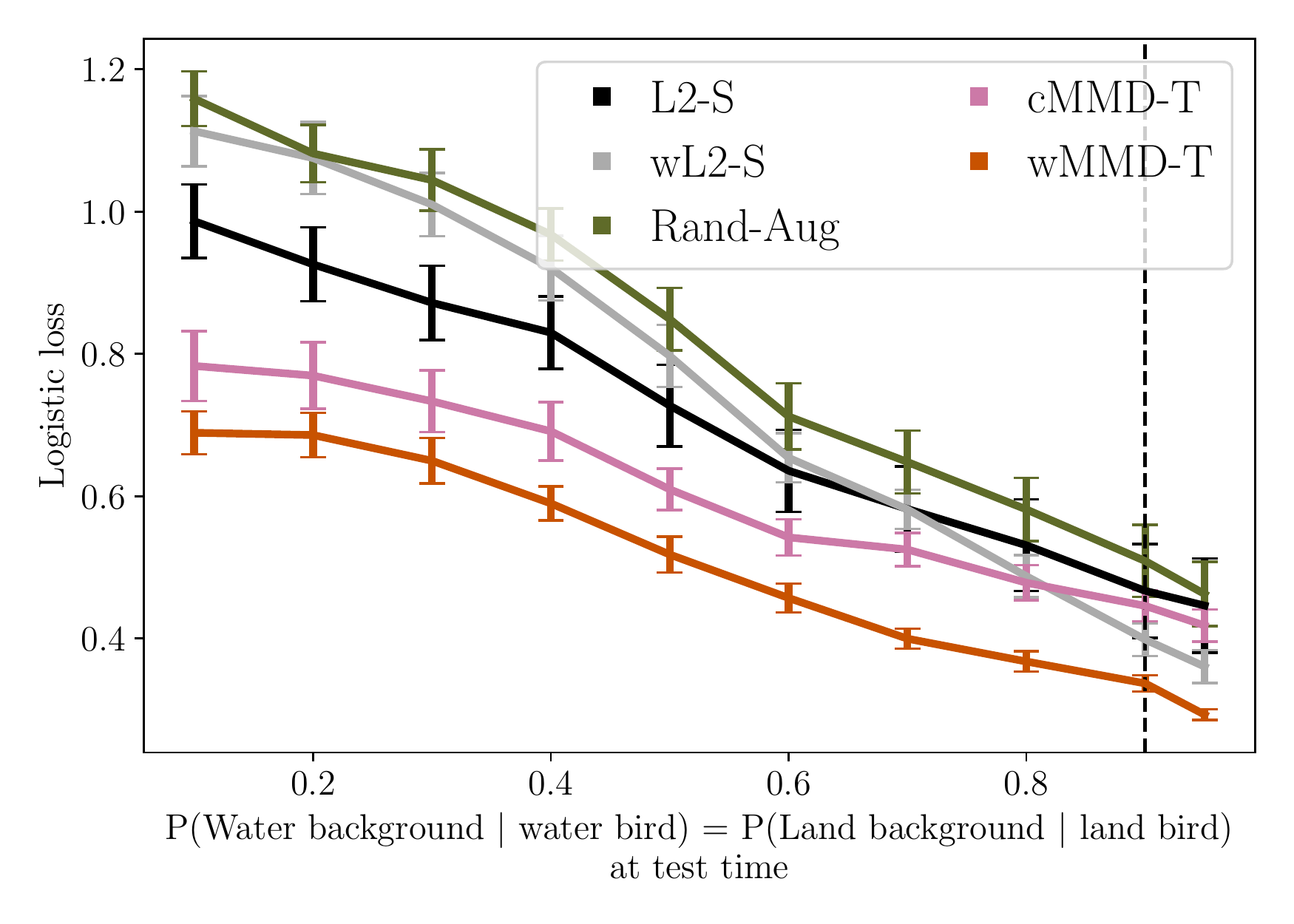}
\caption{
$x$-axis shows $P(Y|V)$ at test time under different shifted distributions, $y$-axis shows logistic loss on test data, and vertical dashed line shows $P(Y|V)$ at training time.
\label{fig:logloss_appendix}}
\end{figure}

\begin{figure}[H]
\centering
\includegraphics[width=\textwidth]{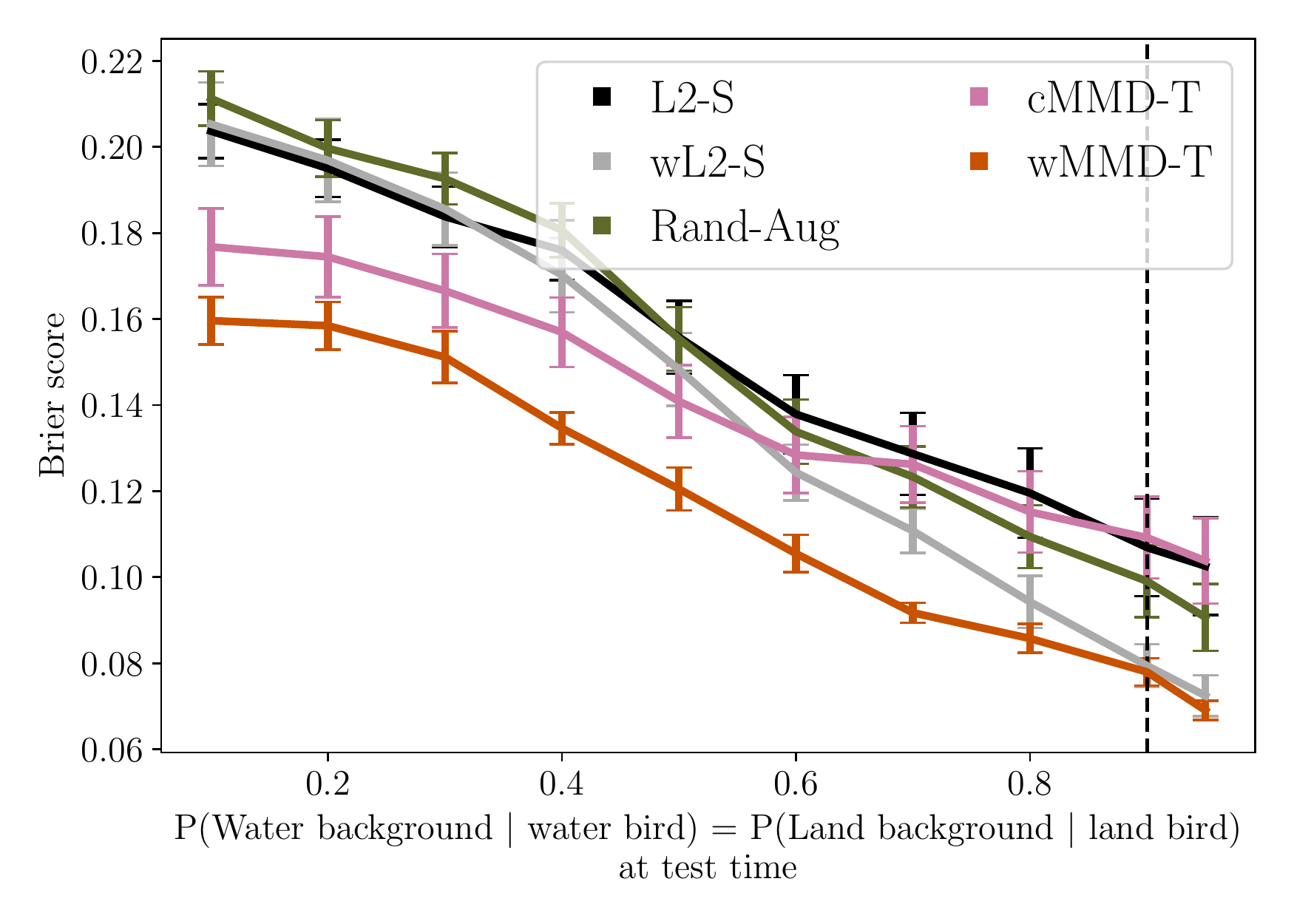}
\caption{
$x$-axis shows $P(Y|V)$ at test time under different shifted distributions, $y$-axis shows the Brier score on test data, and vertical dashed line shows $P(Y|V)$ at training time.
\label{fig:brierscore_appendix}}
\end{figure}

\subsection{Waterbirds: Rex results}
In this section we show the waterbirds results including Rex, which was excluded from the main paper because it significantly underperforms compared to other models. The poor performance of Rex is likely due to issues. First, it does not incorporate the weighting scheme and hence the invariance penalty is inconsistent with the causal DAG. Second, computing the objective function of Rex requires slicing the data into four different groups. Hence, it inherits the same unstable training dynamics that cMMD-T has (see section~\ref{sec:unstable_cmmd}). 

\begin{figure}[H]
\centering
\includegraphics[width=0.8\textwidth]{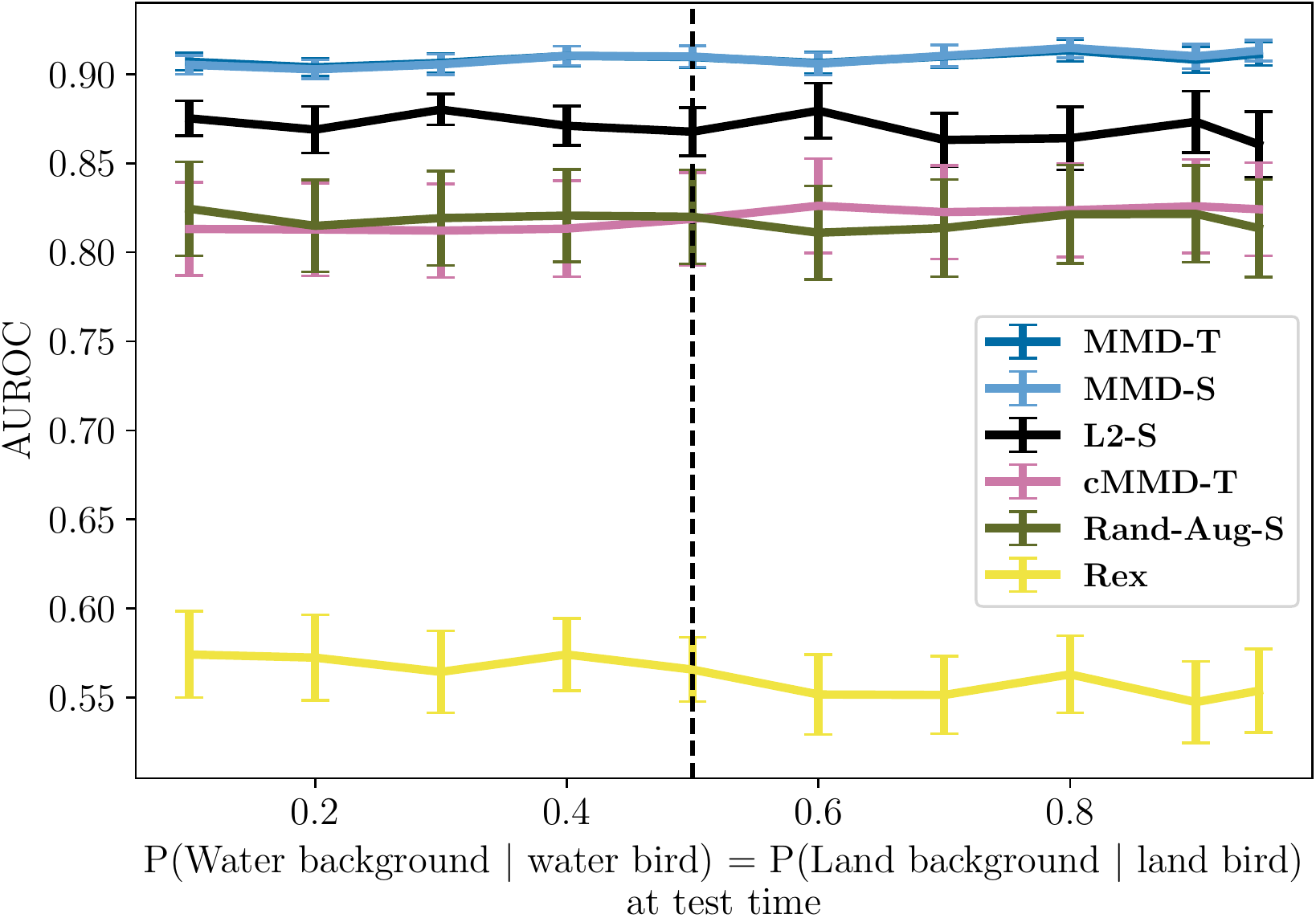}
\caption{Training data sampled from $P$, with $P(Y=1|V=1) = P^\circ(Y=0|V=0) = 0.9$. Setting similar to figure~\ref{fig:main} (left) now including results from Rex}
\end{figure}

\begin{figure}[H]
\centering
\includegraphics[width=0.8\textwidth]{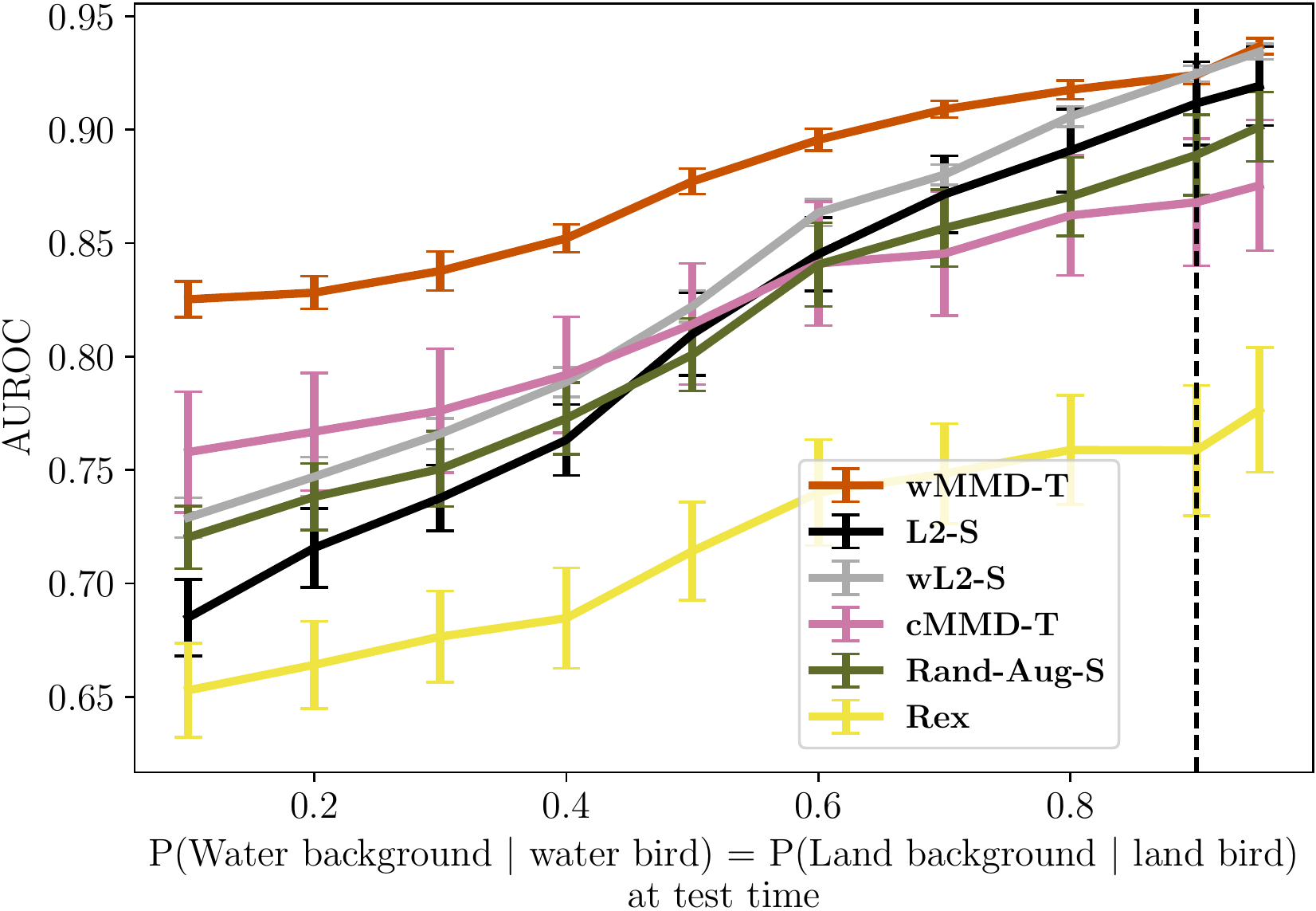}
\caption{Training data sampled from $P$, with $P(Y=1|V=1) = P^\circ(Y=0|V=0) = 0.9$. $x$. Setting similar to figure~\ref{fig:main} (middle, right) now including results from Rex}
\end{figure}

\begin{figure}[H]
\centering
\includegraphics[width=0.8\textwidth]{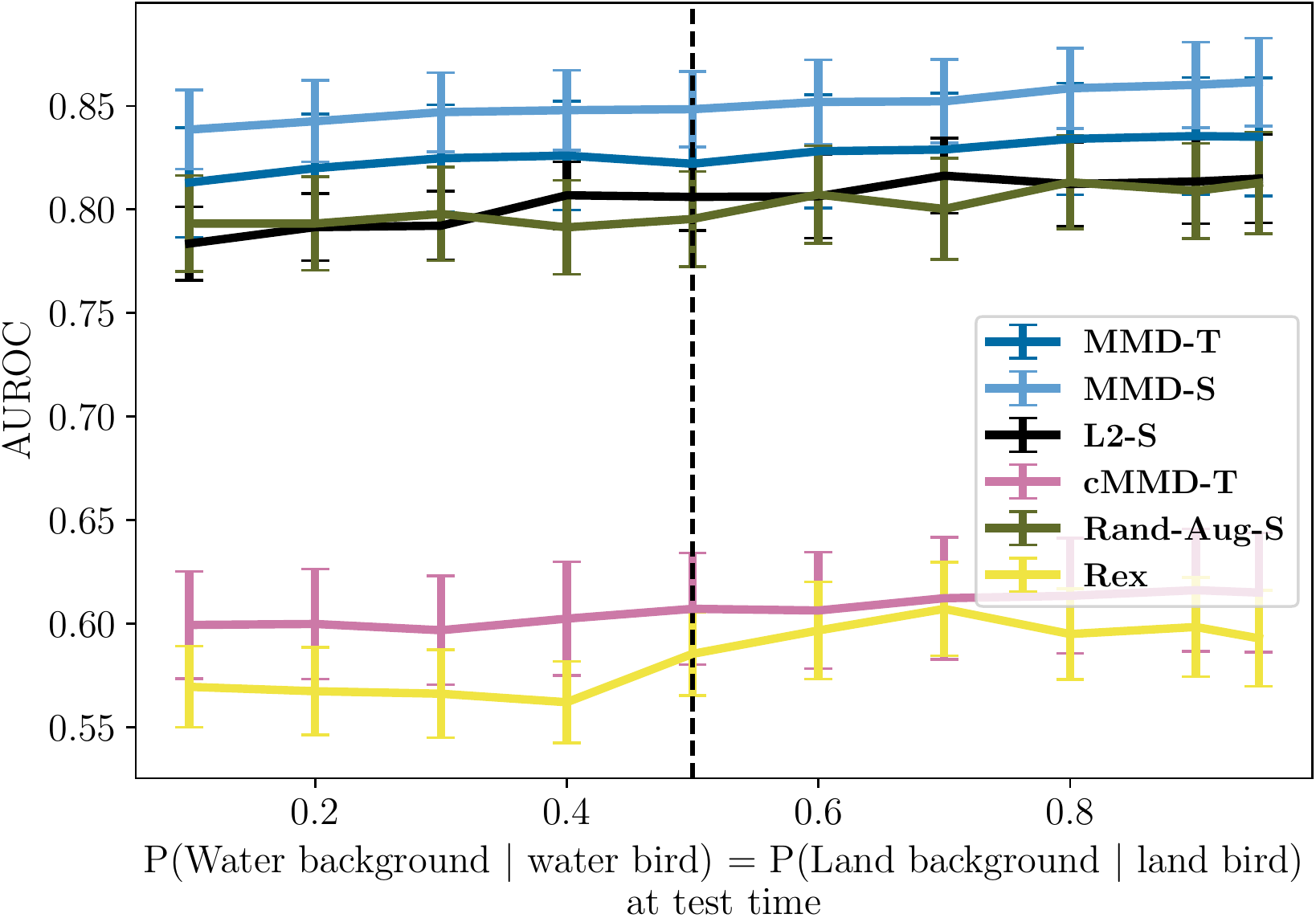}
\caption{Training data sampled from $P$, with $P(Y=1|V=1) = P^\circ(Y=0|V=0) = 0.9$, and backgrounds are sampled from a noisy set of images. Results now show the performance or Rex. Setting similar to~\ref{fig:main_a_app}, now showing the performance of Rex.}
\end{figure}

\begin{figure}[H]
\centering
\includegraphics[width=0.8\textwidth]{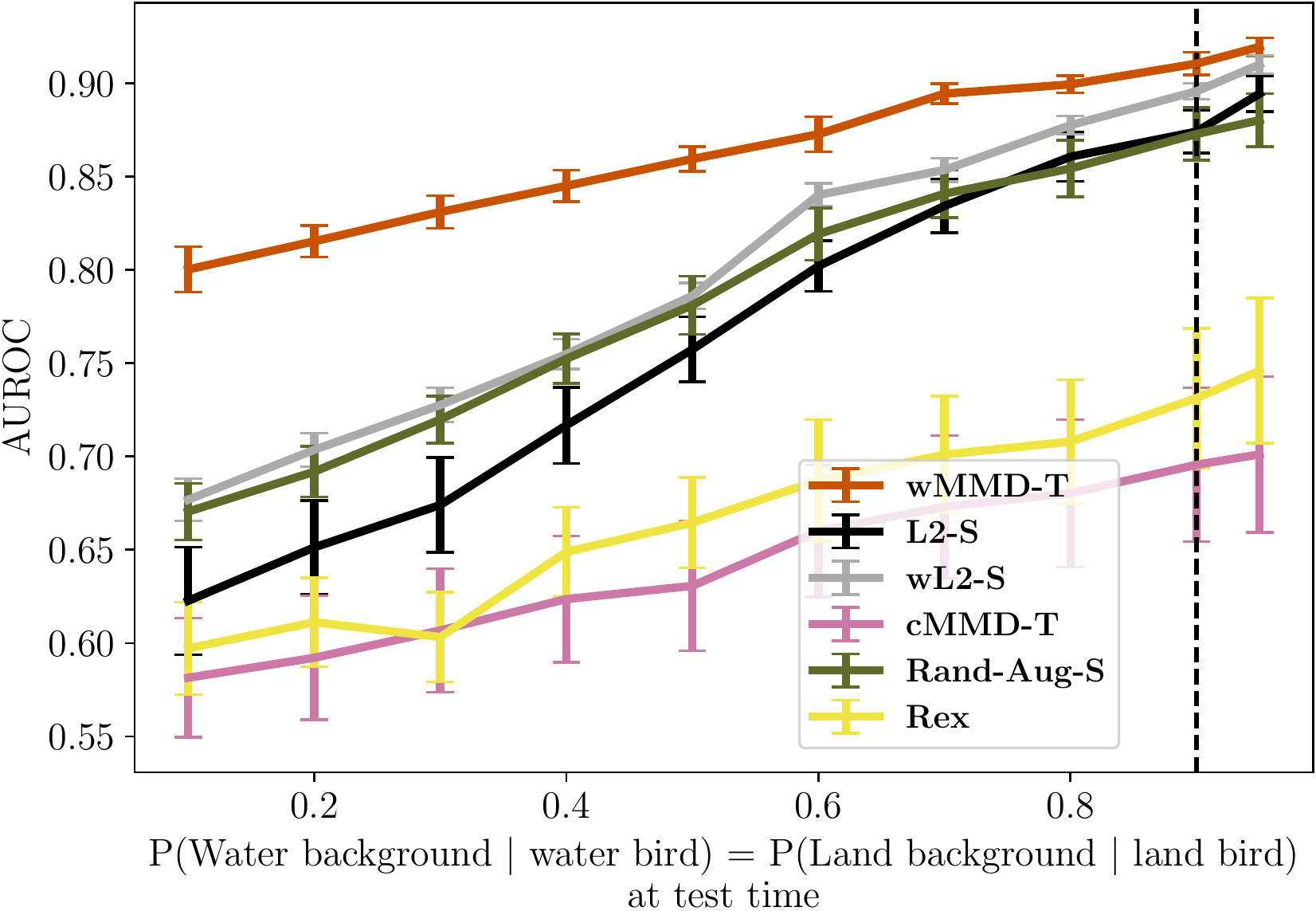}
\caption{Training data sampled from $P$, with $P(Y=1|V=1) = P^\circ(Y=0|V=0) = 0.9$. $x$, and backgrounds are sampled from a noisy set of images.  Results now show the performance or Rex. Setting similar to~\ref{fig:main_b_app}, now showing the performance of Rex.}
\end{figure}

\subsection{Training dynamics of cMMD-T}\label{sec:unstable_cmmd}
Note that the cMMD-T objective is
\begin{align}\label{eqn:eo_objective}
    h_{EO}, \phi_{EO}  = \underset{h, \phi}{\text{argmin}}& \frac{1}{n} \sum_i \ell(h(\phi(\bx_i)), y_i) +  \alpha \cdot \Big[
    \widehat{\mmd}^2 (P_{\boldsymbol{\phi}_{0,0}}, P_{\boldsymbol{\phi}_{1, 0}}) +  
     \widehat{\mmd}^2 (P_{\boldsymbol{\phi}_{0,1}}, P_{\boldsymbol{\phi}_{1, 1}})
    \Big], 
\end{align}
where $P_{\boldsymbol{\phi}_{v, y}} = P(\phi(\bX) | V = v, Y=y)$.
This differs from the objective of our main approach which does not compute the $\mmd$ penalty conditional on $Y$. This means that the cMMD-T objective requires ``slicing'' the data into smaller subgroups and computing this $\mmd$ on those smaller subgroups leading to less stable training especially in the context of DNNs, where minibatched SGD-based training is standard. For example, if the training is being done on a batch size of 16, with $P_s(Y=1) = 0.1 $, the second $\mmd$ term in equation~\eqref{eqn:eo_objective} will be computed over a sample size of roughly 2 making it a likely unreliable estimate. 

Figure~\ref{fig:waterbirds_batch_size_acc} compares the performance of our approach and cMMD-T as the batch size increases in terms of accuracy, whereas figure~\ref{fig:waterbirds_batch_size_auc} considers the AUROC. The accuracy plot shows that, for example, at a batch size of 16, our approach vastly outperforms cMMD-T. The AUROC plot shows that the cMMD-T has higher variance at every batch size.

\begin{figure}[H]
\centering
\includegraphics[width=0.8\textwidth]{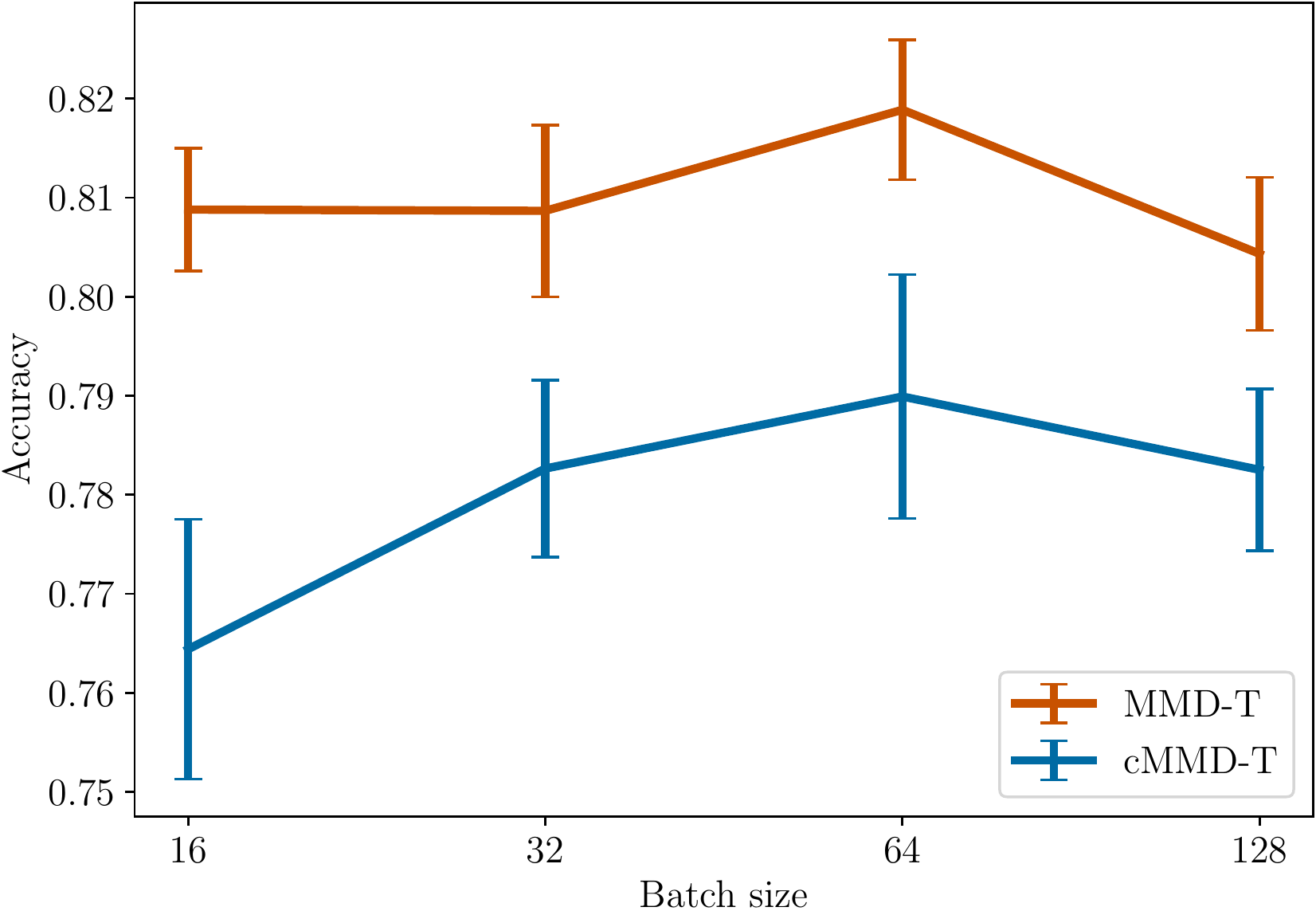}
\caption{Waterbirds data: $x$-axis shows the batch size, $y$-axis shows the accuracy. cMMD-T has poor performance in small batch sizes due to sample splitting.\label{fig:waterbirds_batch_size_acc} }
\end{figure}

\begin{figure}[H]
\centering
\includegraphics[width=0.8\textwidth]{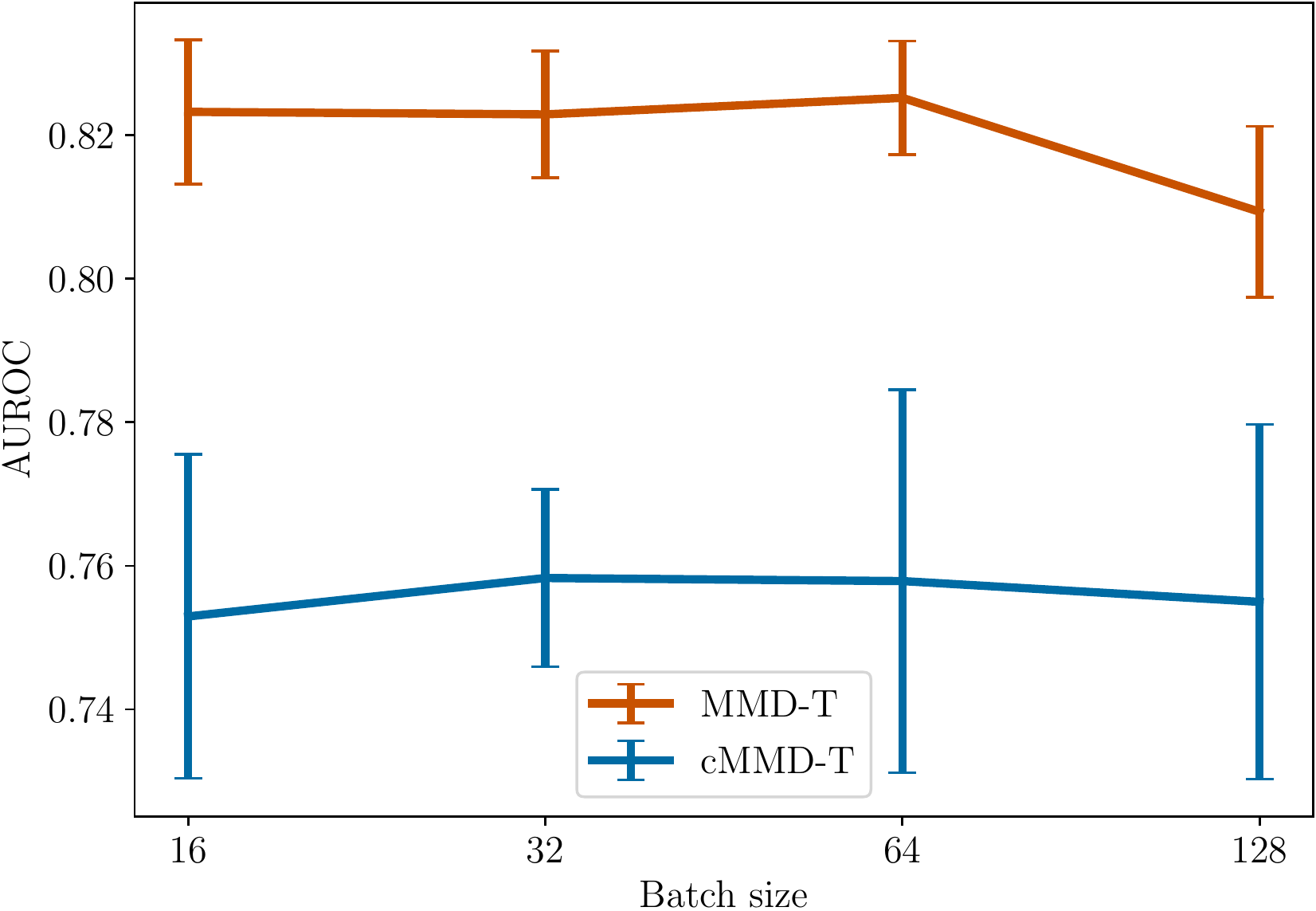}
\caption{Waterbirds data: $x$-axis shows the batch size, $y$-axis shows the AUROC. cMMD-T has an overall higher variance due to sample splitting. \label{fig:waterbirds_batch_size_auc}}
\end{figure}

\end{document}